\theoremstyle{plain}
\newtheorem{theorem}{Theorem}[section]
\newtheorem{lemma}{Lemma}
\newtheorem{proposition}[theorem]{Proposition}
\newtheorem{corollary}[theorem]{Corollary}
\newtheorem{conjecture}[theorem]{Conjecture}
\newtheorem{definition}{Definition}
\newtheorem{remark}[theorem]{Remark}
\newtheorem*{remark*}{Remark}
\renewenvironment{compactitem}
{\begin{itemize}}
{\end{itemize}}
\newcommand\gtrsom{\overset{\bm{.}}{\gtrsim}}
\newcommand\lesssom{\overset{\bm{.}}{\lesssim}}
\DeclareMathOperator{\polylog}{polylog}
\global\long\def\R{\mathcal{R}}
\global\long\def\A{\mathcal{A}}
\renewcommand{\hat}{\widehat}
\renewcommand{\tilde}{\widetilde}
\newcommand{\calG}{{\mathcal{G}}}
\begin{document}

\title{Statistical-Computational Tradeoffs in Planted Problems and Submatrix Localization with a Growing Number of Clusters and Submatrices%
\footnote{
This manuscript is accepted to the Journal of Machine Learning Research (JMLR) conditioned on minor revisions.
Partial results appeared at the International Conference on Machine Learning (ICML) 2014.}}

\date{}

\author{Yudong Chen\thanks{Y. Chen is
 with the Department of EECS, University of California, Berkeley. Email: \texttt{yudong.chen@eecs.berkeley.edu}.} \and Jiaming Xu\thanks{
 J. Xu is with the Department of ECE, University of Illinois at Urbana-Champaign. Email: \texttt{jxu18@illinois.edu}.}
}

\maketitle

\begin{abstract}%
We consider two closely related problems: planted clustering and submatrix localization.  The planted clustering problem assumes that a random graph is generated based on some underlying clusters of the nodes; the task is to recover these clusters given the graph. The submatrix localization problem concerns locating hidden submatrices with elevated means inside a large real-valued random matrix. Of particular interest is the setting where the number of clusters/submatrices is allowed to grow unbounded with the problem size. These formulations cover several classical models such as planted clique, planted densest subgraph, planted partition, planted coloring, and stochastic block model,
which are widely used for studying community detection and clustering/bi-clustering.

For both problems, we show that the space of the model parameters (cluster/submatrix size, cluster density, and submatrix mean) can be partitioned into four disjoint regions corresponding to decreasing statistical and computational complexities: (1) the \emph{impossible} regime, where all algorithms fail; (2) the \emph{hard} regime, where the computationally expensive Maximum Likelihood Estimator (MLE) succeeds; (3) the \emph{easy} regime, where the polynomial-time convexified MLE succeeds; (4) the \emph{simple} regime, where a simple counting/thresholding procedure succeeds. Moreover, we show that each of these algorithms provably fails in the previous harder regimes.

Our theorems establish the minimax recovery limit, which are tight up to constants and hold with a growing number of clusters/submatrices, and provide a stronger performance guarantee than previously known for polynomial-time algorithms.
Our study demonstrates the tradeoffs between statistical and computational considerations, and suggests that the minimax recovery limit may not be achievable by polynomial-time algorithms.
\end{abstract}


\section{Introduction}
In this paper we consider two closely related problems: {planted clustering} and {submatrix localization}, both concerning the recovery of hidden structures from a noisy random graph or matrix.
\begin{itemize}
\item \textbf{Planted Clustering:}  Suppose that out of a total of $ n $ nodes, $ rK $ of them are partitioned into $ r $ {clusters} of size $ K $, and the remaining $ n-rK $ nodes do not belong to any clusters; each pair of nodes is connected by an edge with probability $ p $ if they are in the same cluster, and with probability $ q $ otherwise. Given the adjacency matrix $ A$ of the graph, the goal is to recover the underlying clusters (up to a permutation of cluster indices). By varying the values of the model parameters, this formulation covers several classical models including planted clique,  planted coloring, planted densest subgraph, planted partition, and stochastic block model (cf.\ Definition~\ref{def:planted} and discussion thereafter).

\item \textbf{Submatrix Localization:} Suppose $A \in \mathbb{R}^{n_L\times n_R}$ is a random matrix with independent Gaussian entries with unit variance, where there are $ r $ submatrices of size $ K_L\times K_R $  with disjoint row and column supports, such that the entries inside these submatrices have mean $ \mu >0$, and the entries outside have mean zero. The goal is to identify the locations of these hidden submatrices given  $A$. This formulation generalizes the submatrix detection and bi-clustering models with a single bi-submatrix/cluster that are studied in previous work (cf.\ Definition~\ref{def:submatrix} and discussion thereafter).
\end{itemize}
We are particularly interested in the setting where the number $ r $ of clusters or submatrices may grow unbounded with the problem dimensions $ n $, $ n_L $, and $ n_R $ at an arbitrary rate. We may call this the \emph{high-rank} setting because $ r $ equals the rank of a matrix representation of the clusters and submatrices (cf.\ Definitions~\ref{def:planted} and~\ref{def:submatrix}). The other parameters $ K $, $ p$, $q$, and $\mu $ are also allowed to scale with $ n $ or $ ( n_L , n_R) $.

These two problems have been studied under various names such as \emph{community detection}, \emph{graph clustering/bi-clustering},
 and \emph{reconstruction in stochastic block models}, and have a broad range of applications.
They are used as generative models for approximating real-world networks and data arrays with natural cluster/community structures, such as social networks~\cite{Fortunato10}, gene expressions~\cite{shabalin2009submatrix}, and online ratings~\cite{Hajek13}.
They serve as benchmarks in the evaluation of algorithms for clustering~\cite{mathieu}, bi-clustering~\cite{balakrishnan2011tradeoff}, community detection~\cite{Newman04}, and other  network inference problems. They also provide a venue for studying the average-case behaviors of many graph theoretic problems including max-clique, max-cut, graph partitioning, and coloring~\cite{bollobas2004maxcut,Condon01}. The importance of these two problems are well-recognized in many areas across computer science, statistics, and physics~\cite{Yu11,arias2013community,nadakuditi,Decelle11,Mossel13,Lelarge13,anandkumar2013tensormixed,Bicke09,amini2013pseudo}.

The planted clustering and submatrix localization problems exhibit an interplay between \emph{statistical} and \emph{computational} considerations.
From a statistical point of view, we are interested in identifying the range of the model parameters for which the hidden structures---in this case the clusters and submatrices---can be recovered from the noisy data $A$. The values of the parameters $ n,r,K,p,q, \mu$ govern the statistical hardness of the problems: the problems become more difficult with smaller values of $p-q $, $ \mu $, $ K$, and larger $r$, because the observations are noisier and the sought-after structures are more complicated.
A statistically powerful algorithm is one that can recover the hidden structures for a large region of the model parameter space.

From a computational point of view, we are concerned with the running time of different recovery algorithms. An exhaustive search over the solution space (i.e., all possible clusterings or locations of the submatrices) may make for a statistically powerful algorithm, but is computationally intractable. A simpler algorithm with lower running time is computationally more desirable, but may succeed only in a smaller region of the model parameter space and thus has weaker statistical power.

Therefore, it is important to take a joint statistical-computational view to the planted clustering and submatrix localization problems,  and to understand the \emph{tradeoffs} between these two considerations. How do algorithms with different computational complexity achieve different statistical performance?  For these two problems, what is the \emph{information limit} (under what conditions on the model parameters does recovery become infeasible for any algorithm), and what is the \emph{computational limit} (when does it become infeasible for computationally tractable algorithms)?

The results on this paper sheds light on the above questions. For both problems, our results demonstrate, in a precise and quantitative way, the following phenomenon:
The parameter space can be partitioned into four disjoint regions, such that each region corresponds to statistically easier instances of the problem than the previous one, and recovery can be achieved by simpler algorithms with lower running time.
Significantly, there might exist a large gap between the statistical performance of computationally intractable algorithms and that of computationally efficient algorithms. We elaborate in the next two subsections.

\subsection{Planted Clustering: The Four Regimes}\label{sec:intro_planted}
For concreteness, we first consider the planted clustering problem in the setting $ r\ge 2 $, $ p>q $ and $p/q=\Theta(1)$.
This covers the standard planted bisection/partition/$ r$-disjoint-clique models.

The statistical hardness of cluster recovery is captured by the quantity $ \frac{(p-q)^2}{q(1-q)} $, which is essentially a measure of the Signal-to-Noise Ratio (SNR). Our main theorems identify the following four regimes of the problem defined by the value of this quantity. Here for simplicity, the results use the notation $ \gtrsom  $ and $\lesssom $, which
ignore constant and $\log n$ factors; our main theorems do capture the $\log n$ factors.

\begin{compactitem}
\item \textbf{The Impossible Regime: $ \frac{(p-q)^2}{q(1-q)} \lesssom \frac{1}{K} $}.
In this regime, there is no algorithm, regardless of its computational complexity, that can recover the clusters with a vanishing probability of error.
\item \textbf{The Hard Regime: $ \frac{1}{K}  \lesssom \frac{(p-q)^2}{q(1-q)} \lesssom \frac{ n }{K^2}$}. There exists a computationally expensive algorithm---specifically the Maximum Likelihood Estimator (MLE)---that recovers the clusters with high probability in this regime (as well as in the next two easier regimes; we omit such implications in the sequel). There is no known polynomial-time algorithm that succeeds in this regime.
\item \textbf{The Easy Regime: $  \frac{n}{K^2} \lesssom \frac{(p-q)^2}{q(1-q)} \lesssom \frac{\sqrt{n }}{K} $}. There exists a polynomial-time algorithm---specifically a convex relaxation of the MLE---that recovers the clusters with high probability in this regime. Moreover, this algorithm provably fails in the hard regime above.
\item \textbf{The Simple Regime: $ \frac{(p-q)^2}{q(1-q)} \gtrsom \frac{\sqrt{n}}{K} $}. A simple algorithm based on counting node degrees and common neighbors recovers the clusters with high probability in this regime, and provably fails outside this regime (i.e., in the hard and easy regimes).
\end{compactitem}

We illustrate these four regimes in Figure~\ref{fig:summary} assuming the scaling $ p=2q=\Theta(n^{-\alpha}) $ and $ K=\Theta(n^\beta) $ for
two constants $\alpha, \beta \in (0,1)$.
Here cluster recovery becomes harder with larger $\alpha$ and smaller $\beta$.
In this setting, the four regimes correspond to four disjoint and non-empty regions of the parameter space.
Therefore, a computationally more expensive algorithm leads to an \emph{order-wise} (polynomial in $n$) enhancement in the statistical power.
For example, when $ \alpha=1/4 $, the simple, polynomial-time, and computationally intractable algorithms succeeds for $\beta$ larger than $ 0.75 $, $ 0.625$, and $ 0.25 $, respectively. There is a similar hierarchy for the allowable sparsity of the graph, given by $ \alpha < 0.25 $, $\alpha < 0.5$, and $\alpha < 0.75$ assuming $\beta=0.75$.
\begin{figure}
\begin{center}
 \scalebox{1}{
\begin{tikzpicture}[scale = 2, font = \small, thick]
\draw[->] (0, 0) node [below left] {$0$}-- (2.2, 0) node[right]{$\alpha$};
\draw[->] (0, 0) -- (0, 2.3) node [right,align=left]{$\beta$};

\draw (2, 0) -- (2, 2);
\draw (0, 2) -- (2, 2);
\node [left] at (0, 2) {$1$};
\node [below] at (2, 0) {$1$};
\node [below] at (1, 0) {$1/2$};
\node [left] at (0, 1) {$1/2$};

\path[fill = black!20] (0, 0) -- (2, 2) -- (2, 0) -- cycle;
\path[fill= red!70] (0,0)--(0,1) --(2,2)-- cycle;
\path[fill=blue!70] (0,1)--(1,2) -- (2,2)-- cycle;
\path[fill=green] (0,1)--(0,2)--(1,2)--cycle ;


\node at (1.5,0.6) {impossible};
\node at (0.5,0.9) {hard};
\node at (1.2,1.8) {easy};
\node at (0.35,1.8) {simple};
\end{tikzpicture}}
\end{center}
\caption{\small Illustration of the four regimes. The figure applies to the planted clustering problem with $p=2q=\Theta(n^{-\alpha})$ and $K=\Theta(n^{\beta})$, as well as to
the submatrix localization problem with $ n_L = n_R = n $, $ \mu^2= \Theta(n^{-\alpha}) $ and $K_L = K_R =\Theta(n^{\beta})$.
}
\label{fig:summary}
\end{figure}
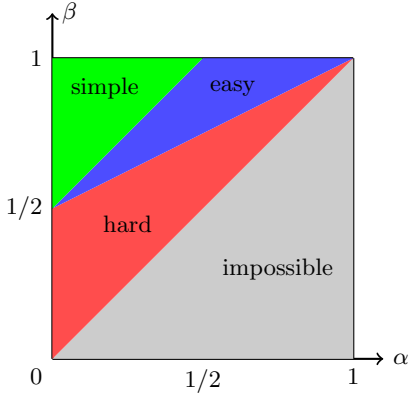

The results in the impossible and hard regimes together establish the \emph{minimax recovery boundary} of the planted clustering problem, and show that the MLE is statistically order-optimal. These two regimes are separated by an ``information barrier'': in the impossible regime the graph does not carry enough information to distinguish different cluster structures, so recovery is statistically impossible.

Our performance guarantees for the convexified MLE
improve the best known results for polynomial time algorithms in terms of the scaling, particularly in the setting when the number of clusters are allow to grow with $ n $.
We conjecture that no polynomial-time algorithm can perform significantly better and succeed in the hard regime, i.e., the convexified MLE achieves the \emph{computational limit} order-wise. While we do not prove the conjecture, there are many  supporting evidences; cf.\ Section~\ref{sec:easy}.
For instance, there is a ``spectral barrier'', determined by the spectrum of an appropriately defined noise matrix, that prevents the convexified MLE and spectral clustering algorithms from succeeding in the hard regime. In the special setting with a single cluster, the work by \cite{ma2013submatrix,HajekWuXu14} proves that no polynomial-time algorithm can reliably recover
the cluster if $\beta< \alpha/4+1/2$ conditioned on the planted clique hardness hypothesis.

The simple counting algorithm fails outside the simple regime due to a ``variance barrier'' which is associated with the fluctuations of the node degrees and the numbers of common neighbors. The simple algorithm is statistically order-wise weaker than the convexified MLE in separating different clusters.

\paragraph*{General results}

Our main theorems apply beyond the special setting above and allow for general values of $ p $, $ q $, $ K $, and $r$. The four regimes and the statistical-computational tradeoffs can be observed for a broad spectrum of planted problems, including planed partition, planted coloring, planted $r $-disjoint-clique and planted densest-subgraph models. Table~\ref{tab:four} summarizes the implications of our results for some of these models. More precise and general results are given in Section~\ref{sec:main}.

\begin{table*}
\begin{adjustbox}{center}
\small
\begin{tabular}{|r|ccc|}
\hline
 & %
\begin{tabular}{c}
\bf Planted $r$-Disjoint-Clique\tabularnewline
\footnotesize $1=p>q \ge 0,r\ge1$ \\
\end{tabular} & %
\begin{tabular}{c}
\bf Planted Partition\tabularnewline
\footnotesize $1\ge p>q\ge0$, $rK=n$ \\
\end{tabular} & %
\begin{tabular}{c}
\bf Planted Coloring\\
\footnotesize $0=p<q \le 1$, $rK=n$ \\
\end{tabular}\\
\hline
\begin{tabular}{r}
\bf Impossible\\
\footnotesize Thm \ref{thm:Impossible}, Cor \ref{cor:impossible}\\
\end{tabular} & $K \lesssim \left( \frac{q}{1-q} \vee \frac{1}{\log (1/q)} \right) \log n$ & $(p-q)^{2}\lesssim\frac{p(1-q)\log n}{K}$ & $q\lesssim\frac{\log n}{K}$\\
\hline
\begin{tabular}{r}
\bf MLE\\
\footnotesize Thm~\ref{thm:MLE}, Cor \ref{cor:hard}\\
\end{tabular} & $K\gtrsim \left( \frac{q}{1-q} \vee \frac{1}{\log(1/q)} \right) \log n$ & $(p-q)^{2}\gtrsim\frac{p(1-q)\log n}{K}$ & $q\gtrsim\frac{\log n}{K}$\\
\hline
\begin{tabular}{r}
\bf Convexified MLE\\
\footnotesize Thm~\ref{thm:CVX}\\
\end{tabular} & $K\gtrsim \frac{\log n}{1-q} + \sqrt{\frac{qn}{1-q}} $ & $(p-q)^{2}\gtrsim\frac{p(1-q)\log n}{K}+\frac{q(1-q)n}{K^{2}}$ & $q\gtrsim\frac{\log n}{K}+\frac{(1-q)n}{K^{2}}$\\
\hline
\begin{tabular}{r}
\bf Simple Counting\\
\footnotesize Thm \ref{thm:Simple}, Rem~\ref{rem:count_non_neghber}\\
\end{tabular} & $K\gtrsim \frac{\log n}{1-q} + \sqrt{\frac{qn \log n}{1-q}}$ & $(p\!-\!q)^{4}\gtrsim\left[\frac{p^{2}(1-q)}{K}\!+\!\frac{nq(1-q) (q \vee p^2)}{K^{2}}\right]\log n$ & $q^{2}\gtrsim\frac{(1-q)n\log n}{K^{2}}$\\
\hline
\end{tabular}
\end{adjustbox}
\caption{\label{tab:four}\small Our results specialized to different planted models. Here the notation $ \gtrsim  $ and $\lesssim $ ignore constant factors. This table
shows the \emph {necessary conditions} for any algorithm to succeed under a mild assumption $K \gtrsim \log (rK)$,
as well as the \emph{sufficient} conditions under which the algorithms in this paper succeed, thus  corresponding to the four regimes described in Section~\ref{sec:intro_planted}. The relevant theorems/corollaries are also listed. The conditions for convexified MLE and simple counting can further be shown to be also \emph{necessary} in a broad range of settings; cf.\ Theorems~\ref{thm:cvx_converse} and~\ref{thm:SimpleConverse}. The results in this table are not the strongest possible; see the referenced theorems for more precise statements. }
\end{table*}

\subsection{Submatrix Localization: The Four Regimes}

Similar results hold for the submatrix localization problem. Consider the setting with $ n_L=n_R=n $ and $ K_L=K_R=K $.
The statistical hardness of  submatrix localization is captured by the quantity $\mu^2$, which is again a measure of the SNR.
In the high SNR setting with $\mu^2=\Omega(\log n)$, the submatrices can be trivially identified by element-wise thresholding. In the more interesting  low SNR setting with $\mu^2 =O(\log n)$, our main theorems identify the following four regimes, which have the same meanings as before:
\begin{compactitem}
\item \textbf{The Impossible Regime: $ \mu^2  \lesssom \frac{1}{K} $}. All algorithm fail in this regime.
\item \textbf{The Hard Regime:  $ \frac{1}{K} \lesssom \mu^2 \lesssom  \frac{n}{K^2} $}. The computationally expensive MLE  succeeds, and it is conjectured that no polynomial-time algorithm succeeds here.
\item \textbf{The Easy Regime: $ \frac{n}{K^2}\lesssom \mu^2 \lesssom \frac{\sqrt{n}}{K} $}. The polynomial-time convexified MLE succeeds, and provably fails in the hard regime.
\item \textbf{The Simple Regime: $ \frac{\sqrt{n} }{K}  \lesssom \mu^2 \lesssom 1 $}. A simple thresholding algorithm succeeds, and provably fails  outside this regime.
\end{compactitem}
We illustrate these four regimes in Figure~\ref{fig:summary}  assuming  $\mu^2 =\Theta(n^{-\alpha})$ and $ K=\Theta(n^\beta) $.
In fact, the results above hold in the more general setting where the entries of  $ A $ are \emph{sub-Gaussian}.

\subsection{Discussions}\label{sec:contribution}

This paper presents a systematic study of planted clustering and submatrix localization with a growing number of clusters/submatrices. We provide sharp characterizations of the {minimax recovery boundary} with the lower and upper bounds matching up to constants. We also give improved performance guarantees for convex optimization approaches and the simple counting/thresholding algorithms. In addition, complementary results are given on the \emph{failure conditions} for these algorithms, hence characterizing their performance limits. Our analysis addresses several challenges that arise in the high-rank setting. The results in this paper highlight the similarity between planted clustering and submatrix localization, and place under a unified framework several classical problems such as planted clique, partition, coloring, and densest graph.

The central theme of our investigation is the interaction between the statistical and the computational aspects in the problems, i.e.,  {how to handle more noise and more complicated structures using more computation}. Our study parallels a recent line of work that takes a joint statistical and computational view on inference problems~\cite{balakrishnan2011tradeoff,oymak2012simultaneously,berthet2013lowerSparsePCA,chandrasekaran2013tradeoff,ma2013submatrix}; several of these works are closely related to special cases of the planted clustering and bi-clustering models. In this sense, we investigate two specific but fundamental problems, and we expect that the phenomena and principles described in this paper are relevant more generally.
Below we provide additional discussions, and comment on the relations with existing work.

\paragraph*{High rank vs.\ rank one.}
Several recent works investigate the problems of single-submatrix detection/localization~\cite{kolar2011submatrix,arias2011anomalous},
planted densest subgraph detection~\cite{arias2013community} and sparse principal component analysis (PCA)~\cite{amini2009sparsePCA} (cf.\ Section~\ref{sec:related} for a literature review).
Even earlier is the extensive study of the statistical/computational hardness of Planted Clique. The majority of these works focus on the \emph{rank-one} setting with a single clique, cluster, submatrix or principal component. This paper considers the more general \emph{high-rank} setting where the number $ r $ of clusters/submatrices may grow quickly with the problem size.
This setting is important in many empirical networks~\cite{Leskovec08,Yu11}, and poses significant challenges to the analysis.
Moreover, there are qualitative differences between these two settings. We discuss one such difference in the next paragraph.

\paragraph*{The power of convex relaxations.} In the previous work on the rank-one case of the submatrix detection/localization problem~\cite{ma2013submatrix,balakrishnan2011tradeoff} and the sparse PCA problem~\cite{Vilenchik13}, it is shown that simple algorithms based on averaging/thresholding have order-wise similar statistical performance as more sophisticated convex optimization approaches. In contrast, for the problems of finding multiple clusters/submatrices, we show that convex relaxation approaches are statistically much more powerful than the simple counting/thresholding algorithm. Our analysis reveals that the power of convex relaxations lies in \emph{separating different clusters/submatrices}, but not in identifying a single cluster/submatrix. Our results thus provide one explanation for the (somewhat curious) observation in previous work regarding the lack of benefit of using sophisticated methods, and demonstrate a finer spectrum of computational-statistical tradeoffs.

\paragraph*{Detection vs.~estimation.} Several recent works on planted densest subgraph and submatrix detection have focused on the \emph{detection} or \emph{hypothesis testing} version of the problems, i.e., detecting the existence of  a dense cluster or an elevated submatrix (cf. Section~\ref{sec:related} for literature review). In this paper, we study the (support) \emph{estimation} version of the problems, where the goal is to find the precise locations of the clusters/submatrices.  In general estimation appears to be harder than detection. For example, if we consider the scalings of $ \mu $ and $ K $ in Figure~\ref{fig:summary} of this paper, and compare with Figure 1 in~\cite{ma2013submatrix} which studies submatrix detection, we see that the minimax localization boundary is $\beta= \alpha$, whereas the minimax detection boundary is at a higher value $\beta = \min \{\alpha, \alpha/4+1/2\}$. For the planted densest subgraph problem, we see a similar gap between the minimax detection and estimation boundaries if we compare our results with results in~\cite{arias2013community,HajekWuXu14}.
In addition, it is shown in~\cite{ma2013submatrix,HajekWuXu14} that if $\beta>\alpha/4+1/2$, the planted submatrix or densest subgraph can be detected in linear time; if $\beta< \alpha/4+1/2$, no polynomial-time test exists assuming the hardness of the planted clique detection problem.
For estimation, we  prove the sufficient condition $ \beta>\alpha/2+1/2$, which is the best known performance guarantee for polynomial-time algorithms---again we see a gap between detection and estimation.
For detecting a sparse principal component, see the seminar work~\cite{berthet2013lowerSparsePCA} for proving computational lower bounds  conditioned on the hardness of Planted Clique.

\paragraph*{Extensions.} It is a simple exercise to extend our results to a  variant of the planted clustering model where the graph adjacency matrix has sub-Gaussian entries instead of Bernoulli, corresponding to a weighted graph clustering problem. Similarly, we can also extend the submatrix location problem to the setting with Bernoulli entries, which is the  bi-clustering problem on an unweighted graph and covers the \emph{planted bi-clique} problem~\cite{Feldman2012statAlg,ames2011plantedclique} as a special case.

\subsection{Related Work}\label{sec:related}

There is a large body of literature, from the physics, computer science and statistics communities, on models and algorithms for graph clustering and bi-clustering, as well as on their various extensions and applications.
A complete survey is beyond the scope of this paper. Here we focus on theoretical work on  planted clustering/submatrix localization concerning \emph{exact recovery} of the clusters/submatrices.
Detailed comparisons of existing results with ours are provided after we present each of our theorems in Sections~\ref{sec:main} and~\ref{sec:submatrix}. We emphasize that our results are \emph{non-asymptotic} and applicable to finite values of $ n,n_L $ and $ n_R $, whereas some of the results below require $ n \to \infty$.

\paragraph*{Planted Clique, Planted Densest Subgraph} The planted clique model ($ r=1$, $p=1$, $q=1/2 $) is the most widely studied planted model. If the clique has size $ K =o(\log n) $, recovery is impossible as the random graph $ \calG(n,1/2) $ will have a clique with at least the same size; if $ K=\Omega(\log n) $, an exhaustive search succeeds~\cite{Alon98}; if $ K=\Omega(\sqrt{n}) $, various polynomial-time algorithms work~\cite{Alon98,Dekel10,Deshpande12}; if $K=\Omega(\sqrt{n \log n})$, the nodes in the clique can be easily identified by counting degrees~\cite{Kucera95}. It is an open problem to find polynomial-time algorithms which succeed in the  regime with $K=o(\sqrt{n})$, and it is believed that this cannot be done~\cite{Hazan2011Nash,Juel00cliqueCrypto,alon2007testing,Feldman2012statAlg}. The four regimes above can be considered as a special case of our results for the general planted clustering model.
The planted densest subgraph model generalizes the planted clique model by allowing general values of $ p $ and $ q $. The detection version of this problem is studied in~\cite{arias2013community, verzelen2013sparse}, and  conditional computational hardness results are obtained in \cite{HajekWuXu14}.

\paragraph*{Planted $ r $-Disjoint-Cliques, Partition, and Coloring} Subsequent work considers the setting with $ r\ge 1 $ planted cliques~\cite{McSherry01}, as well as the planted partition model (a.k.a.\ stochastic block model) with general values of $ p > q $~\cite{Condon01,Holland83}.  A subset of these results allow for growing values $ r $. Most existing work focuses on the recovery performance of specific polynomial-time algorithms. The state-of-the-art recovery results for planted $r $-disjoint-clique are given in~\cite{McSherry01,Chen12,ames2010kclique}, and for planted partition in~\cite{Chen12,anandkumar2013tensormixed,cai2014robust}; see~\cite{chen2014improved} for a survey of these results.
The setting with $ p<q $ is sometimes called the \emph{heterophily} case, with the planted coloring model ($ p=0 $) as an important special case~\cite{alon1997coloring3,coja2004coloringSemirandom}.
Our performance guarantees for the convexified MLE (cf.\ Table~\ref{tab:four}) improve upon the previously known results for polynomial-time algorithms.
Also, particularly when the number of clusters $ r $ is allowed to scale arbitrarily with $ n $, matching upper and lower bounds for the information-theoretic limits were previously unknown. This paper identifies the minimax recovery thresholds for general values of $p,q, K$ and $r$, and shows that they are achieved by the MLE. Our results also suggest that polynomial-time algorithms may not be able to achieve these thresholds in the growing $ r $ setting with the cluster size $ K $ sublinear in $ n $.

\paragraph*{Converse Results for Planted Problems} Complementary to the \emph{achievability} results, another line of work focuses on \emph{converse} results, i.e., identifying necessary conditions for recovery, either for any algorithm, or for any algorithm in a specific class.
For the planted partition model with $K=\Theta(n)$, necessary conditions for any algorithm to succeed are obtained in~\cite{ChaudhuriGT12, Chen12,balakrishnan2011threshold,Abbe14} using information-theoretic tools. For spectral clustering algorithms and convex optimization approaches, more stringent conditions are shown to be needed~\cite{nadakuditi,vinayak2013sharp}. We generalize and improve upon the existing work above.

\paragraph*{Sharp Exact Recovery Thresholds with a Constant Number of Clusters}
Since the conference version of this paper is published~\cite{ChenXu14}, a number of papers have appeared on the information-theoretic
limits of exact recovery under the stochastic block model.
Under the special setting with $r=2$ and $K=n/2$,  the recovery threshold
 \emph{with sharp constants} is identified in \cite{Abbe14} for $p, q= O(\log n /n)$, and in \cite{Mossel14} for
general scalings of $p,q$. Very recently, \cite{Abbe15} proved the sharp recovery threshold for
the more general case where $r=O(1)$, $K=\Theta(n)$ and the in-cluster and cross-cluster edge probabilities
are heterogeneous and scale as $\log n /n$. Notably, when the number of clusters $ r $ is bounded, sharp recovery thresholds may be achieved by polynomial-time algorithms, in particular, by
the semi-definite programming relaxation of the maximum likelihood estimator~\cite{HajekWuXu14SDP,HajekWuXu14SDP2}. Our results are optimal up to absolute constant factors, but are non-asymptotic and apply to a growing number of clusters/submatrices of size sublinear in $n$.

\paragraph*{Approximate Recovery} While not the focus of this paper, approximate cluster recovery (under various criteria)  has also been studied, e.g.,  for planted partition with $ r=O(1) $ clusters in~\cite{Mossel12,Mossel13,Massoulie13,yun2014adaptive,Decelle11}. These results are not directly comparable to ours, but  often the approximate recovery conditions differ from  the exact recovery conditions by a $ \log n $ factor. When  constant factors are concerned, the existence of a hard regime is also conjectured in~\cite{Decelle11,Mossel12}.

\paragraph*{Submatix Localization}
The statistical and computational tradeoffs in locating a single submatrix (i.e., $ r=1 $) are studied in~\cite{balakrishnan2011tradeoff,kolar2011submatrix}, where the information limit is shown to be achieved by a computationally intractable algorithm order-wise. The success and failure conditions for various polynomial-time procedures are also derived. The work~\cite{ames2012clustering} focuses on success conditions for a convex relaxation approach; we improve the results particularly in the high-rank setting.
The single-submatrix \emph{detection} problem is studied in~\cite{butucea2011submatrix,shabalin2009submatrix,sun2013anova,arias2011anomalous,bhamidi2012energy}, and
the recent work by~\cite{ma2013submatrix} establishes the conditional hardness for this problem.

\subsection{Paper Organization and Notation}
The remainder of this paper is organized as follows.
In Section~\ref{sec:main} we set up the planted clustering model and present our main theorems for the impossible, hard, easy, and simple regimes. In Section~\ref{sec:submatrix} we turn to the submatrix localization problem and provide the corresponding theorems for the four regimes. Section~\ref{sec:conclusion} provides a brief summary with a discussion of future work. We prove the main theorems for planted clustering and submatrix localization in Sections~\ref{sec:proof} and~\ref{sec:proof_bi}, respectively.

\paragraph*{Notation}
Let $a \vee b=\max\{a,b\}$ and  $a \wedge b = \min \{ a, b \}$, and $ [m]=\{1,2,\ldots, m\} $ for any positive integer $ m$.
We use $c_1, c_2$ etc.\ to denote absolute numerical constants whose values can be made explicit and are independent of the model parameters. We use the standard big-O notations: for two sequences $\{a_n\},\{b_n\}$, we write $a_n \lesssim b_n$ or $a_n=O(b_n)$ to mean $a_n \le c_1 b_n$ for an absolute constant $c_1$ and all $ n $. Similarly,  $a_n \gtrsim b_n$ means $a_n=\Omega(b_n)$, and $a_n  \asymp b_n$ means $a_n=\Theta(b_n)$.

\section{Main Results for Planted Clustering}\label{sec:main}
The {planted clustering} problem is defined by five parameters $ n,r,K \in \mathbb{N}$ and $p, q  \in [0,1] $ such that $n \ge rK$.
\begin{definition}[Planted Clustering]\label{def:planted}
Suppose $n$ nodes (which are identified with $ [n] $) are divided into two subsets $V_1$ and $V_2$ with $|V_1|=rK$ and $|V_2|=n-rK$. The nodes in $V_1$ are partitioned into $r$ disjoint clusters $C^\ast_1, \ldots, C^\ast_r$ (called \emph{true clusters}), where $|C^\ast_m|=K$  for each $  m \in [r] $ and $\bigcup_{m=1}^r C^\ast_m=V_1$. Nodes in $V_2$ do not belong to any of the clusters and are called \emph{isolated nodes}. A random graph is generated based on the cluster structure: for each pair of nodes and independently of all others, we connect them by an edge with probability $p$ (called \emph{in-cluster edge density}) if they are in the same cluster, and otherwise with probability $q$ (called \emph{cross-cluster edge density}).
\end{definition}
We emphasize again that the values of $ p $, $ q $, $ r $, and $ K $ are allowed to be functions of $n$.  The goal is to exactly recover the true clusters $\{C^\ast_m\}_{m=1}^r$ up to a permutation of cluster indices given the random graph.

The model parameters $ (p, q, r, K) $ are assumed to be known to the algorithms. This assumption is often not necessary and can be relaxed~\cite{Chen12,arias2013community}. It is also possible to allow for non-uniform cluster sizes~\cite{ailon2013breaking}, and heterogeneous edge probabilities~\cite{cai2014robust} and node degrees~\cite{ChaudhuriGT12,Chen12}. These extensions are certainly important in practical applications; we do not delve into them, and point to the referenced papers above and the references therein for work in this direction.

To facilitate subsequent discussion, we introduce a matrix representation of the planted clustering problem. We represent the true clusters $\{C^\ast_m\}_{m=1}^r$ by a {\it cluster matrix} $Y^\ast \in \{0,1\}^{n\times n}$, where $Y^\ast_{ii}=1$ for $i \in V_1$, $Y^\ast_{ii}=0$ for $i \in V_2$, and $Y^\ast_{ij}=1$ if and only if nodes $i$ and $j$ are in the same true cluster. Note that the rank of $Y^\ast$ equals $r$, hence the name of the high-rank setting. The adjacency matrix of the graph is denoted as $ A $, with the convention  $ A_{ii}=0,\forall i\in[n]$. Under the planted clustering model, we have $ \mathbb{P}(A_{ij}=1)=p $ if $ Y_{ij}^*=1 $ and $ \mathbb{P}(A_{ij}=1)=q $ if $ Y^*_{ij}=0 $ for all $i \neq j$. The problem reduces to recovering $Y^\ast$ given $A$.

The planted clustering model generalizes several classical planted models.
\begin{compactitem}
\item {\bf Planted $r$-Disjoint-Clique}~\cite{McSherry01}. Here $p=1$ and $0<q<1$, so  $ r $ cliques of size $ K $ are planted into an Erd\H{o}s-R\'enyi random graph $ G(n,q) $.  The special case with $r=1$ is known as the {\it planted clique} problem~\cite{Alon98}.
\item {\bf Planted Densest Subgraph}~\cite{arias2013community}. Here $0<q<p<1$ and $r=1$, so there is a subgraph of size $K$ and density $ p $ planted into a $G(n,q)$ graph.
\item {\bf Planted Partition}~\cite{Condon01}. Also known as the {\it stochastic blockmodel}~\cite{Holland83}. Here $n=rK$ and $p,q \in (0,1)$.  The special case with $r=2$ can be called {\it planted bisection}~\cite{Condon01}. The case with $p<q$ is sometimes called {\it planted noisy coloring} or {\it planted $ r $-cut}~\cite{Decelle11,bollobas2004maxcut}.
\item {\bf Planted $r$-Coloring}~\cite{alon1997coloring3}. Here $n=rK$ and $0=p<q<1$, so each cluster corresponds to a group of disconnected nodes that are assigned with the same color.
\end{compactitem}

\paragraph*{Reduction to the $ p>q $ case.} For clarity we shall focus on the homophily setting with $ p>q $; results for the $ p<q $ case are similar. In fact, any achievability or converse result for the $ p>q $ case immediately implies a corresponding result for $ p<q $. To see this, observe that if the graph $ A $ is generated from the planted clustering model with $ p<q $, then the flipped graph $ A':= J -A-I $ ($J$ is the all-one matrix and $I$ is the identity matrix) can be considered as generated with in/cross-cluster edge densities $ p'=1-p $ and $ q'=1-q $, where $ p'>q' $. Therefore, a problem with $ p<q $ can be reduced to one with $ p'>q' $. Clearly the reduction can also be done in the other direction.

\subsection{The Impossible Regime: Minimax Lower Bounds}\label{sec:impossible}
In this section, we characterize the necessary conditions for cluster recovery.
Let $\mathcal{Y}$ be the set of cluster matrices corresponding to $ r $ clusters of size $ K $; i.e.,
\begin{align*}
\mathcal{Y} &= \left\{Y\in \{0,1\}^{n\times n} \right.
|\textrm{there exist disjoint clusters $\{C_m\}_{m=1}^r$ such that $|C_m|=K,\forall m\in[r],$}\\
&\qquad\qquad\qquad\qquad\; \left.\textrm{and $ Y $ is the corresponding cluster matrix} \right\}.
\end{align*}
We use $ \hat{Y} \equiv \hat{Y}(A)$ to denote an estimator which takes as input the graph $ A $ and outputs an element of $ \mathcal{Y} $ as an estimate of the true $ Y^* $. Our results are stated in terms of the Kullback-Leibler (KL) divergence between two Bernoulli distributions with means $u$ and $v$, denoted by $D(u \Vert v ):=u\log\frac{u}{v}+(1-u)\log\frac{1-u}{1-v}$.  The following theorem gives a lower bound on the minimax error probability of recovering~$ Y^* $.
\begin{theorem}[Impossible] \label{thm:Impossible}
Suppose $128 \le K \le n/2$. Under the planted clustering model with $p>q$, if one of the following two conditions holds:
\begin{align}
K \cdot D( q \| p ) & \le \frac{1}{192} \left[\log (rK) \wedge K\right],
\label{eq:KLimpossible1}\\
K \cdot D( p \| q ) & \le \frac{1}{192} \log n,
\label{eq:KLimpossible2}
\end{align}
then
\begin{equation*}
\inf_{\hat{Y}} \sup_{Y^*\in \mathcal{Y}} \mathbb{P} \left[\hat{Y} \neq Y^*\right] \ge \frac{1}{4},
\end{equation*}
where the infimum ranges over all measurable function of the graph.
\end{theorem}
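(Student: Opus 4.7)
The plan is to establish the lower bound by applying Fano's inequality to two separate sub-families of cluster matrices, one matched to each of the two conditions. The standard minimax-to-Bayes reduction yields, for any finite $\mathcal{Y}_0 \subseteq \mathcal{Y}$,
\[
\inf_{\hat Y}\sup_{Y^*\in\mathcal{Y}} \mathbb{P}[\hat Y\ne Y^*] \;\ge\; 1 - \frac{I(Y;A) + \log 2}{\log|\mathcal{Y}_0|},
\]
with $Y$ drawn uniformly from $\mathcal{Y}_0$. I would control the mutual information via the change-of-measure bound $I(Y;A)\le \mathbb{E}_Y[D(\mathbb{P}_Y\|\mathbb{P}')]$ for a reference measure $\mathbb{P}'$ chosen to isolate the relevant direction of KL.

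For condition \eqref{eq:KLimpossible2}: I would take $\mathcal{Y}_0$ to consist of cluster matrices that freeze $C_2,\ldots,C_r$ and let $C_1$ range over all $K$-subsets of the remaining $n-(r-1)K$ nodes, giving $\log|\mathcal{Y}_0|\gtrsim K\log(n/K)\gtrsim \log n$ under the assumption $K\le n/2$. With $\mathbb{P}'$ taken to be the pure Erd\H{o}s--R\'enyi $G(n,q)$ law, only the $\binom{K}{2}$ pairs inside $C_1$ undergo a density change, from $q$ to $p$; this yields the clean one-sided KL $D(\mathbb{P}_Y\|\mathbb{P}') = \binom{K}{2}\, D(p\|q)$, and Fano gives error $\ge 1/4$ as soon as $K\,D(p\|q)$ is a sufficiently small multiple of $\log n$.

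For condition \eqref{eq:KLimpossible1}: the dual construction fixes $V_1$ (the set of clustered nodes) and varies the partition of $V_1$ into $r$ clusters of size $K$, with a reference $\mathbb{P}'$ chosen to make the KL one-sided in $D(q\|p)$. A natural first attempt sets $\mathbb{P}'$ to be the ``fully clustered'' law in which all pairs inside $V_1$ have density $p$, so that only different-cluster pairs contribute and each contributes $D(q\|p)$. A direct enumeration over all partitions gives only $K\,D(q\|p)\lesssim \log r/r$, which is too weak, so I would instead use a \emph{local} packing. When $\log(rK)$ binds, enumerate single-swap perturbations of a canonical partition (the canonical matrix together with $\asymp r(r-1)K^2$ swaps, so $\log|\mathcal{Y}_0|\asymp \log(rK)$); when $K$ binds (the small-cluster regime), enumerate all $\binom{2K}{K}$ bipartitions of a fixed pair of clusters ($\log|\mathcal{Y}_0|\asymp K$). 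In both cases only the $O(K)$ edges incident to the reassigned nodes differ, so after matching the reference $\mathbb{P}'$ to the canonical law $\mathbb{P}_{Y^*}$ the pairwise KL reduces to $\asymp K\,D(q\|p)$ at the level of the Fano bound.

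The main technical obstacle is this second construction: because a swap induces symmetric density changes ($p\to q$ and $q\to p$ in equal numbers), the naive pairwise-KL bound $D(\mathbb{P}_Y\|\mathbb{P}_{Y'})$ contains both $D(p\|q)$ and $D(q\|p)$, and recovering the clean one-sided $D(q\|p)$ form of condition \eqref{eq:KLimpossible1} demands a carefully matched pair of reference law and local packing. Once both constructions are in place, whichever of \eqref{eq:KLimpossible1}--\eqref{eq:KLimpossible2} holds triggers its matching Fano argument, and the constant $1/192$ is calibrated so that the ratio $(I(Y;A)+\log 2)/\log|\mathcal{Y}_0|$ stays below $3/4$, yielding the asserted error lower bound of $1/4$.
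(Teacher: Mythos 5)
There is a genuine gap in your treatment of condition \eqref{eq:KLimpossible1}, and it cannot be patched by a cleverer choice of reference measure. The problem is precisely the one you flag: any local packing built from swaps or bipartitions necessarily flips some pairs from density $p$ to density $q$ \emph{and} some from $q$ to $p$, so the pairwise KL $D(\mathbb{P}_{Y}\|\mathbb{P}_{Y'})$ always contains a $K\,D(p\|q)$ term. When $q\ll p$ (for instance $q=0$ in planted $r$-disjoint-clique, or $q$ polynomially smaller than $p$), we have $D(p\|q)\to\infty$ while $D(q\|p)\asymp p$ stays small, so a Fano bound built on $D(\mathbb{P}_Y\|\mathbb{P}_{Y'})$ gives nothing in exactly the regime $Kp\lesssim\log(rK)\wedge K$ that condition \eqref{eq:KLimpossible1} targets. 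Switching to the change-of-measure bound $I(Y;A)\le\mathbb{E}_Y[D(\mathbb{P}_Y\|\mathbb{P}')]$ with $\mathbb{P}'$ a "fully clustered" law on $V_1$ does make the KL one-sided, but then every cross-cluster pair inside $V_1$ contributes $D(q\|p)$ and the resulting bound is off by a factor of $K$ (you get $K^2D(q\|p)\lesssim K$ rather than $KD(q\|p)\lesssim K$). The paper does not resolve this with information theory at all: it uses a separate second-moment argument (its Lemma \ref{lmm:impossible2}) showing that when $Kp\lesssim\log(rK)\wedge K$, with constant probability the graph contains two \emph{disconnected} nodes (nodes with no neighbor in their own cluster) in two different clusters, so the likelihood is invariant under swapping them and even the Bayes-optimal estimator errs with probability $\ge 1/2$. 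This combinatorial/probabilistic device is the missing idea, and without it your plan cannot cover the regime $q\ll p$ of \eqref{eq:KLimpossible1}.

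A secondary issue: your construction for \eqref{eq:KLimpossible2} varies $C_1$ over $K$-subsets of the $n-(r-1)K$ unused nodes. When $n=rK$ (planted partition, no isolated nodes) this set is a single $K$-subset and $\log|\mathcal{Y}_0|=0$, so the Fano bound is vacuous. Even when isolated nodes exist, $\log\binom{n-(r-1)K}{K}\ge\log n$ is not automatic. The paper instead uses single-swap perturbations of a canonical partition ($\log|\bar{\mathcal{Y}}|=\log(n-K)$) together with a case analysis on whether $1-q\le 2(1-p)$, falling back on the global Fano argument (Lemma \ref{lmm:impossible1}) or the disconnected/betrayed-node argument (Lemma \ref{lmm:impossible2}) when the swap packing again suffers from the asymmetric-KL problem. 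So the case split and the non-Fano lemma are load-bearing for \eqref{eq:KLimpossible2} as well.
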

The theorem shows it is fundamentally impossible to recover the clusters with success probability close to 1 in the regime where~\eqref{eq:KLimpossible1} or~\eqref{eq:KLimpossible2} holds, which is thus called the {\it impossible regime}.
This regime arises from an \emph{information/statistical barrier}:
The KL divergence on the LHSs of~\eqref{eq:KLimpossible1} and~\eqref{eq:KLimpossible2} determines how much information of $ Y^* $ is contained in the data  $ A $. If the in-cluster and cross-cluster edge distributions are close (measured by the KL divergence) or the
cluster size is small, then  $ A $ does not carry enough information to distinguish different cluster matrices.

It is sometimes more convenient to use the following corollary, derived by upper-bounding the KL divergence in~\eqref{eq:KLimpossible1} and~\eqref{eq:KLimpossible2} using its Taylor expansion. This corollary was used when we overviewed our results in Section~\ref{sec:intro_planted}. See table~\ref{tab:four} for its implications for specific planted models.

\begin{corollary} \label{cor:impossible}
Suppose  $ 128 \le K \le n/2$. Under the planted clustering model with $p>q$, if any one of the following three conditions holds:
\begin{align}
K(p-q)^2 & \le \frac{1}{192} q (1-q) \log n , \label{eq:impossiblesimle_1}\\
K p & \le \frac{1}{193}  \left[\log (rK) \wedge K\right],  \label{eq:impossiblesimle_2}\\
K p \log  \frac{p}{q} & \le \frac{1}{192} \log n, \label{eq:impossiblesimle_3}
\end{align}
then $ \inf_{\hat{Y}} \sup_{Y^*\in \mathcal{Y}} \mathbb{P} [\hat{Y} \neq Y^*] \ge \frac{1}{4}.$
\end{corollary}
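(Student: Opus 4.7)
The plan is to derive each of the three sufficient conditions of Corollary~\ref{cor:impossible} from one of the two KL-divergence conditions \eqref{eq:KLimpossible1}--\eqref{eq:KLimpossible2} of Theorem~\ref{thm:Impossible}, so that the corollary follows by direct implication. Concretely, I aim to show $\eqref{eq:impossiblesimle_1} \Rightarrow \eqref{eq:KLimpossible2}$, $\eqref{eq:impossiblesimle_2} \Rightarrow \eqref{eq:KLimpossible1}$, and $\eqref{eq:impossiblesimle_3} \Rightarrow \eqref{eq:KLimpossible2}$. The whole argument is a collection of elementary upper bounds on the two KL divergences, tailored to the sign of $p-q$ and to whether the reference parameter is $p$ or $q$.

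For \eqref{eq:impossiblesimle_1}, I would use the classical $\chi^2$-type bound $D(p\|q) \le (p-q)^2/[q(1-q)]$. To derive it, apply $\log(1+x) \le x$ separately to $\log(p/q)=\log(1+(p-q)/q)$ and to $\log((1-p)/(1-q))=\log(1-(p-q)/(1-q))$; collecting terms gives $D(p\|q) \le (p-q)\left[p/q - (1-p)/(1-q)\right] = (p-q)^2/[q(1-q)]$. Multiplying by $K$ turns \eqref{eq:impossiblesimle_1} into \eqref{eq:KLimpossible2}. For \eqref{eq:impossiblesimle_3}, I observe that when $p>q$ the second term $(1-p)\log\frac{1-p}{1-q}$ of $D(p\|q)$ is non-positive, so $D(p\|q) \le p \log(p/q)$; multiplying by $K$ immediately yields \eqref{eq:KLimpossible2}.

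The implication $\eqref{eq:impossiblesimle_2} \Rightarrow \eqref{eq:KLimpossible1}$ is the only step with any subtlety and is the reason the constant in \eqref{eq:impossiblesimle_2} is $1/193$ rather than $1/192$. My plan is first to note that $q \mapsto D(q\|p)$ is decreasing on $[0,p]$: its derivative $\log(q/p) - \log((1-q)/(1-p))$ vanishes at $q=p$ and is strictly negative for $q<p$. Hence $D(q\|p) \le D(0\|p) = -\log(1-p)$. Since $[\log(rK) \wedge K] \le K$, the hypothesis \eqref{eq:impossiblesimle_2} in particular forces $p \le 1/193$, and then $-\log(1-p) \le p/(1-p) \le (193/192)\,p$. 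Combining these estimates, $K \cdot D(q\|p) \le (193/192)\,Kp \le \tfrac{1}{192}[\log(rK) \wedge K]$, which is exactly \eqref{eq:KLimpossible1}. I do not anticipate any substantive obstacle beyond this small sharpening of the $-\log(1-p) \approx p$ approximation, since Theorem~\ref{thm:Impossible} already performs all of the probabilistic work.
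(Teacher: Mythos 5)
Your proposal is correct and follows essentially the same route as the paper: reduce each of the three conditions to one of the two KL-divergence hypotheses of Theorem~\ref{thm:Impossible} via elementary upper bounds on $D(\cdot\|\cdot)$. The first and third implications (via $D(p\|q)\le (p-q)^2/[q(1-q)]$, i.e.\ the paper's Eq.~\eqref{eq:boundDivergence}, and via dropping the non-positive term to get $D(p\|q)\le p\log(p/q)$) are exactly the paper's. The only minor deviation is in the second implication: you bound $D(q\|p)\le -\log(1-p)$ by the monotonicity of $q\mapsto D(q\|p)$ on $[0,p]$ and then apply $-\log(1-p)\le p/(1-p)$, whereas the paper goes directly through Eq.~\eqref{eq:boundDivergence} to get $D(q\|p)\le (p-q)^2/[p(1-p)]\le p/(1-p)$; both arrive at the same estimate $D(q\|p)\le p/(1-p)\le \tfrac{193}{192}p$ once $p\le 1/193$ is extracted from the hypothesis, and your monotonicity step is a valid (and marginally tighter) alternative, so this is a cosmetic difference rather than a different argument.
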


Note the asymmetry between the roles of $ p $ and $ q $ in the conditions~\eqref{eq:KLimpossible1} and~\eqref{eq:KLimpossible2}; this is  made apparent in Corollary~\ref{cor:impossible}. To see why the asymmetry is natural, recall that by a classical result of~\cite{grimmett1975colouring}, the largest clique in a random graph $ G(n,q) $ has size $k_q=\Theta(\log n/\log(1/q) )$ almost surely. Such a  clique cannot be distinguished from a true cluster if $ K\lesssim k_q  $, even when $ p=1 $. This is predicted by the condition~\eqref{eq:impossiblesimle_3}. When $ q=0 $, cluster recovery requires $ p\gtrsim \frac{\log(rK)}{K} $ to ensure all true clusters are connected within themselves, matching the condition~\eqref{eq:impossiblesimle_2}.
The  term  $ K $ on the RHS of~\eqref{eq:KLimpossible1} and~\eqref{eq:impossiblesimle_2} is relevant only when $ K\le \log(rK) $. Potential improvement on this term is left to future work.

\paragraph*{Comparison to previous work} When $ r=1 $ and $q=1/2$, our results recover the $ K=\Theta(\log n) $ threshold for the classical planted clique problem. For planted partition with $ r=O(1) $ clusters of size $ K=\Theta(n) $ and $p/q=\Theta(1)$, the work in~\cite{ChaudhuriGT12,Chen13} establishes the necessary condition $ p-q\lesssim \sqrt{p/n} $; our result is stronger by a logarithmic factor. The work in~\cite{Abbe14} also considers planted partition with $ r=2 $ and focus on the special case with the scaling $ p,q=\Theta(\log(n)/n) $; they establish the condition $ p+q-2\sqrt{pq} <2\log(n)/n$, which is consistent with our results up to constants in this regime. Compared to previous work, we handle the more general setting where $ p,q  $ and $ r $ may scale arbitrarily with $n$.

\subsection{The Hard Regime: Optimal Algorithm}\label{sec:hard}
In this subsection, we characterize the sufficient conditions for cluster recovery
which match the necessary conditions given in Theorem~\ref{thm:Impossible} up to constant factors.
We consider the Maximum Likelihood Estimator of $ Y^* $ under the planted clustering model, which we now derive. The log-likelihood of observing the graph~$ A $ given a cluster matrix $ Y\in \mathcal{Y} $ is
\begin{align}
\log \mathbb{P}_Y(A)
&= \log \prod_{i<j} p^{A_{ij}Y_{ij}} q^{A_{ij} (1-Y_{ij})} (1-p)^{(1-A_{ij}) Y_{ij} }(1-q) ^{(1-A_{ij}) (1-Y_{ij})}\nonumber\\
&=\log \frac{p(1\!-\!q)}{q(1\!-\!p)}\sum_{i<j}\! A_{ij} Y_{ij} + \log \frac{1\!-\!p}{1\!-\!q} \sum_{i<j}\! Y_{ij}  + \log \frac{q}{1\!-\!q} \sum_{i<j}\! A_{ij} +\sum_{i<j}\! \log (1\!-\!q).\label{eq:loglikelihood}
\end{align}
Given $ A $, the MLE maximizes the the log-likelihood over the set $ \mathcal{Y} $ of all possible cluster matrices.
Note that $\sum_{i<j}Y_{ij} = r \binom{K}{2}$ for all $ Y\in\mathcal{Y} $, so the last three terms in~\eqref{eq:loglikelihood} are independent of~$ Y $. Therefore, the MLE for the $ p>q $ case is given as in Algorithm~\ref{alg:mle}.
\begin{algorithm}[H]
\caption{Maximum Likelihood Estimator ($ p>q $)}\label{alg:mle}
\begin{align}
\hat{Y} = \arg\max_{Y}  & \sum_{i,j} A_{ij} Y_{ij} \label{MLE} \\
\text{s.t.	} & \; Y \in \mathcal{Y}. \label{eq:cluster}
\end{align}
\end{algorithm}
\noindent
Algorithm~\ref{alg:mle} is equivalent to finding $ r $ disjoint clusters of size$ K $ that maximize the number of edges inside the clusters (similar to Densest $ K $-Subgraph), or minimize the number of edges outside the clusters (similar to Balanced Cut) or the disagreements between $ A $ and $ Y $ (similar to Correlation Clustering in~\cite{bansal2004correlation}). Therefore, while Algorithm~\ref{alg:mle} is derived from the planted clustering model, it is in fact quite general and not tied to the modeling assumptions. Enumerating over the set $ \mathcal{Y} $ is computationally intractable in general since $ \vert \mathcal{Y} \vert= \Omega(e^{rK}) $.

The following theorem provides a success condition for the MLE.
\begin{theorem}[Hard] \label{thm:MLE}
Under the planted clustering model with $p>q$, there exists a universal constant $ c_1$ such that for any $ \gamma \ge 1 $, the optimal solution $ \hat{Y} $ to the problem~\eqref{MLE}--\eqref{eq:cluster} is unique and equal to  $ Y^* $ with probability at least $ 1-16 (\gamma rK)^{-1}- 256 n^{-1} $ if both of the following hold:
\begin{equation}
\begin{aligned}
K \cdot D(q \Vert p) &\ge c_1 \log (\gamma rK),\\
   K \cdot D( p \Vert q) &\ge c_1 \log n.
\end{aligned}
\label{eq:ConditionHard}
\end{equation}
\end{theorem}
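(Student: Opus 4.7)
Since every $Y \in \mathcal{Y}$ satisfies $\sum_{i<j} Y_{ij} = r\binom{K}{2}$, the last three summands in~\eqref{eq:loglikelihood} are constants on $\mathcal{Y}$, so maximizing the log-likelihood is equivalent to~\eqref{MLE}, and $\hat{Y}=Y^*$ is the unique optimum iff $\iprod{A}{Y^*-Y}>0$ for every $Y\in\mathcal{Y}\setminus\{Y^*\}$. For such a $Y$, set $s(Y):=|\{(i,j):i<j,\,Y^*_{ij}=1,\,Y_{ij}=0\}|$, which by the common-sum constraint also equals $|\{(i,j):i<j,\,Y^*_{ij}=0,\,Y_{ij}=1\}|$. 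Then $\tfrac12\iprod{A}{Y^*-Y}=X(Y)-Z(Y)$ with $X(Y)\sim\Binom(s(Y),p)$ and $Z(Y)\sim\Binom(s(Y),q)$ independent, and the bad event is $\{X(Y)\le Z(Y)\}$. Using $\{X\le Z\}\subseteq\{X\le s\alpha\}\cup\{Z\ge s\alpha\}$ together with the Cram\'er--Chernoff bounds $\mathbb{P}(X\le s\alpha)\le e^{-sD(\alpha\Vert p)}$ and $\mathbb{P}(Z\ge s\alpha)\le e^{-sD(\alpha\Vert q)}$, and tuning $\alpha$ close to $p$ (respectively close to $q$), I obtain the two complementary estimates
\[
\mathbb{P}\{X(Y)\le Z(Y)\}\le 2e^{-c\,s(Y)D(p\Vert q)}\quad\text{and}\quad \mathbb{P}\{X(Y)\le Z(Y)\}\le 2e^{-c\,s(Y)D(q\Vert p)}
\]
for an absolute constant $c$; in the union bound I apply whichever is more favorable per deviation class.

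\textbf{Enumeration and union bound.} I parametrize a deviation $Y\ne Y^*$, after permuting cluster labels to maximize agreement with $Y^*$, by two nonnegative integers: $t_1(Y)$, the number of $V_1$-nodes that are isolated in $Y$ (equivalently, the number of $V_2$-nodes promoted into some cluster of $Y$); and $t_2(Y)$, the number of remaining $V_1$-nodes whose cluster label in $Y$ differs from that in $Y^*$. A pigeonhole accounting of ``lost'' in-cluster pairs shows $s(Y)\ge (t_1+t_2)\bigl(K-1-(t_1+t_2)\bigr)\gtrsim K(t_1+t_2)$ whenever $t_1+t_2\le K/2$, while a direct count gives $s(Y)\gtrsim K^2$ in the large-$t$ regime. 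The number of $Y$'s with given $(t_1,t_2)$ is at most $\binom{rK}{t_1+t_2}\binom{n-rK}{t_1}\,t_1!\cdot r^{t_2}\le \bigl(eKnr/(t_1+t_2)\bigr)^{t_1}(er^2K/t_2)^{t_2}$. Summing over ``isolated-dominated'' deviations ($t_1\ge t_2$) using the $D(p\Vert q)$-bound yields a geometric series $\sum_{t_1\ge 1}\exp\!\bigl(t_1\log(enr)-c K t_1 D(p\Vert q)/2\bigr)$, which is $O(1/n)$ under $KD(p\Vert q)\ge c_1\log n$; summing over ``swap-dominated'' deviations ($t_2>t_1$) with the $D(q\Vert p)$-bound gives $\sum_{t_2\ge 1}\exp\!\bigl(t_2\log(er^2K)-cK t_2 D(q\Vert p)/2\bigr)$, which is $O\bigl(1/(\gamma rK)\bigr)$ under $KD(q\Vert p)\ge c_1\log(\gamma rK)$. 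Combining these two bounds yields the announced failure probability $16(\gamma rK)^{-1}+256 n^{-1}$, and uniqueness follows since each individual bound is strict.

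\textbf{Main obstacle.} The crux is the \emph{asymmetric} pairing of the two combinatorial pre-factors ($n^{t_1}$ from the abundant isolated $V_2$-nodes, versus $(rK)^{t_2}$ from within-cluster relabeling) with the two KL-divergence rates ($D(p\Vert q)$ versus $D(q\Vert p)$). The threshold-tuning in the Chernoff step is what makes this decoupling possible: picking $\alpha$ near $p$ controls the dominant rare event for an isolated-type move---cross-cluster edges (mean $q$) looking as dense as in-cluster edges---at rate $D(p\Vert q)$, which naturally absorbs the large pre-factor $n^{t_1}$; picking $\alpha$ near $q$ controls the dominant rare event for a swap-type move---in-cluster edges (mean $p$) looking as sparse as cross-cluster ones---at rate $D(q\Vert p)$, which absorbs the smaller pre-factor $(rK)^{t_2}$. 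Without this decoupling, one would need the much stronger $K[D(p\Vert q)\wedge D(q\Vert p)]\gtrsim \log n$, which could be too restrictive when the two divergences differ significantly. Handling ``mixed'' deviations with both $t_1,t_2>0$ and the regime $t_1+t_2>K/2$ is a subsidiary but routine matter: in both cases the bounds derived from the pure extremes dominate after a careful tally of lost vs.\ gained pairs.
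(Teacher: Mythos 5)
There is a genuine gap in your tail bound, and it is precisely where the heavy lifting happens. You claim that by choosing $\alpha$ close to $p$ (resp.\ close to $q$) in the Chernoff decomposition $\{X\le Z\}\subseteq\{X\le s\alpha\}\cup\{Z\ge s\alpha\}$ you can make \emph{both} estimates $\Prob\{X(Y)\le Z(Y)\}\le 2e^{-c\,s(Y)D(p\Vert q)}$ and $\Prob\{X(Y)\le Z(Y)\}\le 2e^{-c\,s(Y)D(q\Vert p)}$ hold, and then pick whichever suits each deviation class. This cannot be done: as $\alpha\to p$ the first exponent $D(\alpha\Vert p)\to 0$ so $\Prob(X\le s\alpha)$ stays bounded away from zero, making the combined bound trivial (and symmetrically as $\alpha\to q$). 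The best single-rate Chernoff bound on $\Prob\{X\le Z\}$ is governed by the Chernoff information, which is at most $\min\{D(p\Vert q),D(q\Vert p)\}$, never the max. Consequently, for your ``isolated-dominated'' class (cardinality $\approx n^{t_1}$) you would actually be forced to pay $K\cdot\min\{D(p\Vert q),D(q\Vert p)\}\gtrsim\log n$, which is exactly the overly strong condition you correctly identify as undesirable at the end, and in particular is strictly stronger than the hypothesis $KD(q\Vert p)\gtrsim\log(\gamma rK)$ when $rK\ll n$. So the ``main obstacle'' you name is real, but the mechanism you propose does not resolve it.

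The decoupling in the paper's proof comes from a different place. One fixes $\alpha=\frac{p+q}{2}$, writes $\langle A-\Expect A,\,Y^\ast-Y\rangle=2T_1(Y)-2T_2(Y)$ where $T_1$ sums the centered in-cluster entries over $\{Y^\ast_{ij}=1,\,Y_{ij}=0\}$ and $T_2$ the centered cross-cluster entries over $\{Y^\ast_{ij}=0,\,Y_{ij}=1\}$, and controls the two sub-events $\{T_1\le -\frac{p-q}{4}d\}$ and $\{T_2\ge\frac{p-q}{4}d\}$ \emph{separately}, with rates $D(\frac{p+q}{2}\Vert p)\gtrsim D(q\Vert p)$ and $D(\frac{p+q}{2}\Vert q)\gtrsim D(p\Vert q)$ via Lemma~\ref{lmm:kldivergence}. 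The key structural observation you are missing is that $T_1(Y)$ depends only on the restriction of $Y$ to $\supp(Y^\ast)$, i.e., only on the equivalence class $[Y]$. The union bound for the $T_1$ sub-event therefore runs over the much smaller collection $\{[Y]:d(Y)=t\}$, whose cardinality grows like $(rK)^{O(t/K)}$ (Lemma~\ref{lem:countY}), and the slow rate $D(q\Vert p)$ only has to beat $\log(\gamma rK)$. The $T_2$ sub-event has no such invariance, so its union is over all of $\{Y:d(Y)=t\}$, of size $n^{O(t/K)}$, but that is paired with the fast rate $D(p\Vert q)$ against $\log n$. That asymmetric pairing is what realizes the decoupling you correctly anticipate; it is achieved by matching each sub-event to its own sufficient statistic and counting only the equivalence classes that statistic distinguishes, not by tuning the Chernoff threshold for a single unified tail bound.
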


We refer to the regime in which the condition~\eqref{eq:ConditionHard} holds but~\eqref{eq:Condition} below fails as the {\it hard regime}, as clustering is statistically possible but conjectured to be computationally hard (cf.\ Conjecture~\ref{ConjectureHardness}).
The conditions~\eqref{eq:ConditionHard} above and~\eqref{eq:KLimpossible1}--\eqref{eq:KLimpossible2} in Theorem~\ref{thm:Impossible} match up to a constant factor under the mild assumption~$K \ge \log (rK)$. This establishes the minimax recovery boundary for planted clustering and the minimax optimality of the MLE up to constant factors.

By lower bounding the KL divergence, we obtain the following corollary, which is sometimes more convenient to use. See Table~\ref{tab:four} for its implications for specific planted models.
\begin{corollary} \label{cor:hard}
For planted clustering with $p>q$, there exists a universal constant $ c_2$ such that for any $ \gamma\ge 1$, the optimal solution $ \hat{Y} $ to the problem~\eqref{MLE}--\eqref{eq:cluster} is unique and equal to $ Y^* $ with probability at least $ 1-16 (\gamma rK)^{-1}- 256 n^{-1} $ provided
\begin{align}
K(p-q)^2 \ge c_2 q (1-q)  \log n, \quad  K p \ge c_2 \log (\gamma rK)  \quad\text{and}\quad Kp \log \frac{p}{q} \ge c_2 \log n.  \label{eq:hardsimple}
\end{align}
\end{corollary}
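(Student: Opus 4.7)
}
The corollary will follow as a direct consequence of Theorem~\ref{thm:MLE} once I show that the three inequalities in~\eqref{eq:hardsimple} together imply both KL--divergence conditions in~\eqref{eq:ConditionHard}, possibly with a new absolute constant. The heart of the argument is therefore a pair of two-regime lower bounds on the Bernoulli KL divergence. Specifically, I would establish that for all $0 < q < p < 1$ there is a universal constant $c_3 > 0$ such that
\begin{equation*}
D(p \Vert q) \;\ge\; c_3 \min\!\left\{\frac{(p-q)^2}{q(1-q)},\; p\log\tfrac{p}{q}\right\}, \qquad
D(q \Vert p) \;\ge\; c_3 \min\!\left\{\frac{(p-q)^2}{q(1-q)},\; p\right\}.
\end{equation*}
Each bound is verified by splitting into the ``local'' regime $p \le 2q$ and the ``far'' regime $p > 2q$. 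In the local regime, a second-order Taylor expansion around $p=q$ (noting that $\partial_t D(q\Vert t)|_{t=q}=0$ and $\partial_{tt}^2 D(q\Vert t)|_{t=q}=1/(q(1-q))$, and similarly for $D(t\Vert q)$) produces the $(p-q)^2/(q(1-q))$ branch, with cubic remainder uniformly controlled since $p/q \le 2$. In the far regime, direct manipulation of the definition of KL (using $\log(1-x)\ge -x/(1-x)$ and $\log(1/(1-p)) \ge p$) gives $D(p\Vert q) \ge p\log(p/q) - (p-q) + q \gtrsim p\log(p/q)$ and $D(q\Vert p) \ge (1-q)\log(1/(1-p)) - q\log 2 \gtrsim p$, where $p>2q$ is used to absorb the lower-order terms.

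Given these bounds, the rest is a short case analysis. If $p \le 2q$, both $D(p\Vert q)$ and $D(q\Vert p)$ are at least $c_3(p-q)^2/(q(1-q))$, so the first inequality of~\eqref{eq:hardsimple} immediately yields $K D(p \Vert q),\, K D(q \Vert p) \gtrsim \log n$; this covers both requirements in~\eqref{eq:ConditionHard}, using in addition that $\log(\gamma rK) = O(\log n)$ in the only nontrivial regime $\gamma = O(\mathrm{poly}(n))$ (for larger $\gamma$ the probability bound in Theorem~\ref{thm:MLE} is already vacuous). If $p > 2q$, the third inequality $Kp\log(p/q) \ge c_2 \log n$ yields $K D(p \Vert q) \gtrsim \log n$, while the second inequality $Kp \ge c_2 \log(\gamma rK)$ yields $K D(q \Vert p) \gtrsim \log(\gamma rK)$. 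Invoking Theorem~\ref{thm:MLE} with $c_1$ chosen as a small multiple of $c_2 c_3$ then delivers the stated success probability.

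The only step requiring genuine care is the simultaneous verification of the two-regime KL estimates with a single absolute constant across the ``seam'' $p = 2q$; otherwise everything is routine algebra. The three hypotheses of the corollary are chosen precisely to align with the three distinct ways the minima can be attained---the squared-difference branch, the $p\log(p/q)$ branch, and the $p$ branch---so no scaling in $n, r, K$ is lost through the reduction.
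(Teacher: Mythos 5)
Your proposal is essentially the same argument the paper uses: both rely on the paper's uniform KL lower bound $D(u\Vert v)\ge (u-v)^2/[2(u\vee v)(1-(u\wedge v))]$ (which you rederive via Taylor expansion in the local regime), split on whether $p$ exceeds a fixed constant multiple of $q$ (you use $2q$, the paper uses $e^2 q$ to make the far-regime estimates $D(p\Vert q)\gtrsim p\log(p/q)$ and $D(q\Vert p)\gtrsim p$ drop out with one line of algebra rather than requiring the monotonicity check you flag as needing "genuine care"), and match the three hypotheses of~\eqref{eq:hardsimple} to the two KL conditions of~\eqref{eq:ConditionHard} exactly as you describe. The one minor slip is the intermediate bound $D(p\Vert q)\ge p\log(p/q)-(p-q)+q$: the inequality $\log(1-x)\ge -x/(1-x)$ gives $D(p\Vert q)\ge p\log(p/q)-(p-q)$ without the $+q$, but this does not affect the conclusion since the $p>2q$ case still absorbs the linear term with an explicit universal constant.
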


The condition~\eqref{eq:hardsimple} can be simplified to  $K(p-q)^2 \gtrsim q (1-q) \log n$ if $q=\Theta(p)$, and to $Kp \log \frac{p}{q} \gtrsim \log n, K p \gtrsim \log (rK)$ if $q=o(p)$. These match the converse conditions in Corollary~\ref{cor:impossible} up to constants.

\paragraph*{Comparison to previous work} Theorem~\ref{thm:MLE} provides the first minimax results tight up to constant factors when the number of clusters is allowed to grow, potentially at a \emph{nearly-linear} rate  $ r=O(n/\log n) $. Interestingly, for a fixed cluster size, the recovery boundary~\eqref{eq:ConditionHard} depends only weakly on the number of clusters~$ r $ though the logarithmic term.
For $ r=1 $ and $ p=2q=1 $, we recover the recovery boundary for planted clique $ K\asymp \log n $. For the planted densest subgraph model where $p/q=\Theta(1)$, $p$ bounded away from $1$ and $Kq \gg 1$, the minimax \emph{detection} boundary is shown in \cite{arias2013community} to be $ \frac{(p-q)^2}{q} \asymp \min \{ \frac{1}{K}\log \frac{n}{K}, \frac{n^2}{K^4} \}$; our results show that the minimax \emph{recovery} boundary is $\frac{(p-q)^2}{q} \asymp \frac{\log n}{K}$,
which is strictly above the detection boundary because $\frac{n^2}{K^4}$ can be much smaller than $\frac{\log n}{K}.$
For the planted bisection model with two equal-sized clusters: if $ p,q=\Theta(\log(n)/n) $,  the sharp recovery boundary is found in \cite{Abbe14} and \cite{Mossel14} to be $ K( \sqrt{p}-\sqrt{q})^2> \log n $, which is consistent with our results up to constants; if $p,q=O(1/n)$,  the correlated recovery limit is shown in \cite{Mossel12,Massoulie13, Mossel13} to be $K(p-q)^2>p+q$, which is consistent with our results up to a logarithmic factor.

\subsection{The Easy Regime: Polynomial-Time Algorithms}\label{sec:easy}
In this subsection, we present a polynomial-time algorithm for the planted clustering problem and show that it succeeds in the easy regime described in the introduction.

Our algorithm is based on taking a convex relaxation of the MLE in Algorithm~\ref{alg:mle}. Note that the objective function~\eqref{MLE} in the MLE is linear, but the constraint $ Y\in \mathcal{Y} $ involves a set $ \mathcal{Y} $ that is discrete, non-convex and exponentially large. We replace this non-convex constraint with a trace norm (a.k.a.\  nuclear norm) constraint and a set of linear constraints. This leads to the convexified MLE given in Algorithm~\ref{alg:convex}. Here the trace norm $ \Vert Y \Vert_* $ is defined as the sum of the singular values of $ Y $. Note that the true $ Y^* $ is feasible to the optimization problem~\eqref{eq:CVX}--\eqref{eq:linear} since $ \Vert Y^*\Vert_* = \text{trace}(Y^*) = rK.  $

\begin{algorithm}[H]
\caption{Convexified Maximum Likelihood Estimator ($ p>q $)\label{alg:convex}}
\begin{align}
\hat{Y} = \arg\max_{Y}  & \sum_{i,j} A_{ij} Y_{ij} \label{eq:CVX} \\
\text{s.t.	} & \; \Vert Y \Vert_* \le rK, \label{eq:norm}\\
 & \;  \sum_{i,j} Y_{ij} = rK^2, \quad 0\le Y_{ij} \le 1, \forall i,j. \label{eq:linear}
\end{align}
\end{algorithm}
\noindent  The optimization problem in Algorithm~\ref{alg:convex} is a semidefinite program (SDP) and can be solved in polynomial time by standard interior point methods or various fast specialized algorithms such as ADMM; e.g., see~\cite{jalali2012maxnorm,ames2012clustering}. Similarly to Algorithm~\ref{alg:mle}, this algorithm is not strictly tied to the planted clustering model as it can also be considered as a relaxation of Correlation Clustering or Balanced Cut. In the case where the values of $ r $ and $ K $ are unknown, one may replace the hard constraints~\eqref{eq:norm} and~\eqref{eq:linear} with an appropriately weighted objective function; cf.~\cite{Chen12}.

The following theorem provides a sufficient condition for the success of the convexified MLE. See Table~\ref{tab:four} for its implications for specific planted models.
\begin{theorem}[Easy] \label{thm:CVX}
Under the planted clustering model with $p>q$, there exists a universal constant $ c_1$ such that with probability at least $ 1-n^{-10} $, the optimal solution to the problem~\eqref{eq:CVX}--\eqref{eq:linear} is unique and equal to $ Y^* $ provided
\begin{align}
 K^2 (p-q)^2 \ge c_1  \left[ p (1-q) K\log n + q(1-q)n \right]. \label{eq:Condition}
 \end{align}
\end{theorem}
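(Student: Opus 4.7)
The plan is to prove Theorem~\ref{thm:CVX} by constructing an explicit dual certificate for the SDP~\eqref{eq:CVX}--\eqref{eq:linear}. Since the objective is linear, $Y^\ast$ is the unique maximizer if one can exhibit $\lambda \ge 0$ (dual for the nuclear norm constraint), $\eta \in \mathbb{R}$ (dual for the sum constraint), entrywise nonnegative matrices $\alpha,\beta \in \mathbb{R}^{n\times n}$ (duals for the box constraints, satisfying complementary slackness $\alpha_{ij} Y^\ast_{ij} = 0$ and $\beta_{ij}(1 - Y^\ast_{ij}) = 0$), and a subgradient $W \in \partial \|Y^\ast\|_\ast$ such that
\begin{align*}
A = \lambda W + \eta \mathbf{1}\mathbf{1}^\top + \alpha - \beta,
\end{align*}
together with strict versions of the slack and spectral conditions to guarantee uniqueness.

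Writing $U = [\mathbf{1}_{C_1^\ast},\ldots,\mathbf{1}_{C_r^\ast}]/\sqrt{K}$ so that $Y^\ast = K U U^\top$, the subgradient set is $\partial \|Y^\ast\|_\ast = \{UU^\top + W_0 : U^\top W_0 = 0,\ \|W_0\|_{\mathrm{op}} \le 1\}$. I would pick $\eta = q$ and $\lambda$ of order $\sqrt{q(1-q)n}$, and use the decomposition $A - q\,\mathbf{1}\mathbf{1}^\top = (p-q) Y^\ast + (A - \mathbb{E}A) + \text{(diagonal correction)}$ to rewrite the stationarity equation as
\begin{align*}
\lambda W_0 + \alpha - \beta = \bigl((p-q)K - \lambda\bigr) UU^\top + (A - \mathbb{E}A).
\end{align*}
The entries of $\alpha,\beta$ would be defined to enforce the sign constraints (with $\alpha$ supported on cross-cluster pairs and $\beta$ on in-cluster pairs) while $W_0$ is obtained as the projection of the residual onto $T^\perp$, tuned so that $\|W_0\|_{\mathrm{op}} \le 1$.

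Verification then splits into two concentration estimates. \emph{Entrywise feasibility} $\alpha,\beta \ge 0$ requires that $A_{ij} - \eta - \lambda W_{ij}$ have the correct sign on every pair; after substituting $W_{ij} = (UU^\top)_{ij} + (W_0)_{ij}$ and union-bounding over $\Theta(n^2)$ entries using Chernoff bounds for $\mathrm{Bern}(p)$ and $\mathrm{Bern}(q)$ sums, this reduces to the gap condition $K(p-q) \gtrsim \sqrt{p(1-q) K \log n}$, which is the first term of~\eqref{eq:Condition}. \emph{Spectral feasibility} $\|W_0\|_{\mathrm{op}} \le 1$ requires $\|P_{T^\perp}(A - \mathbb{E}A)\|_{\mathrm{op}} \lesssim \lambda$; matrix Bernstein applied to the centered Bernoulli matrix $A - \mathbb{E}A$ gives a bound of order $\sqrt{q(1-q)n}$ with probability at least $1 - n^{-10}$, yielding the second term $q(1-q)n$ of~\eqref{eq:Condition}.

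The main obstacle is the coupling between $\alpha,\beta$ and $W_0$: all three depend on the same random matrix $A$, and the entrywise correction $\alpha - \beta$ implicitly enters the spectral constraint on $W_0$. The standard remedy is to first write down a deterministic reference certificate tailored to $\mathbb{E}A$, and then absorb the stochastic correction $A - \mathbb{E}A$ by controlling its operator norm on $T^\perp$ and its entrywise fluctuation separately. The two scales in~\eqref{eq:Condition} correspond to exactly these two sources of noise: the global spectral fluctuation $\sqrt{q(1-q)n}$, which the signal $(p-q)K$ must dominate, and the per-entry Bernoulli fluctuation $\sqrt{p(1-q)K \log n}$ on the in-cluster entries after union bounding. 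Uniqueness of $Y^\ast$ follows from a strict version of the certificate (strict positivity of $\alpha,\beta$ on the relevant entries plus $\|W_0\|_{\mathrm{op}} < 1$), which holds under the same conditions with a slightly larger constant absorbed into $c_1$.
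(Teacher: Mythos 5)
The paper proves Theorem~\ref{thm:CVX} with a purely primal argument, not a dual certificate: starting from $\langle Y^\ast-Y,A\rangle = \tfrac{\nu}{2}\|Y^\ast-Y\|_1 + \langle A-\bar A,Y^\ast-Y\rangle$ (where $\nu=p-q$, $\bar A = qJ + (p-q)Y^\ast$, and the $\ell_1$ identity uses that $Y$ obeys the box and sum constraints), it invokes the subgradient inequality $\|Y\|_\ast - \|Y^\ast\|_\ast \ge \langle UV^\top + \mathcal{P}_{T^\perp}(W), Y-Y^\ast\rangle$ with $W = 8(A-\bar A)/(\nu\sqrt{K_LK_R})$, rearranges, and applies H\"older. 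The only randomness enters through exactly the two concentration facts in Proposition~\ref{lem:basic}: $\|\mathcal{P}_T(A-\bar A)\|_\infty \le \nu/8$ and $\|A-\bar A\| \le \nu\sqrt{K_LK_R}/8$. Your proposal takes the genuinely different dual-certificate/KKT route, while correctly identifying the same two concentration estimates as the key inputs. The two routes are morally dual to each other, and the primal argument buys a cleaner proof precisely because it never has to verify entrywise sign conditions for dual slack variables.

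That said, your sketch glosses over a genuine obstruction that would require real work. With the natural choice $\lambda W_0 = \mathcal{P}_{T^\perp}(A-\eta J)$, the sign conditions become $(\mathcal{P}_T(A-\eta J))_{ij}\ge \lambda/K$ on $\mathcal{R}$ and $(\mathcal{P}_T(A-\eta J))_{ij}\le 0$ on $\mathcal{R}^c$. Since $(\mathcal{P}_T(A-\eta J))_{ij} \approx q-\eta + \text{fluctuation}$ for cross-cluster pairs, the second condition forces $\eta - q \gtrsim \sqrt{p(1-q)\log n/K}$ strictly; $\eta = q$ as you propose cannot work because the block-averaged quantity then has mean zero and takes both signs. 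But once $\eta > q$, the spectral constraint $\|W_0\|\le 1$ acquires the term $\mathcal{P}_{T^\perp}\bigl((\eta-q)J\bigr) = (\eta-q)\mathbf{1}_{V_2}\mathbf{1}_{V_2}^\top$, a rank-one matrix of operator norm $(\eta-q)(n-rK)$. When there are isolated nodes ($n>rK$) this term can vastly exceed $K(p-q)$, and the certificate breaks even in regimes where~\eqref{eq:Condition} holds comfortably (for instance $p,q,p-q=\Theta(1)$, $K=\Theta(\sqrt{n})$, $n-rK=\Theta(n)$). The repair is to notice that the block $V_2\times V_2$ lies entirely in $T^\perp$, so one may add the rank-one correction $(\eta-q)\mathbf{1}_{V_2}\mathbf{1}_{V_2}^\top/\lambda$ to $W_0$ — this cancels the offending term while preserving $W_0\in T^\perp$ and only weakening the $V_2\times V_2$ sign condition, which is trivially $0\le 0$. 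Your plan neither identifies this obstruction nor the fix; it is the sort of detail that the paper's primal H\"older argument is designed to avoid. A smaller point: your attribution of the first term of~\eqref{eq:Condition} to entrywise Chernoff and the second to spectral concentration is not quite how the paper's bounds split — the spectral bound (Lemma~\ref{thm:AConcentration}) already produces $\sqrt{p(1-q)K\log n + q(1-q)n}$ via an in-block/off-block decomposition, so both terms appear there, while the entrywise bound contributes only the first.
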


When $ r=1 $, we refer to the regime where the condition~\eqref{eq:Condition} holds and~\eqref{eq:ConditionSimple} below fails as the \emph{easy regime}. When $ r>1 $, the easy regime is where~\eqref{eq:Condition} holds and~\eqref{eq:ConditionSimple} or~\eqref{eq:ConditionSimple2} below fails.

If $ p/q=\Theta(1) $,
it is easy to see that the smallest possible cluster size allowed by~\eqref{eq:Condition} is $ K =\Theta(\sqrt{n}) $ and the largest number of clusters is $ r=\Theta(\sqrt{n}) $, both of which are achieved when $ p,q,|p-q|=\Theta(1) $. This generalizes the tractability threshold $ K=\Omega(\sqrt{n}) $ of the classic planted clique problem. If $ q=o(p)$ (we call it the high SNR setting), the condition~\eqref{eq:Condition} becomes to $Kp \gtrsim \max \{ \log n, \sqrt{qn} \}$. In this case, it is possible to go beyond the $ \sqrt{n} $ limit on the cluster size. In particular, when $p=\Theta(1)$, the smallest possible cluster size is $K = \Theta(\log n\vee \sqrt{qn} )$, which can be much smaller than $\sqrt{n}$.

\begin{remark}
Theorem~\ref{thm:CVX} immediately implies guarantees for other tighter convex relaxations. Define the sets $ \mathcal{B} := \{Y\vert Eq.\eqref{eq:linear} \text{ holds} \} $ and
\begin{align*}
\mathcal{S}_1 &:= \{Y \;|\; \|Y\|_* \le rK\},\\
\mathcal{S}_2 &:= \{Y \;|\; Y \succeq 0; \text{trace}(Y) = rK\}.
\end{align*}
The constraint in Algorithm~\ref{alg:convex} corresponds to $ Y \in \mathcal{S}_1 \cap \mathcal{B} $, while
$ Y \in \mathcal{S}_2 \cap\mathcal{B}$ is the constraint in the standard SDP relaxation. Clearly
$ \left(\mathcal{S}_1 \cap \mathcal{B}\right)
\supseteq \left(\mathcal{S}_2 \cap \mathcal{B}\right)
\supseteq \mathcal{Y}. $
Therefore, if we replace the constraint~\eqref{eq:norm} with $ Y \in \mathcal{S}_2 $, we obtain a \emph{tighter} relaxation of the MLE, and  Theorem~\ref{thm:CVX} guarantees that it also succeeds to recover $ Y^* $ under the condition~\eqref{eq:Condition}. The same is true if we consider other tighter relaxations, such as those involving the triangle inequalities~\cite{mathieu}, the row-wise constraints $ \sum_{j}Y_{ij}\le K,\forall i $~\cite{ames2012clustering}, the max norm~\cite{jalali2012maxnorm} or the Fantope constraint~\cite{vu2013fantope}. For the purpose of this work, these variants of the convex formulation make no significant difference, and we choose to focus on~\eqref{eq:CVX}--\eqref{eq:linear} for generality.
\end{remark}

\subsubsection*{Converse for the trace norm relaxation approach} We have a partial converse to the achievability result in Theorem~\ref{thm:CVX}. The following theorem characterizes the conditions under which the trace norm relaxation~\eqref{eq:CVX}--\eqref{eq:linear} provably fails with high probability;
we suspect the standard SDP relaxation with the constraint $ Y \in \mathcal{S}_2 \cap\mathcal{B}$ also fails with high probability under the same conditions, but we do not have a proof.

\begin{theorem}[Easy, Converse]\label{thm:cvx_converse}
Under the planted clustering model with $p>q$, for any constant $ 1>\epsilon_0> 0 $, there exist positive universal constants $ c_1, c_2$ for which the following holds. Suppose $ c_1 \log n \le K \le \frac{n}{2}$, $q \ge c_1 \frac{\log n}{n}$  and  $p \le 1- \epsilon_0$.  If
\[K^2 (p-q)^2  \le c_2  ( Kp + q n), \]
then with probability at least $ 1-n^{-10} $, $Y^\ast$ is not an optimal solution of the program~\eqref{eq:CVX}--\eqref{eq:linear}.
\end{theorem}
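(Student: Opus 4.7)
The strategy is to show that $Y^\ast$ is not a (local, hence global) maximum of \eqref{eq:CVX}--\eqref{eq:linear} by exhibiting a feasible first-order direction of improvement. Let $U \in \mathbb{R}^{n \times r}$ be the matrix whose $m$-th column is $\mathbf{1}_{C_m^\ast}/\sqrt{K}$, so $Y^\ast = K\, U U^\top$, and let $P_{T^\perp} X := (I - UU^\top) X (I - UU^\top)$ be the orthogonal projector onto the complement of the nuclear-norm tangent space $T$ at $Y^\ast$. Linearizing the constraints \eqref{eq:norm}--\eqref{eq:linear}, a direction $H$ keeps $Y^\ast + \epsilon H$ feasible for all small $\epsilon > 0$ provided: (i) $\sum_{ij} H_{ij} = 0$; (ii) $H_{ij} \le 0$ on $\{Y^\ast_{ij} = 1\}$ and $H_{ij} \ge 0$ on $\{Y^\ast_{ij} = 0\}$; and (iii) $\langle U U^\top, H\rangle + \|P_{T^\perp} H\|_\ast \le 0$, the latter being the directional derivative of $\|\cdot\|_\ast$ at $Y^\ast$. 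Strict suboptimality then follows from $\langle A, H\rangle > 0$.

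I would take $H = -\frac{c}{rK}\,Y^\ast + \Delta$ with a small $c > 0$ and $\Delta \ge 0$ supported on off-cluster entries (so (ii) is automatic), satisfying $\sum_{ij}\Delta_{ij} = cK$ and $\|P_{T^\perp}\Delta\|_\ast \le c$; then (i) and (iii) hold. Decomposing $A = qJ + (p-q)Y^\ast + \tilde A$ with $\tilde A := A - \mathbb{E}[A \mid Y^\ast]$ the zero-mean noise and using $Y^\ast \in T$ together with $\sum_{ij} H_{ij} = 0$ gives
\[
 \langle A, H\rangle = -c(p-q)K + \langle \tilde A, \Delta\rangle + \text{(lower-order fluctuations)},
\]
so the task is to choose $\Delta$ in the above class with $\langle \tilde A, \Delta\rangle > c(p-q)K$. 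I would build such $\Delta$ from a few low-rank bilinear pieces, namely
\[
 \Delta = \gamma_0\, v_0 v_0^\top + \gamma_1 (v_1 w_1^\top + w_1 v_1^\top) + \gamma_2 (v_2 w_2^\top + w_2 v_2^\top),
\]
where $v_0 = \mathbf{1}_{V_2}/\sqrt{|V_2|}$ absorbs the sum constraint, the pair $(v_1, w_1)$ consists of nonnegative unit vectors on disjoint halves of $V_2$ chosen to approach the restricted bilinear maximum $\max\,\{u^\top \tilde A_{V_2 V_2}\, u' : u, u' \ge 0,\, \text{supp}(u) \cap \text{supp}(u') = \emptyset,\, \|u\|=\|u'\|=1\} \asymp \sqrt{q|V_2|}$, and $(v_2, w_2)$ is the analogous pair on disjoint halves of a single planted cluster, achieving $\asymp \sqrt{pK}$. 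These spectral lower bounds follow from standard $\epsilon$-net / second-moment arguments for centered Bernoulli matrices, with probability at least $1 - n^{-10}$ under the hypotheses $q \ge c_1 \log n / n$, $K \ge c_1 \log n$, and $p \le 1-\epsilon_0$. Tuning $\gamma_0, \gamma_1, \gamma_2$ so that both the sum constraint and (iii) are tight yields $\langle \tilde A, \Delta\rangle \gtrsim c(\sqrt{qn} + \sqrt{pK})$; combined with the signal loss $-c(p-q)K$, $\langle A, H\rangle > 0$ holds exactly under the converse hypothesis $K(p-q) \lesssim \sqrt{qn} + \sqrt{pK}$, i.e., $K^2(p-q)^2 \le c_2(Kp + qn)$.

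The main obstacle is the interplay between the one-sided sign constraint (ii) and the spectral lower bound: the top eigenvectors of a centered random matrix are generically sign-indefinite, so one cannot directly set $\Delta \propto vv^\top$ with $v$ the leading eigenvector of $\tilde A$. Working with bipartite bilinear forms $u^\top \tilde A u'$ over nonnegative vectors with disjoint supports is what sidesteps this difficulty; this restricted quantity still attains the operator-norm scale up to constants, essentially by averaging over random balanced partitions and applying Chernoff/Bernstein inequalities to the resulting bilinear sums. Secondary technical tasks are: controlling the $P_T$-component of $\Delta$, which is rank-$O(r)$ and contributes an additional $O(r/n)$ relative loss that is negligible under the stated regime; bounding the fluctuation $\langle \tilde A, Y^\ast\rangle$ in the lower-order term by Bernstein (of order $\sqrt{prK^2}$, subsumed by the main spectral contribution); and standard bookkeeping for the diagonal entries.
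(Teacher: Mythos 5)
The paper establishes this theorem by a dual (KKT) argument: assuming $Y^\ast$ is optimal, the Lagrangian stationarity condition $A - \lambda(UU^\top + W) - \eta J + H = 0$ together with the sign constraints on $H$ is block-averaged to show $\lambda \lesssim K(p-q) + \sqrt{\log n}$, while a Frobenius lower bound over $\mathcal{R}^c := \operatorname{supp}(Y^\ast)^c$ — using that at the $\approx qn^2$ positions with $A_{ij}=1$ the residual $1 - \eta + H_{ij}$ carries a positive bias $1-\eta \geq \tfrac{7}{8}\epsilon_0$ that $H_{ij}\geq 0$ cannot cancel — forces $\lambda^2 \gtrsim qn$; these two bounds are contradictory under the hypothesis. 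Your approach is genuinely different in structure: you try to exhibit an explicit primal direction $H$ of feasible first-order improvement. The two are of course equivalent by convex duality, and your linearized feasibility conditions (i)--(iii) are stated correctly.

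However, the concrete construction of $\Delta$ has two gaps, each fatal as written. First, the piece $\gamma_2(v_2 w_2^\top + w_2 v_2^\top)$ with $v_2, w_2$ supported on disjoint halves of a single planted cluster $C_m^\ast$ has its support in $C_m^\ast \times C_m^\ast \subseteq \mathcal{R}$, exactly where constraint (ii) forbids the positive part of $H$ to live; so this piece cannot be included in $\Delta \geq 0$ at all. Moving $v_2$ and $w_2$ to different clusters $C_k^\ast, C_l^\ast$ ($k \neq l$) would keep them in $\mathcal{R}^c$, but then the relevant entries of $\tilde A$ have variance $q(1-q)$, so the bilinear form scales as $\sqrt{qK}$, not $\sqrt{pK}$. (The paper avoids this entirely: its opening paragraph shows that under $K\leq n/2$, $qn\geq c_1\log n$, the hypothesis already forces $Kp\leq qn$, so only the $\sqrt{qn}$ scale need be attacked. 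But your proposal targets $\sqrt{pK}$ directly via within-cluster noise, which is not available to $\Delta$.) Second, and more seriously, all three vectors $v_0, v_1, w_1$ that carry your $\sqrt{qn}$ contribution are supported on the set $V_2$ of isolated nodes. The theorem permits $rK=n$, i.e.\ $V_2=\emptyset$, which is precisely the planted partition / stochastic block model case; there your construction degenerates to $\Delta = 0$. More generally, when $|V_2| = o(n)$ the $(v_1,w_1)$ piece delivers only $\sqrt{q|V_2|} = o(\sqrt{qn})$. Repairing this requires placing mass on the cross-cluster region of $V_1 \times V_1$, e.g.\ $u$ on $\bigcup_{k\ \mathrm{odd}} C_k^\ast$ and $v$ on $\bigcup_{k\ \mathrm{even}} C_k^\ast$, where $\mathcal{P}_T\Delta \neq 0$ and must be bounded explicitly — a step your outline does not contain.

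Two smaller points: tuning the $\gamma$'s so that (iii) holds with equality gives directional derivative exactly zero, which by convexity does not prevent $\|Y^\ast + \epsilon H\|_\ast$ from exceeding $rK$; you need strict slack, easily arranged by downscaling $\Delta$. And the lower bound on the restricted bilinear form $\max_{u,u'\geq 0,\ \mathrm{disjoint\ supp}} u^\top \tilde A u' \gtrsim \sqrt{q|V_2|}$ is true (take $u' = e_j$ for a typical column $j$ and $u$ proportional to the positive part of $\tilde A_{\cdot j}$, whose $\ell_2$ norm is $\asymp (1-q)\sqrt{q|V_2|}$), but "$\epsilon$-net arguments" give upper bounds, not lower bounds, so the justification you cite is not the right one.
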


Theorem~\ref{thm:cvx_converse} proves the failure of our trace norm relaxation that has access to the \emph{exact} number and sizes of the clusters. Consequently, replacing the constraints~\eqref{eq:norm} and~\eqref{eq:linear} with a Lagrangian penalty term in the objective would not help for \emph{any} value of the Lagrangian multipliers.
Under the assumptions of Theorems~\ref{thm:CVX} and~\ref{thm:cvx_converse},  by ignoring log factors,  the \emph{sufficient and necessary} condition for the success of our convexified MLE is
\begin{equation}\label{eq:poly_boundary}
 \frac{p}{K(p-q)^2 } + \frac{q n}{K^2(p-q)^2} \lesssom 1,
\end{equation}
whereas the success condition~\eqref{eq:hardsimple} for the MLE simplifies to
\begin{align*}
\frac{p}{K(p-q)^2} \lesssom 1.
\end{align*}
We see that the convexified MLE is statistically sub-optimal due to the extra second term in~\eqref{eq:poly_boundary}.
This term
is responsible for the $ K=\Omega(\sqrt{n}) $  threshold on the cluster size for the tractability of planted clique.
The term has an interesting interpretation. Let $\widetilde{A}:=A-q\mathbf{1}\mathbf{1}^\top + qI$ be the centered adjacency matrix. The matrix $E:= (Y-\mathbf{1}\mathbf{1}^\top) \circ (\widetilde{A}- \mathbb{E}\widetilde{A})$,\footnote{Here $ \circ $ denotes the element-wise product.} i.e., the deviation $ \widetilde{A} -\mathbb{E}\widetilde{A}$ restricted to the inter-cluster node pairs, can be viewed as the ``cross-cluster noise matrix''. Note that the squared largest singular value of the matrix $\mathbb{E}\widetilde{A}=(p-q)Y^\ast$ is $K^2(p-q)^2$, whereas the squared largest singular value of $ E $ concentrates around $\Theta(qn)$ (see e.g., \cite{Chattergee12}). Therefore, the second term $ \frac{q n}{K^2(p-q)^2} $ in~\eqref{eq:poly_boundary} is the ``spectral noise-to-signal ratio'' that determines the performance of the convexified MLE. In fact, our proofs for Theorems~\ref{thm:CVX} and~\ref{thm:cvx_converse} build on this intuition.

\paragraph*{Comparison to previous work} We refer to~\cite{Chen12} for a survey of the performance of state-of-the-art polynomial-time algorithms under various planted models. Theorem~\ref{thm:CVX} matches and in many cases improves upon existing results in terms of the scaling. For example, for planted partition, the previous best results are $ (p-q)^2\gtrsim p(K\log^4n+n)/K^2 $ in~\cite{Chen12} and $ (p-q)^2 \gtrsim pn\polylog n/K^2 $ in~\cite{anandkumar2013tensormixed}. Theorem~\ref{thm:CVX} removes some extra $ \log n $ factors, and is also order-wise better when $ q=o(p) $ (the high SNR case) or $ 1-q=o(1) $. For planted $ r $-disjoint-clique, existing results require $ 1-q $ to be $ \Omega((rn+rK\log n)/K^2 )$~\cite{McSherry01}, $ \Omega(\sqrt{n}/K) $~\cite{ames2010kclique} or $ \Omega((n+K\log^4n)/K^2) $~\cite{Chen12}. We improve them to $ \Omega((n+K\log n)/K^2) $.

Our converse result in Theorem~\ref{thm:cvx_converse} is inspired by, and improves upon, the recent work in~\cite{vinayak2013sharp}, which focuses on the special case $p>1/2>q$, and considers a convex relaxation approach that is equivalent to our relaxation~\eqref{eq:CVX}--\eqref{eq:linear} but without the additional equality constraint in~\eqref{eq:linear}. The approach is shown to fail when $K^2(p-\frac{1}{2})^2 \lesssim qn$. Our result is stronger in the sense that it applies to a tighter relaxation and a larger region of the parameter space.

\subsubsection*{Limits of polynomial-time algorithms}
By comparing the recovery limit established in Theorems~\ref{thm:Impossible} and~\ref{thm:MLE} with the performance limit of our convex method established in Theorem~\ref{thm:CVX},
we get two strikingly different observations. On one hand, if $pK \log n =\Omega(nq)$ and $\log K =\Omega( \log n)$,
the recovery limit and performance limit of our convex method coincide up to constant factors at $K(p-q)^2 \asymp p(1-q) \log n$.
Thus, the convex relaxation is tight and the hard regime disappears up to constants, even though the hard regime may still exist when constant factors are concerned~\cite{Mossel12,Decelle11}.
In this case, we get a computationally efficient and statistically order-optimal estimator.
On the other hand, if $pK\log n=o(nq)$, there exists a substantial gap between  the information limit and performance limit of  our convex method.
We conjecture that no polynomial-time algorithm has order-wise better statistical performance than the convexified MLE and succeeds significantly beyond the condition~\eqref{eq:Condition}.
\begin{conjecture} \label{ConjectureHardness}
For any constant $ \epsilon >0 $,  there is no algorithm with running time polynomial in $ n $ that, for all $ n $ and with probability at least $ 1/2 $, outputs the true $Y^\ast$ of the planted clustering problem with $ p>q $ and
\begin{align}
(p-q)^2 K^2 \le   n^{-\epsilon} \left( K p(1-p) + q(1-q)n \right). \label{ConjectureHardnessCondition}
\end{align}
\end{conjecture}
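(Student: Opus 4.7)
The plan is to derive the conjecture from the Planted Clique Hardness Hypothesis (PCH) via a randomized polynomial-time reduction from planted clique detection to planted clustering recovery, in the spirit of \cite{berthet2013lowerSparsePCA,ma2013submatrix,HajekWuXu14} but extended from the rank-one to the high-rank setting. Concretely, under PCH---that no polynomial-time algorithm distinguishes $G(N,1/2)$ from the same graph with a planted clique of size $k = N^{1/2-\delta}$ for any $\delta>0$---a polynomial-time algorithm $\mathcal{A}$ recovering $Y^\ast$ in the hard regime \eqref{ConjectureHardnessCondition} would compose with the reduction to yield a planted clique distinguisher, producing a contradiction.

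I would build the reduction in two layers. The \emph{first layer} extends the rank-one planted densest subgraph reductions of \cite{ma2013submatrix,HajekWuXu14}: given an input graph $G$ on $N$ vertices, an independent pairwise randomization maps each entry $G_{ij}$ into a $\{0,1\}$ output whose marginal is $\mathrm{Bern}(p)$ when both endpoints lie in the planted clique and $\mathrm{Bern}(q)$ otherwise. A short calculation identifies the realizable $(p,q)$ pairs (essentially $p \ge q \ge p/2$, with sparse or heterophilic cases covered by a symmetric variant that swaps the roles of edges and non-edges) and bounds the total-variation error of the coupling by $o(1)$. The \emph{second layer} promotes rank one to arbitrary $r$: one runs the rank-one reduction on disjoint $N$-vertex subsets to create $r$ candidate cluster blocks, then samples independent $\mathrm{Bern}(q)$ edges across the subsets, and finally pads with $n - rK$ isolated vertices carrying $\mathrm{Bern}(q)$ edges. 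The parameter translation is natural: PCH at $k \asymp N^{1/2 - \delta}$, combined with the rank-one boundary $k^2 \asymp N$, maps to planted clustering with $K^2 (p-q)^2 \asymp n^{-\epsilon}\, qn$, tracing the spectral-barrier term in \eqref{ConjectureHardnessCondition}; the complementary degree-fluctuation term $Kp(1-p)$ would arise from a parallel reduction specialized to the sparse regime.

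The main obstacle will be that \eqref{ConjectureHardnessCondition} unifies several qualitatively distinct sub-regimes---dense versus sparse graphs, bounded versus growing $r$, $q = o(p)$ versus $q = \Theta(p)$---and a single reduction gadget is unlikely to cover all of them; a family of reductions together with a case-by-case matching argument is probably required. A secondary obstacle is the upgrade from the detection-flavored PCH to the recovery-flavored conjecture: one must show that any algorithm outputting $Y^\ast$ with non-vanishing probability induces a distinguisher with non-negligible advantage, for instance by checking whether the inferred cluster structure passes a simple edge-count consistency test that fails under the $G(n,1/2)$ null. Finally, the multi-cluster overlay introduces cross-subset correlations that must be controlled delicately to preserve the $o(1)$ total-variation guarantee uniformly, particularly when $p$ approaches $1$ or $q$ approaches $0$; specialized gadgets or a more refined coupling than suffices in the rank-one case will likely be needed to obtain the sharp polynomial slack $n^{-\epsilon}$ promised by the conjecture rather than a weaker polylogarithmic gap.
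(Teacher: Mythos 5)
The statement you are attempting to prove is a \emph{conjecture} in the paper: the authors explicitly write that ``a rigorous proof of Conjecture~\ref{ConjectureHardness} seems difficult with current techniques'' and supply only a list of supporting heuristics (the spectral barrier, the planted-clique case $p=2q=1$, the single-cluster result of \cite{HajekWuXu14}, and the non-rigorous statistical-physics predictions of \cite{Decelle11}). There is no proof in the paper to compare against, and your submission is also not a proof: it is a research plan that itself repeatedly flags unresolved obstacles (``is probably required,'' ``will likely be needed''). So the honest assessment is that neither you nor the paper establish the conjecture, and you should not present this as a proof.

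Beyond that, there is a concrete quantitative error in your sketch that the paper itself warns about. You claim the rank-one planted-clique reduction, via $k\asymp N^{1/2-\delta}$ and ``the rank-one boundary $k^2\asymp N$,'' maps to the condition $K^2(p-q)^2\asymp n^{-\epsilon}\,qn$ --- i.e., to the full spectral-barrier term in~\eqref{ConjectureHardnessCondition}. This is not what the known reductions give. The paper's Section~\ref{sec:easy} records that the planted-clique reduction of \cite{ma2013submatrix,HajekWuXu14} establishes hardness only for $\beta<\alpha/4+1/2$ (in the scaling $p=2q=\Theta(n^{-\alpha})$, $K=\Theta(n^\beta)$), whereas the conjectured computational limit is at $\beta=\alpha/2+1/2$. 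The reason is that the randomized gadgets used in these reductions necessarily dilute the signal: a $k$-clique in $G(N,1/2)$ must be blown up to an $n$-vertex instance with sparser edge densities, which costs a polynomial factor and lands you strictly short of the spectral barrier. A reduction of the kind you describe therefore cannot, even in the rank-one case, reach the full hard regime; adding disjoint blocks and $\mathrm{Bern}(q)$ padding for the multi-cluster case does not close that gap, it only preserves it. Separately, your second layer (running independent rank-one reductions on disjoint $N$-vertex blocks) produces an instance in which the planted structure across blocks is independent, which makes the $r$-block instance \emph{easier} to distinguish than the single-block one, so you would need $r$ independent planted-clique instances to be simultaneously hard --- a stronger hypothesis than PCH that you have not justified. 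Until you have a reduction idea that closes the $\alpha/4+1/2$ versus $\alpha/2+1/2$ gap (which, per the paper, is an open problem), the proposal does not prove the conjecture even conditionally.
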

If the conjecture is true, then in the asymptotic regime $p=2q=n^{-\alpha}$ and $K=n^\beta$,
the \emph{computational limit} for the cluster recovery is given by $\beta= \frac{\alpha}{2} + \frac{1}{2}$, i.e.,
the boundary between the green regime and red regime in Fig.~\ref{fig:summary}.

A rigorous proof of Conjecture~\ref{ConjectureHardness} seems difficult with current techniques.
There are other possible convex formulations for planted clustering. The space of possible polynomial-time algorithms is even larger. It is impossible for us to study each of them separately and obtain a converse result as in Theorem~\ref{thm:cvx_converse}.
There are however several evidences that support the conjecture:
\begin{compactitem}
\item The special case with $ p=2q=1$ corresponds to the $ K=o(\sqrt{n}) $ regime for the classical Planted Clique problem, which is conjectured to be computationally hard~\cite{alon2007testing,rossman2010clique,Feldman2012statAlg}, and was used as an assumption for proving other hardness results~\cite{Hazan2011Nash,Juel00cliqueCrypto,koiran2012rip}. Conjecture~\ref{ConjectureHardness} can be considered as \emph{a generalization of the Planted Clique conjecture} to the setting with multiple clusters and general values of $ p $ and $ q $, and may be used to study the computational hardness of other problems~\cite{chen2013incoherence_arxiv}.
\item It is shown in  \cite{HajekWuXu14} that for the special setting with a single cluster, no polynomial-time algorithm can reliably recover
the planted cluster if $\beta< \alpha/4+1/2$ conditioned on the planted clique hardness hypothesis. Here the planted clique hardness hypothesis refers to the statement that for any fixed constants $\gamma > 0$ and $\delta > 0$,
there exist no randomized polynomial-time tests to distinguish an Erd\H{o}s-R\'enyi random graph $\calG(n, \gamma)$ and a planted
clique model which is obtained by adding edges to $K=n^{1/2-\delta}$
vertices chosen uniformly from $\calG(n, \gamma)$ to form a clique.

\item As discussed earlier,
 if~\eqref{ConjectureHardnessCondition} holds, then the graph spectrum is dominated by noise and fails to reveal the underlying cluster structure. The condition~\eqref{ConjectureHardnessCondition} therefore represents a ``spectral barrier'' for clustering.
The work in~\cite{nadakuditi} uses a similar spectral barrier argument to prove the failure of a large class of algorithms that rely on the graph spectrum; our Theorem~\ref{thm:cvx_converse} shows that the convexified MLE fails for a similar reason.
\item In the sparse graph case with $p,q=O(1/n)$, it is argued in~\cite{Decelle11}, using non-rigorous but deep arguments from statistical physics, that it is intractable to achieve the correlated recovery under Condition~\eqref{ConjectureHardnessCondition}.
\end{compactitem}

\subsection{The Simple Regime: A Counting Algorithm}\label{sec:trivial}
In this subsection, we consider a simple recovery procedure in Algorithm~\ref{alg:counting}, which is based on counting node degrees and common neighbors.
\begin{algorithm}[h!]
\caption{A Simple Counting Algorithm}\label{alg:counting}
\begin{compactenum}
\item (Identify isolated nodes) For each node $ i $, compute its degree $ d_i $. Declare $ i $ as isolated if $ d_i < \frac{(p-q)K}{2} + qn. $
\item (Identify clusters when $r>1$)
For every pair of non-isolated nodes $i,j$, compute the number of common neighbors $S_{ij}:= \sum_{k:k\neq i,k\neq j} A_{ik}A_{jk}$, and
assign them into the same cluster if $S_{ij}>\frac{(p-q)^2K}{3} + 2Kpq + q^2 (n-2K)$. Declare error if inconsistency found.
\end{compactenum}
\end{algorithm}

We note that steps 1 and~2 of Algorithm~\ref{alg:counting} are considered in~\cite{Kucera95} and~\cite{DyerFrieze89} respectively for the special cases of recovering a single planted clique or two planted clusters. Let $ E $ be the set of edges. It is not hard to see that step 1 runs in time $O(|E|)$ and step 2 runs in time $O(n|E|)$, since each node only needs to look up its local neighborhood up to distance two. It is possible to achieve even smaller expected running time using clever data structures.

The following theorem provides sufficient conditions for the simple counting algorithm to succeed. Compared to the previous work in~\cite{Kucera95,DyerFrieze89}, our results apply to  general values of $ p,q $, $ r $, and $ K $. See Table~\ref{tab:four} for its implications for specific planted models.
\begin{theorem}[Simple] \label{thm:Simple}
For planted clustering with $p>q$, there exist universal constants $ c_1, c_2 $ such that Algorithm~\ref{alg:counting} correctly finds the isolated nodes with probability at least $ 1-2 n^{-1} $ if
\begin{align}
K^2(p-q)^2\ge c_1  [K p(1-q)  + n q(1-q) ]  \log n ,
\label{eq:ConditionSimple}
\end{align}
and finds the clusters with probability at least $1-4n^{-1}$ if further
\begin{align}
K^2(p-q)^4\ge c_2 [K  p^2 (1-q^2) + n q^2 (1-q^2) ] \log n.
\label{eq:ConditionSimple2}
\end{align}
\end{theorem}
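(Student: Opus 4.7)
The plan is to establish both parts of the theorem by applying Bernstein's inequality to sums of independent bounded random variables, followed by a union bound. Both the degree $d_i$ in step~1 and the common-neighbor count $S_{ij}$ in step~2 are, for fixed indices, sums of \emph{independent} Bernoulli variables (the summands involve pairwise disjoint edges of the graph), and the algorithm's thresholds are chosen to sit halfway between the two relevant conditional expectations. Thus the required signal-to-noise ratio is exactly what allows Bernstein-style concentration to fit inside the half-gap.

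For step~1, I would first compute $\mathbb{E}[d_i] = (K-1)p + (n-K)q$ when $i \in V_1$, versus $\mathbb{E}[d_i] = (n-1)q$ when $i \in V_2$, so the two means differ by $(K-1)(p-q)$ and the threshold $\tfrac{(p-q)K}{2}+qn$ is approximately the midpoint. The variance of $d_i$ is bounded by $K p(1-p) + n q(1-q) \le K p(1-q)+ n q(1-q)$ (using $p \ge q$), and each summand is in $[0,1]$. Bernstein's inequality then gives
\[
\Pr\!\left[|d_i - \mathbb{E} d_i| \ge \tfrac{(p-q)K}{4}\right] \le 2\exp\!\left(-c \cdot \frac{(p-q)^2 K^2}{K p(1-q) + n q(1-q) + (p-q)K}\right).
\]
Under condition~\eqref{eq:ConditionSimple} with $c_1$ sufficiently large, this is at most $n^{-2}$, and a union bound over the $n$ nodes yields the $1 - 2n^{-1}$ guarantee.

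For step~2, I would compute the two conditional means: if $i,j$ lie in the same true cluster,
\[
\mathbb{E}[S_{ij}] = (K-2) p^2 + (n-K) q^2,
\]
whereas if they lie in different clusters (both non-isolated),
\[
\mathbb{E}[S_{ij}] = 2(K-1) p q + (n-2K) q^2,
\]
and the difference is $(K-2)p^2 - 2(K-1)pq + K q^2 = K(p-q)^2 - O(p-q)$. The threshold $\tfrac{(p-q)^2 K}{3} + 2Kpq + q^2(n-2K)$ therefore lies at distance $\Theta((p-q)^2 K)$ from both expectations. Since $\{A_{ik} A_{jk}\}_{k \neq i,j}$ are independent across $k$ and bounded by $1$, Bernstein applies with variance bounded by $K p^2(1-q^2) + n q^2(1-q^2)$; here I would use $p \ge q$ to dominate both $p^2(1-p^2)$ and $p q(1-pq)$ by $p^2(1-q^2)$. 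Setting $t = \Theta((p-q)^2 K)$, condition~\eqref{eq:ConditionSimple2} with $c_2$ large enough makes each pair fail with probability at most $4 n^{-3}$, so a union bound over the at most $\binom{n}{2}$ pairs contributes $2n^{-1}$; combined with the step~1 failure bound, this gives the overall $1 - 4 n^{-1}$ guarantee.

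The arguments are mostly routine concentration plus bookkeeping; the only mildly subtle points are (i) uniformly upper-bounding the variances of both hypotheses by the same expression $K p^2(1-q^2) + n q^2(1-q^2)$ using the inequalities $p^2(1-p^2) \le p^2(1-q^2)$ and $pq(1-pq) \le p^2(1-q^2)$, which hold whenever $p \ge q$, and (ii) verifying that step~2's analysis is unconditional on step~1 in the sense that the events ``some pair is misclassified by $S_{ij}$'' depend only on the edges $\{A_{ik}, A_{jk}\}$ and can be bounded directly, so one can simply take a union bound of the step~1 and step~2 failure events without any conditioning artifacts.
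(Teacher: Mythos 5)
Your proposal is correct and follows essentially the same route as the paper's proof: write $d_i$ and $S_{ij}$ as sums of independent bounded variables (equivalently, sums of two independent binomials), compute the two conditional means, bound the variance uniformly by $Kp(1-q)+nq(1-q)$ and $\Theta\bigl(Kp^2(1-q^2)+nq^2(1-q^2)\bigr)$ respectively using $p\ge q$, apply Bernstein at a deviation level $\Theta\bigl((p-q)K\bigr)$ or $\Theta\bigl((p-q)^2K\bigr)$, and union-bound over nodes/pairs. The only minor discrepancy is that the cross-cluster $S_{ij}$ contains $2(K-1)$ Bernoulli$(pq)$ terms, so the $K$-part of its variance carries an extra factor $2$ (the paper defines $\sigma_2^2 = 2Kp^2(1-q^2)+nq^2(1-q^2)$); this is absorbed into the constant $c_2$ and does not affect correctness.
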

\begin{remark}\label{rem:count_non_neghber}
If $p,q \to 1$ as $n \to \infty$, we can obtain slightly better performance by counting the common \emph{non}-neighbors in Step 2, which succeeds under condition~\eqref{eq:ConditionSimple2} with $p$ and $q$ replaced by $1-p$ and $1-q$, respectively, i.e., the RHS of (18) simplifies to $c_2n(1-q)^2 \log n$.
\end{remark}
In the case with a single clusters $r=1$,  we refer to the regime where  the condition~\eqref{eq:ConditionSimple} holds as the {\it simple regime}; in the case with $r>1$,  the {simple regime} is where both conditions~\eqref{eq:ConditionSimple}
and~\eqref{eq:ConditionSimple2} hold. It is instructive to compare these conditions with the success condition~\eqref{eq:Condition} for the convexified MLE. The condition~\eqref{eq:ConditionSimple} has an additional $ \log n $ factor on the RHS. This means when $ r=1 $ and the only task is to find the isolated nodes, the counting algorithm performs nearly as well as the sophisticated convexified MLE. On the other hand, when $ r>1 $ and one needs to distinguish between different clusters, the convexified MLE order-wise outperforms the counting algorithm whenever $ p/q=\Theta(1) $, as the condition~\eqref{eq:ConditionSimple2} is order-wise more restrictive than~\eqref{eq:Condition}. Nevertheless, when $ p,q,p-q=\Theta(1) $, both algorithms can recover $ \tilde{O}(\sqrt{n}) $ clusters of size $ \tilde{\Omega}(\sqrt{n}) $, making the simple counting algorithm a legitimate candidate in such a setting and a benchmark to which other algorithms can be compared with.

In the high SNR case with $q=o(p)$, the counting algorithm can recover clusters with size much smaller than $ \sqrt{n} $; e.g., if $p=\Theta(1)$ and $q=o(1)$, it only requires  $ K \gtrsim \max \{ \log n , \sqrt{qn\log n} \}$.

\subsubsection*{Converse for the counting algorithm} We have a (nearly-)matching converse to Theorem~\ref{thm:Simple}. The following theorem characterizes when the counting algorithm provably fails.
\begin{theorem}[Simple, Converse]\label{thm:SimpleConverse}
Under the planted clustering model with $p>q$, for any constant $ 0<\epsilon_0<1 $, there exist universal constants $c_1, c_2>0 $ for which the following holds. Suppose $K \le \frac{n}{2}$, $p\le 1-\epsilon_0$, $ q \ge c_1 \log n /n$ and $K p^2   + n q^2  \ge c_1 \log n$.
Algorithm~\ref{alg:counting} fails to correctly identify all the isolated nodes with probability at least $1/4$ if
\begin{align}\label{eq:simple_fail_1}
K^2 (p-q)^2 < c_2  \left[ ( K p + n q  ) \log (rK) + n q\log (n-rK) \right],
\end{align}
and fails to correctly recover all the clusters with probability at least $1/4$ if
\begin{align}\label{eq:simple_fail_2}
K^2 (p-q)^4 < c_2 ( K p^2+ n q^2 ) \log (rK).
\end{align}
\end{theorem}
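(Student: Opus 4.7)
The plan is to attack the two conclusions of the theorem separately by a common template: (i)~compute the mean gap between the relevant test statistic ($d_i$ for Step~1, $S_{ij}$ for Step~2) and its threshold; (ii)~lower bound the probability that a single statistic lands on the wrong side of its threshold by a \emph{Gaussian-type anti-concentration} bound for sums of independent Bernoullis (a reverse Cram\'er / Slud-type inequality); (iii)~turn these single-coordinate failure probabilities into a constant lower bound on the algorithm's overall failure probability by a Paley--Zygmund second-moment argument over a collection of weakly dependent candidates. The side conditions $q \gtrsim \log n/n$, $Kp^2 + nq^2 \gtrsim \log n$ and $p \le 1-\epsilon_0$ are exactly what is needed to keep the variances $\Omega(\log n)$ and the deviations in the Gaussian/moderate-deviation range, so that the lower-tail exponent matches the square of the usual Chernoff upper-tail exponent.

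\textbf{Step 1 (degrees).} The threshold is $\tau:=\tfrac{(p-q)K}{2}+qn$. A cluster vertex $i$ has $d_i$ equal to a sum of independent Bernoullis with mean $(K-1)p+(n-K)q$ and variance $\Theta(Kp(1-p)+nq(1-q))$, so that $\mathbb E[d_i]-\tau\approx\tfrac12(p-q)K$. Anti-concentration gives
\[
\mathbb P\{d_i<\tau\}\ \ge\ c\,\exp\!\Bigl(-c'\,\tfrac{K^2(p-q)^2}{Kp(1-q)+nq(1-q)}\Bigr),
\]
and an analogous bound with variance $\asymp nq(1-q)$ gives the failure probability for an isolated vertex crossing $\tau$ from below. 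For $i\neq j$ the degrees $d_i,d_j$ share only the single edge $A_{ij}$, so conditioning on $A_{ij}\in\{0,1\}$ makes the two tail events conditionally independent while shifting each degree by at most $1$; this only changes the tail probabilities by a constant factor. Paley--Zygmund applied separately to $N_1=\sum_{i\in V_1}\mathbf 1\{d_i<\tau\}$ and $N_2=\sum_{i\in V_2}\mathbf 1\{d_i\ge\tau\}$ therefore yields $\mathbb P\{N_\alpha\ge 1\}\gtrsim 1$ whenever $\mathbb E[N_\alpha]\gtrsim 1$. Writing the latter condition out produces the two pieces $K^2(p-q)^2\lesssim(Kp+nq)\log(rK)$ and $K^2(p-q)^2\lesssim nq\log(n-rK)$ whose sum is exactly \eqref{eq:simple_fail_1}; in either case Step~1 errs with probability at least $1/4$.

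\textbf{Step 2 (common neighbours).} For a same-cluster pair $(i,j)$, let $E_{ij}:=\{S_{ij}<\tau_2\}$. Here $S_{ij}$ is a sum over $k\ne i,j$ of independent summands $A_{ik}A_{jk}$ with $\mathbb E[S_{ij}]=(K-2)p^2+(n-K)q^2$ and variance $\lesssim Kp^2+nq^2$; a direct expansion gives $\mathbb E[S_{ij}]-\tau_2=\tfrac{2}{3}K(p-q)^2-O(p^2)$, and anti-concentration delivers
\[
\mathbb P\{E_{ij}\}\ \ge\ c\,\exp\!\Bigl(-c'\,\tfrac{K^2(p-q)^4}{Kp^2+nq^2}\Bigr).
\]
The dependence across pairs is now delicate: $S_{ij}$ and $S_{ij'}$ share essentially all edges incident to $i$, so a second-moment argument over all $\binom{K}{2}$ within-cluster pairs would collapse. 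I restrict instead to a fixed perfect matching inside each cluster, giving a family $\mathcal M$ of $\asymp rK$ pairwise node-disjoint within-cluster pairs. For two node-disjoint pairs $(i,j),(i',j')\in\mathcal M$, $S_{ij}$ and $S_{i'j'}$ overlap only in the four edges $A_{ii'},A_{ij'},A_{ji'},A_{jj'}$, which contribute at most $2$ to either statistic. Conditioning on those four edges decouples the remaining summands into independent sub-sums, so $\mathbb P(E_{ij}\cap E_{i'j'})\le C\,\mathbb P(E_{ij})\,\mathbb P(E_{i'j'})$ for a universal constant $C$ (shifting the effective threshold by $O(1)$ in the moderate-deviation regime changes the tail by only a constant factor). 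Paley--Zygmund on $N=\sum_{(i,j)\in\mathcal M}\mathbf 1_{E_{ij}}$ then returns $\mathbb P\{N\ge 1\}\gtrsim 1$ as soon as $\mathbb E N\gtrsim 1$, and this rearranges to exactly \eqref{eq:simple_fail_2}.

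\textbf{Main obstacle.} The chief difficulty is quantitative: I need a \emph{lower} tail bound on a Poisson--binomial sum whose exponent matches the square of the standard Chernoff upper-tail exponent, valid throughout the moderate-deviation range $t\lesssim\sigma^2/\max_i p_i$. This is precisely where the hypotheses $q\gtrsim\log n/n$, $Kp^2+nq^2\gtrsim\log n$, and $p\le 1-\epsilon_0$ are used — they guarantee $\sigma^2\gtrsim\log n$ and keep the CLT-type approximation sharp in the relevant range. Once this Cram\'er/Slud-type inequality is in hand, the only other nontrivial step is the $O(1)$ covariance bound across disjoint pairs sketched above, and the remainder is a direct Paley--Zygmund calculation that yields a failure probability bounded below by an absolute constant (which, by adjusting $c_2$, can be taken to be $\ge 1/4$).
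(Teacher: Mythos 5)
Your proposal is correct in outline but follows a genuinely different route from the paper, and the comparison is worth spelling out. The paper's proof (Section~\ref{sec:proofSimpleConverse}) sidesteps the dependence issue entirely by an \emph{exact} decoupling: it partitions $V_1$ (and $V_2$) into two halves $V_{1+},V_{1-}$, writes $d_i = d_{i+} + d_{i-}$ where $d_{i-}$ counts neighbors in the ``other'' half, observes that $\{d_{i-}: i\in V_{1+}\}$ are \emph{mutually independent}, applies the lower-tail bound (Theorem~7.3.1 of Matou\v{s}ek) to the minimizer $i^\ast$ of $d_{i-}$, and then treats the independent complement $d_{i^\ast +}$ by a one-line median-of-a-binomial argument. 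The same halving trick is used for $S_{ij}$ with the matching $\{(2k-1,2k)\}$, which is the paper's version of your ``fixed perfect matching.'' Your route replaces exact decoupling by Paley--Zygmund plus an $O(1)$-edge conditioning argument, which is sound but has a quantitative subtlety you gloss over: the covariance inflation constant $C$ in $\mathbb{P}(E_{ij}\cap E_{i'j'})\le C\,\mathbb{P}(E_{ij})\mathbb{P}(E_{i'j'})$ is $\exp(\Theta(T/\sigma^2))$ per unit threshold shift, where $T$ is the mean gap and $\sigma^2$ the variance, and Paley--Zygmund then yields only $\mathbb{P}(N\ge1)\ge 1/C - o(1)$. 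To get the stated $1/4$ you therefore need $C<4$, which indeed follows because the hypotheses force $\sigma^2 \gtrsim c_1\log n$ and $T\lesssim\sqrt{c_2\,\sigma^2\log n}$, hence $T/\sigma^2\lesssim\sqrt{c_2/c_1}$ -- so ``adjusting $c_2$'' (taking $c_2/c_1$ small) does make $C$ close to $1$, but you should say this explicitly rather than wave at it; the paper's two-half decomposition buys you a clean $3/4\cdot 1/2 = $ constant with no such arithmetic. Both routes rely on the same anti-concentration primitive; yours trades the exact-independence bookkeeping for a second-moment calculation and a tail-ratio estimate, which is a reasonable trade but makes the $1/4$ slightly more delicate to extract.
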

\begin{remark}
Theorem \ref{thm:SimpleConverse} requires a technical condition $K p^2   + n q^2  \ge c_1 \log n$, which is actually not too restrictive.
If $Kp^2+nq^2 =o(\log n)$, then two nodes from the same cluster will have no common neighbor with probability $ (1-p^2)^{K} (1-q^2)^{n-K} \ge \exp[-\Theta(p^2K+q^2(n-K))]  = \exp[-o(\log n)]$, so Algorithm~\ref{alg:counting} cannot succeed with the probability specified in Theorem~\ref{thm:Simple}.
\end{remark}

Apart from some technical conditions,
Theorems~\ref{thm:Simple} and \ref{thm:SimpleConverse} show that the conditions~\eqref{eq:ConditionSimple} and~\eqref{eq:ConditionSimple2} are both sufficient and necessary. In particular, the counting algorithm cannot succeed outside the simple regime, and is indeed strictly weaker in separating different clusters as compared to the convexified MLE. Our proof reveals that the performance of the counting algorithm is limited by a \emph{variance barrier}: The RHS of~\eqref{eq:ConditionSimple} and~\eqref{eq:ConditionSimple2} are associated with the {variance} of the node degrees and common neighbors (i.e., $ d_i $ and $ S_{ij} $ in Algorithm~\ref{alg:counting}), respectively. There exist nodes whose degrees deviate from their expected value on the order of the standard deviation, and if the condition~\eqref{eq:ConditionSimple} does not hold, then the deviation will outweigh the difference between the expected degrees of the isolated nodes and those of the non-isolated nodes. A similar argument applies to the number of common neighbors.

\section{Main Results for Submatrix Localization}
\label{sec:submatrix}

In this section, we turn to the submatrix localization problem, sometimes known as bi-clustering~\cite{balakrishnan2011tradeoff}. We consider the following specific setting, which is defined by six parameters $ n_L,n_R, K_L, K_R, r \in \mathbb{N}$, and $ \mu \in\mathbb{R}_+ $ such that $n_L \ge r K_L$ and $n_R \ge r K_R$. We use the shorthand notation $n:= n_{L} \vee n_{R}$.

\begin{definition}[Submatrix Localization]\label{def:submatrix}
A random matrix $ A\in\mathbb{R}^{n_L\times n_R} $ is generated as follows. Suppose that $ rK_L $ rows of $A$ are partitioned into $r$ disjoint subsets $\left\{ C^\ast_{1},\ldots,C^\ast_{r}\right\} $ of equal size $K_L$, and  $ rK_R $ columns of $A$ are partitioned into $r$ disjoint subsets  $\{D^\ast_{1},\ldots,D^\ast_{r}\}$ of equal size $K_R$. For each $(i,j)$, we have $A_{ij}=\mu + \Delta_{ij}$ if $ (i,j) \in C^\ast_m \times D^\ast_m $ for some $ m\in[r] $ and $ A_{ij} =\Delta_{ij}$ otherwise, where  $ \mu>0 $ is a fixed number and $(\Delta_{ij}) $ are i.i.d. zero-mean sub-Gaussian random variables with parameter $ 1 $.\footnote{A random variable $ X $ is said to be sub-Gaussian with parameter $ 1 $ if $ \mathbb{E}[e^{tX}] \le e^{t^2/2}$ for all $ t\in\mathbb{R} $. } The goal is to recover the locations of the hidden submatrices $ \{ (C^\ast_m, D^\ast_m), m\in[r]\}  $ given the matrix~$A$.
\end{definition}

In the language of bi-clustering, the sets $\left\{ C^\ast_{1},\ldots,C^\ast_{r}\right\} $ are called \emph{left clusters} and $\{D^\ast_{1},\ldots,D^\ast_{r}\}$ are called
\emph{right clusters}. Row (column, resp.)\ indices which do not belong to any cluster are called \emph{isolated} left (right, resp.)\ nodes. One can think of $ A $ as the bipartite affinity matrix between the $ n_L $ left nodes and $n_R $ right nodes, and the goal is to recover the left and right clusters. Similarly as before, we define the \emph{bi-clustering matrix} $Y^{*}\in\left\{ 0,1\right\} ^{n_{L}\times n_{R}}$, where $Y^\ast_{ij}=1$ if and only if $ (i,j) \in C^\ast_m \times D^\ast_m$ for some $m \in [r] $. The problem reduces to recovering $Y^\ast$ given $A$.

As before, all the parameters $ \mu, K_L, K_R, r$  are allowed to scale with $ n_L $ and $ n_R $, and we assume that their values are known. Note that it is without loss of generality to assume the mean of $ A_{ij} $ is zero outside the submatrices and the variance of $ A_{ij} $ is one, because otherwise we can shift and rescale $ A $. The above model generalizes the previous submatrix localization/detection models~\cite{ma2013submatrix,butucea2011submatrix,arias2011anomalous} and bi-clustering models~\cite{kolar2011submatrix,balakrishnan2011tradeoff} which consider the special case with a \emph{single} submatrix (i.e., $ r=1 $).

In the next four subsections, we shall focus on the low-SNR setting $\mu^2 =O(\log n)$ and present theorems establishing the four regimes. These results parallel those for the planted clustering. In the high SNR setting $\mu^2 =\Omega(\log n)$,  the submatrices can be easily identified by naive element-wise thresholding, so we deal with this case separately in the last subsection.

\subsection{The Impossible Regime: Minimax Lower Bounds}
The following theorem gives conditions on $(n_L,n_R,K_L,K_R,\mu)$ under which the minimax error probability is large and thus it is informationally impossible to reliably locate the submatrices. With slight abuse of notation, we use $\mathcal{Y} \subset \left\{ 0,1\right\} ^{n_{L}\times n_{R}} $ to denote the set of all possible bi-clustering matrices corresponding to $r$ left (right, resp.) clusters of equal size $K_L$ ($K_R$, resp.).
\begin{theorem}[Impossible]\label{thm:Impossible_bi}
Under the submatrix localization model, suppose $\{A_{ij} \}$ are Gaussian random variables, $K_L \le n_L/2$, $K_R \le n_R /2$, and
$n_L, n_R \ge 128$. If
\begin{equation}\label{eq:impossible_bi}
\mu^{2}\le \frac{1}{12} \max\left\{ \frac{\log\left(n_{R}-K_{R}\right)}{K_{L}},\frac{\log\left(n_{L}-K_{L}\right)}{K_{R}}\right\} ,
\end{equation}
then
$
\inf_{\hat{Y}}\sup_{  Y^{\ast} \in \mathcal{Y} } \mathbb{P}\left[\hat{Y}\neq Y^{*}\right]\ge\frac{1}{2},
$
where the infimum ranges over all measurable functions of~$A$.
\end{theorem}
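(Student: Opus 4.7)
The plan is to reduce to a multiple hypothesis testing problem and apply Fano's inequality. By the left--right symmetry of the model (transpose $A$ and exchange $L\leftrightarrow R$), I may assume without loss of generality that the maximum in~\eqref{eq:impossible_bi} is attained by $\log(n_R-K_R)/K_L$; the other case is identical with rows and columns swapped.

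I construct a restricted family of hypotheses by perturbing the membership of a single right node. Fix any reference matrix $Y^{(0)}\in\mathcal{Y}$ with clusters $\{(C^\ast_m,D^\ast_m)\}_{m=1}^r$, and pick any $j_0\in D^\ast_1$. For each of the $M:=n_R-K_R$ right indices $j'\notin D^\ast_1$, build $Y^{(j')}\in\mathcal{Y}$ by exchanging the roles of $j_0$ and $j'$: if $j'$ is isolated, replace $j_0$ by $j'$ in $D^\ast_1$ and let $j_0$ become isolated; if $j'\in D^\ast_m$ for some $m\neq 1$, additionally put $j_0$ into $D^\ast_m$ in place of $j'$. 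Each $Y^{(j')}$ lies in $\mathcal{Y}$, yielding $M$ candidate hypotheses.

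Because $A$ is Gaussian with mean matrix $\mu Y^\ast$ and unit variance, $\mathrm{KL}(\mathbb{P}_{Y}\|\mathbb{P}_{Y'}) = \tfrac{\mu^2}{2}\|Y-Y'\|_{\mathrm F}^2$, and the squared Frobenius distance of $0/1$ matrices is just the Hamming distance. Any two matrices $Y^{(j')},Y^{(j'')}$ in the family disagree on at most three columns (those indexed by $j_0,j',j''$), each contributing at most $2K_L$ differing entries (the shifted rows can change from one left cluster to another). Hence the pairwise KL divergences are uniformly bounded by $3K_L\mu^2$. Applying Fano's inequality with the uniform prior and the standard bound $I(\theta;A)\le \max_{i,j}\mathrm{KL}(\mathbb{P}_i\|\mathbb{P}_j)$ yields
\begin{equation*}
\inf_{\hat Y}\sup_{Y^\ast\in\mathcal{Y}}\mathbb{P}[\hat Y\neq Y^\ast] \;\ge\; 1 - \frac{3K_L\mu^2+\log 2}{\log(n_R-K_R)}.
\end{equation*}
Under~\eqref{eq:impossible_bi} the numerator's leading term is at most $\tfrac14\log(n_R-K_R)$, and the assumptions $n_R\ge 128$, $K_R\le n_R/2$ give $\log(n_R-K_R)\ge \log 64$, so $\log 2/\log(n_R-K_R)\le 1/6$. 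The right-hand side is therefore at least $3/4-1/6 = 7/12\ge 1/2$.

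The main obstacle is really bookkeeping: checking that every $Y^{(j')}$ is a valid element of $\mathcal{Y}$ (hence the case split on whether $j'$ is isolated or sits in another right cluster), and tracking the worst pairwise KL across two-cluster-swap pairs carefully enough to land at the constant $1/12$ in~\eqref{eq:impossible_bi}. A sharper but unnecessary alternative would be to bound $I(\theta;A)$ by an average-KL around a symmetrized center, improving the numerical constants but leaving the statement unchanged.
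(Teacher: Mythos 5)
Your proof is correct and follows essentially the same route as the paper's: a Fano argument over a family of $\Theta(n_R-K_R)$ bi-clustering matrices obtained from a reference $Y^{(0)}$ by swapping a single right node in the first right cluster with each right node outside it, together with the observation that the Gaussian KL is $\tfrac{\mu^2}{2}\|Y-Y'\|_{\mathrm F}^2$ and that any two such perturbations differ in at most three columns, each contributing at most $2K_L$ entries. The only cosmetic differences from the paper are that you bound $I(\theta;A)$ by the maximum pairwise KL while the paper uses the slightly tighter average-KL bound via convexity (immaterial here since the worst pair already gives $3K_L\mu^2$), you drop the reference matrix $Y^{(0)}$ from the packing so you have $M$ rather than $M+1$ hypotheses (also immaterial), and you use the $+\log 2$ rather than $+1$ form of Fano; all of these lead to the same conclusion $\ge 7/12 \ge 1/2$ under the stated numerical hypotheses. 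One tiny slip in wording: when counting the $2K_L$ differing entries per affected column you refer to "shifted rows" when you mean shifted columns of $Y$, but the count itself is right.
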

The regime where~\eqref{eq:impossible_bi} holds is called the \emph{impossible} regime, corresponding to an information barrier that no algorithm can break. We note the similarity between the impossible regimes for submatrix localization and planted clustering. In particular, if we assume the in/cross-cluster edges in planted clustering have comparable variance, i.e., $\frac{p(1-p)}{ q(1-q)} =\Theta(1)$, then the conditions~\eqref{eq:impossible_bi} and~\eqref{eq:impossiblesimle_1} coincide up to constant factors by setting $ n_L=n_R=n, K_L=K_R=K $ and $ \mu = \frac{p-q}{\sqrt{q(1-q)}}$. Such correspondence also exists in the next three regimes.

\paragraph*{Comparison to previous work} Theorem~\ref{thm:Impossible_bi} holds in the general high rank setting with arbitrary $ r $. In  $ r=1 $ case, our result  recovers the minimax lower bound in~\cite{kolar2011submatrix}.

\subsection{The Hard Regime: Optimal Algorithm}
Recall that $\mathcal{Y}$ is the set of all valid bi-clustering matrices. We consider the combinatorial optimization
problem given in Algorithm~\ref{alg:mle_bi}. In the setting where $\{\Delta_{ij}\}$ are Gaussian random variables, this can be shown to be the MLE of $ Y^* $.
\begin{algorithm}
\caption{Maximum Likelihood Estimator\label{alg:mle_bi}}
\begin{equation}
\hat{Y} = \arg\max_{Y\in\mathcal{Y}} \; \sum_{i,j} A_{ij}Y_{ij}.\label{eq:mle2}
\end{equation}
\end{algorithm}

Theorem~\ref{thm:MLE_bi} below provides a success condition for Algorithm~\ref{alg:mle_bi}.
\begin{theorem}
[Hard]\label{thm:MLE_bi} Suppose $K_{L},K_{R}\ge8$.  There exists a  constant $ c_1$ such that with probability at least $ 1-512en^{-1} $,  the optimal solution to the problem~\eqref{eq:mle2} is
unique and equals $ Y^{*}$ if
\begin{equation}\label{eq:hard_bi}
\mu^{2}\ge c_1 \frac{\log n}{K_{L}\wedge K_{R}}.
\end{equation}
\end{theorem}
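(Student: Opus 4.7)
The plan is to bound the failure probability of the MLE by a union bound over all competing bi-clustering matrices $Y \in \mathcal{Y}$ with $Y \neq Y^*$. Writing $D := Y^* - Y$ and noting that $|Y|_1 = |Y^*|_1 = r K_L K_R$, the entries of $D$ take values in $\{-1,0,1\}$ with an equal number of $+1$'s and $-1$'s. In particular $|\{(i,j): D_{ij}=1\}| = h(Y)/2$ where $h(Y) := \|Y-Y^*\|_F^2$ is the Hamming distance. Decomposing $A_{ij} = \mu Y^*_{ij} + \Delta_{ij}$,
\[
\sum_{i,j} A_{ij}(Y^*_{ij} - Y_{ij}) = \frac{\mu}{2} h(Y) + \sum_{i,j} \Delta_{ij} D_{ij},
\]
and since the noise term is sub-Gaussian with parameter $\|D\|_F^2 = h(Y)$, a standard Chernoff bound gives
\[
\mathbb{P}\!\left[\text{obj}(Y) \ge \text{obj}(Y^*)\right] \;\le\; \exp\!\left(-\mu^2 h(Y)/8\right).
\]

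Next I would classify each competing $Y$ by the amount of ``mismatch'' it has with $Y^*$. After fixing the permutation of cluster labels minimizing the disagreement, let $t_L$ be the number of left nodes whose cluster membership (including the ``isolated'' bin) disagrees with $Y^*$, and let $t_R$ be the analogous count for right nodes. The core deterministic lemma I would establish is
\[
h(Y) \;\gtrsim\; t_L K_R \;+\; t_R K_L,
\]
proved by row/column accounting: each misclassified left node $i$ contributes at least $\Omega(K_R)$ flipped entries to row $i$ (and symmetrically for columns), with the constant degraded only by mass already accounted for on the column side. The number of $Y$ realizing a given pair $(t_L,t_R)$ is at most
\[
\binom{n_L}{t_L}(r+1)^{t_L} \binom{n_R}{t_R}(r+1)^{t_R} \;\le\; \Bigl(\tfrac{e n_L (r+1)}{t_L}\Bigr)^{t_L}\Bigl(\tfrac{e n_R (r+1)}{t_R}\Bigr)^{t_R}
\]
(treating the convention $x^0 = 1$). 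Using $r \le n_L/K_L$ and $r \le n_R/K_R$, the log-cardinality at level $(t_L,t_R)$ is bounded by $C t_L \log(n/t_L) + C t_R \log(n/t_R) \lesssim (t_L + t_R)\log n$.

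Combining with the tail bound, the union-bound sum becomes
\[
\sum_{(t_L,t_R)\neq (0,0)} \exp\!\Bigl(-\,c\,\mu^2 (t_L K_R + t_R K_L) + C (t_L + t_R)\log n\Bigr).
\]
Under the hypothesis $\mu^2 \ge c_1 \log n/(K_L \wedge K_R)$ with $c_1$ sufficiently large, the exponent is at most $-c_2(t_L + t_R)\log n$, so the sum is geometric in $t_L, t_R$ and bounded by $O(1/n)$. Tracking constants through the assumed lower bounds $K_L, K_R \ge 8$ and a dyadic split on $(t_L, t_R)$ should yield the stated $512 e/n$ bound.

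The main obstacle is the Hamming lower bound $h(Y) \gtrsim t_L K_R + t_R K_L$. The subtle case arises when a misclassified left node $i$ is reassigned to a cluster $C_{m'}$ whose matched right cluster $D_{m'}$ happens to share many elements with the original $D^*_m$; then row $i$ alone contributes fewer than $K_R$ differences, and one must debit the ``missing'' contribution from row $i$ against the column-side mismatches already enumerated by $t_R$. I would handle this by an exchange/double-counting argument that partitions the mass of $h(Y)$ among the $(t_L, t_R)$ moves so that no mass is double-counted, which shows the combined bound $t_L K_R + t_R K_L$ (up to an absolute constant) even after worst-case interaction between left and right reassignments.
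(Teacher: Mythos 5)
Your plan is essentially the paper's proof: the same decomposition $\langle A, Y^*-Y\rangle = \mu\, d(Y) + \text{noise}$ with $d(Y)=h(Y)/2$, the same sub-Gaussian Chernoff bound $\exp(-\Omega(\mu^2 d(Y)))$ per competitor, and a union bound over competitors grouped by misclassification counts. Your ``core deterministic lemma'' $h(Y)\gtrsim t_L K_R + t_R K_L$ is exactly the content of the paper's Lemma~\ref{lem:countY_bi} (stated there in the inverse form that $d(Y)=t$ forces $t_L\le 8t/K_R$ and $t_R\le 8t/K_L$), and the paper resolves the very interaction you flag---a misclassified left node whose new right cluster largely overlaps the old one---with the same kind of case analysis on the overlap sizes $\alpha_{kk},\beta_{kk}$ that your ``exchange/double-counting'' sketch points toward. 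So the approach and the key lemma coincide; what remains for you is only to carry out that case analysis in detail.
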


We refer to the regime where the condition~\eqref{eq:hard_bi} holds and~\eqref{eq:CVX_cond_bi} fails as the \emph{hard} regime. Note that the bound~\eqref{eq:hard_bi} matches~\eqref{eq:impossible_bi} up to a constant factor, so they are minimax optimal. Therefore,  Theorems~\ref{thm:Impossible_bi} and~\ref{thm:MLE_bi} together establish  the minimax recovery boundary for submatrix localization at $ \mu^2 \asymp \frac{ \log n}{K_{L}\wedge K_{R}}. $

\paragraph*{Comparison with previous work}  Theorem~\ref{thm:MLE_bi} provides the first minimax-optimal achievability result when the number $ r $ of submatrices may grow with $ n_L$ and $n_R $. In particular, $ r $ is allowed to grow at a nearly linear rate $ r=O(n/\log n) $ assuming $ n_L=n_R=n $. In the special case with a single planted submatrix ($ r=1 $), Theorem~\ref{thm:MLE_bi} recovers the achievability result in~\cite{kolar2011submatrix}.

\subsection{The Easy Regime: Polynomial-Time Algorithms}

As previous, we obtain a convex relaxation of the combinatorial MLE formulation~\eqref{eq:mle2} by replacing the constraint $ Y\in \mathcal{Y} $ with the trace norm and linear constraints, for which we use the fact that the true $ Y^* $ satisfies $ \left\Vert Y^{*}\right\Vert _{*}=r\sqrt{K_L K_R} $. This is given as Algorithm~\ref{alg:cvx_bi}, which is a semidefinite program (SDP) and can be solved in polynomial time.
\begin{algorithm}[H]
\caption{Convexified Maximum Likelihood Estimator\label{alg:cvx_bi}}
\begin{eqnarray}
\max_{Y} &  & \sum_{i,j}A_{ij}Y_{ij}\label{eq:CVX_bi}\\
 &  & \left\Vert Y\right\Vert _{*}\le r\sqrt{K_LK_R},\label{eq:nuclear_bi} \\
 &  & \sum_{i,j}Y_{ij}=rK_{L}K_{R}, \quad 0\le Y_{ij}\le1,\forall i,j.\label{eq:linear_bi}
\end{eqnarray}
\end{algorithm}

The following theorem provides a sufficient condition for the success of Algorithm~\ref{alg:cvx_bi}.
\begin{theorem}
[Easy]\label{thm:CVX_bi} There exists a universal constant $ c_1 $ such that with probability at least $ 1-n^{-10} $, the optimal solution
to the program~\eqref{eq:CVX_bi}--\eqref{eq:linear_bi} in Algorithm~\ref{alg:cvx_bi} is unique and equals $Y^{*}$ if
\begin{equation}\label{eq:CVX_cond_bi}
\mu^{2} \ge c_1 \left(\frac{\log n}{K_{L}\wedge K_{R}}+\frac{ n }{K_{L}K_{R}}\right).
\end{equation}
\end{theorem}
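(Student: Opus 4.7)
The plan is to exhibit an explicit dual certificate witnessing $Y^{\ast}$ as the unique optimum of~\eqref{eq:CVX_bi}--\eqref{eq:linear_bi}, following the primal-dual template standard in the convex relaxation literature (cf.\ \cite{Chen12,ames2012clustering}). Write $A=\mu Y^{\ast}+\Delta$, where $\Delta$ has i.i.d.\ zero-mean sub-Gaussian$(1)$ entries; let $\Omega:=\mathrm{supp}(Y^{\ast})=\bigcup_{m}C_{m}^{\ast}\times D_{m}^{\ast}$; let $T$ be the tangent space at $Y^{\ast}$ to the manifold of rank-$r$ matrices, with orthogonal projections $P_{T},P_{T^{\perp}}$; and let $U^{\ast}\in\reals^{n_{L}\times r}$, $V^{\ast}\in\reals^{n_{R}\times r}$ be the normalized singular vectors of $Y^{\ast}$, so that $U^{\ast}V^{\ast\top}=Y^{\ast}/\sqrt{K_{L}K_{R}}$. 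A standard Lagrangian calculation reduces the theorem to constructing $\lambda\ge 0$, $\eta\in\reals$, nonnegative matrices $Q_{-}$ supported on $\Omega^{\mathrm{c}}$ and $Q_{+}$ supported on $\Omega$ (strictly positive on their supports), and a matrix $W$ with $P_{T}(W)=0$ and $\|W\|_{2}<1$, satisfying the KKT equation $A+Q_{-}-Q_{+}-\eta\mathbf{1}\mathbf{1}^{\top}=\lambda\bigl(U^{\ast}V^{\ast\top}+W\bigr)$.

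The natural choice is $\lambda=\mu\sqrt{K_{L}K_{R}}$, so that $\lambda U^{\ast}V^{\ast\top}=\mu Y^{\ast}$ cancels the signal and the KKT equation collapses to $S:=\Delta+Q_{-}-Q_{+}-\eta\mathbf{1}\mathbf{1}^{\top}=\lambda W$. To force $W\in T^{\perp}$ automatically, I impose $S\in T^{\perp}$, which is equivalent to a linear system of block-sum constraints: for every $m\in[r]$ and every $j$, $\sum_{i\in C_{m}^{\ast}}S_{ij}=0$, and symmetrically for rows indexed by $D_{m}^{\ast}$. These constraints are linear in the free entries of $Q_{\pm}$ on their respective supports and in $\eta$, and can be solved explicitly---morally, $Q_{\pm}$ is chosen to absorb the block-averaged fluctuations of $\Delta$, so that $S$ ends up essentially equal to $P_{T^{\perp}}(\Delta)$ plus a small low-rank correction coming from $\eta\mathbf{1}\mathbf{1}^{\top}$.

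Two concentration estimates then close the proof. For \emph{spectral control}, since $P_{T^{\perp}}$ is bounded in operator norm by an absolute constant, verifying $\|S\|_{2}<\lambda$ reduces (up to constants) to the sub-Gaussian random matrix bound $\|\Delta\|_{2}\le C(\sqrt{n_{L}}+\sqrt{n_{R}})\le C'\sqrt{n}$, which holds with probability at least $1-n^{-11}$. Plugging $\lambda=\mu\sqrt{K_{L}K_{R}}$ into $\|S\|_{2}<\lambda$ yields $\mu^{2}\gtrsim n/(K_{L}K_{R})$, the second term of~\eqref{eq:CVX_cond_bi}. For \emph{pointwise positivity}, Step~2 yields $(Q_{-})_{ij}=\Delta_{ij}-\eta-\lambda W_{ij}$ on $\Omega^{\mathrm{c}}$ and an analogous expression on $\Omega$ for $Q_{+}$. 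The pointwise noise is controlled by $\max_{ij}|\Delta_{ij}|\lesssim\sqrt{\log n}$ via the sub-Gaussian maximal inequality, while $\max_{ij}|\lambda W_{ij}|$ is dominated by maxima of sums of $K_{L}$ or $K_{R}$ i.i.d.\ sub-Gaussians, of order $\sqrt{(K_{L}\wedge K_{R})\log n}$ after a union bound over $O(n^{2})$ entries. Comparing this to the signal scale $\mu\sqrt{K_{L}K_{R}}$ gives $\mu^{2}\gtrsim\log n/(K_{L}\wedge K_{R})$, the first term of~\eqref{eq:CVX_cond_bi}.

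I expect the pointwise certification step to be the main obstacle: the scalar $\eta$, possibly augmented by row- and column-dependent corrections implicit in $Q_{\pm}$, must be tuned so that the spectral and pointwise bounds hold \emph{simultaneously} with probability at least $1-n^{-10}$, after union-bounding over $n_{L}n_{R}$ entry-wise events and $O(r(n_{L}+n_{R}))$ block-sum events. Once both events are in hand, uniqueness of $Y^{\ast}$ follows from the standard strict-duality chain $\langle A,Y^{\ast}-Y\rangle=\lambda\langle U^{\ast}V^{\ast\top}+W,\,Y^{\ast}-Y\rangle-\langle Q_{-},Y^{\ast}-Y\rangle+\langle Q_{+},Y^{\ast}-Y\rangle$: each of the three terms on the right is $\ge 0$ for any feasible $Y$, and the strict inequalities $\|W\|_{2}<1$ and $Q_{\pm}>0$ on their supports upgrade this to $>0$ unless $Y=Y^{\ast}$.
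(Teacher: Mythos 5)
There is a genuine gap, and it is in the choice $\lambda=\mu\sqrt{K_{L}K_{R}}$. With that value, $\lambda U^{\ast}V^{\ast\top}$ cancels the signal \emph{exactly}, so the matrix $A-\lambda U^{\ast}V^{\ast\top}-\eta J$ has deterministic part $-\eta$ constant over all of $[n_{L}]\times[n_{R}]$; there is no margin separating $\Omega$ from $\Omega^{c}$, only the zero-mean fluctuation $\Delta-\lambda W$. The requirement $W\in T^{\perp}$ then forces a contradiction for $r\ge2$: since $\sum_{i\in C_{m}^{\ast}}W_{ij}=0$ for every $m$ and $j$, averaging the required sign conditions $\Delta_{ij}-\lambda W_{ij}-\eta\ge0$ (on $\Omega$) and $\le0$ (on $\Omega^{c}$) over $i\in C_{m}^{\ast}$ yields, for each $j\in D_{m}^{\ast}$ and each $m'\ne m$,
\begin{equation*}
\frac{1}{K_{L}}\sum_{i\in C_{m'}^{\ast}}\Delta_{ij}\;\le\;\eta\;\le\;\frac{1}{K_{L}}\sum_{i\in C_{m}^{\ast}}\Delta_{ij}.
\end{equation*}
The right-hand sides, over all $m$ and $j\in D_{m}^{\ast}$, have minimum $\approx-\sqrt{\log(rK_{R})/K_{L}}$, while the left-hand sides have maximum $\approx+\sqrt{\log(r^{2}K_{R})/K_{L}}$; with high probability no $\eta$ satisfies both, so no certificate with this $\lambda$ exists. (Relatedly, your ``pointwise vs.\ signal'' comparison $\mu\sqrt{K_{L}K_{R}}\gtrsim\sqrt{(K_{L}\wedge K_{R})\log n}$ would give $\mu^{2}\gtrsim\log n/(K_{L}\vee K_{R})$, not $\log n/(K_{L}\wedge K_{R})$, and in any case there is no deterministic ``signal scale'' of size $\mu\sqrt{K_{L}K_{R}}$ left in the stationarity residual once $\lambda U^{\ast}V^{\ast\top}$ is subtracted.) A KKT construction could in principle go through with $\lambda=(1-\delta)\mu\sqrt{K_{L}K_{R}}$ for a fixed $\delta\in(0,1)$, which creates a uniform margin $\delta\mu$ on $\Omega$, but then choosing $Q_{\pm}$ and $\eta$ consistently with both the sign structure, the block-sum constraints $\mathcal{P}_{T}S=0$, and $\|S\|<\lambda$ is a nontrivial balancing act that you leave unaddressed.

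The paper avoids the dual construction entirely and runs a direct primal argument: using the feasibility constraints $\sum_{ij}Y_{ij}=\sum_{ij}Y^{\ast}_{ij}$, $0\le Y_{ij}\le1$, one gets the identity $\langle Y^{\ast}-Y,A\rangle=\frac{\mu}{2}\|Y^{\ast}-Y\|_{1}+\langle A-\mu Y^{\ast},Y^{\ast}-Y\rangle$; the nuclear-norm constraint is then exploited through the subgradient inequality $0\ge\|Y\|_{\ast}-\|Y^{\ast}\|_{\ast}\ge\langle UV^{\top}+\mathcal{P}_{T^{\perp}}(W),Y-Y^{\ast}\rangle$ with $W:=8(A-\mu Y^{\ast})/(\mu\sqrt{K_{L}K_{R}})$, and after an $\ell_{1}$--$\ell_{\infty}$ duality step everything reduces to the two concentration estimates $\|A-\mu Y^{\ast}\|\le\frac{\mu}{8}\sqrt{K_{L}K_{R}}$ and $\|\mathcal{P}_{T}(A-\mu Y^{\ast})\|_{\infty}\le\frac{\mu}{8}$. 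This route never needs to construct a sign-constrained $Q_{\pm}$, which is exactly where your argument breaks down.
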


When $ r=1 $, the \emph{easy regime} refers to where the condition~\eqref{eq:CVX_cond_bi} holds but~\eqref{eq:ConditionSimple1_bi} fails. When $ r>1 $, the easy regime is where the condition~\eqref{eq:CVX_cond_bi} holds but~\eqref{eq:ConditionSimple2_bi} fails. Suppose $n_{L}=n_{R}=n$ and $K_{L}=K_{R}=K$; the convexified MLE is guaranteed to succeed when $ \mu^2 \gtrsim \frac{K\log n + n }{K^2}$.

The following theorem provides a nearly matching converse to Theorem~\ref{thm:CVX_bi}.
\begin{theorem}[Easy, Converse]
\label{thm:Cvx_converse_bi} There exist positive universal constants $ c_1, c_2 $ such that the following holds. Under the submatrix localization model, suppose $ \mu \le 1/100 $, $n_{L}=n_{R}=n$, $K_{L}=K_{R}=K$, $ c_1 \log n \le  K \le \frac{n}{2} $, and $(\Delta_{ij})$ are Gaussian random variables. If
\begin{equation}
\mu^2 \le c_2  \frac{n}{K^2}, \label{eq:converse_bi_cond}
\end{equation}
then with probability at least $ 1-n^{-10} $, any optimal solution to the convex program~\eqref{eq:CVX_bi}--\eqref{eq:linear_bi} has a different support from $ Y^* $.
\end{theorem}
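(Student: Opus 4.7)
The plan is to show, under the hypothesis $\mu^2 \le c_2 n/K^2$, that with probability at least $1-n^{-10}$ the true $Y^\ast$ fails to be a KKT point of the SDP~\eqref{eq:CVX_bi}--\eqref{eq:linear_bi}, so every optimum necessarily has a support different from $Y^\ast$. The underlying principle is identical to that of Theorem~\ref{thm:cvx_converse}: when the signal spectral norm $\|\mu Y^\ast\|_{2}=\mu K$ is dominated by the noise spectral norm $\|\Delta\|_{2}=\Theta(\sqrt n)$, the convex relaxation cannot identify $Y^\ast$. Assuming for contradiction that $Y^\ast$ is optimal, the KKT conditions yield $\lambda\ge 0$, $\eta\in\mathbb R$, nonnegative matrices $M$ supported on $S:=\mathrm{supp}(Y^\ast)$ and $\Lambda$ supported on $S^{\mathrm c}$, and a subgradient $G=UV^\top+W$ with $UV^\top=\tfrac{1}{K}Y^\ast$, $\|W\|_{2}\le 1$, and $W$ having zero row- and column-sums on every cluster, such that $A=\lambda G+\eta \mathbf 1\mathbf 1^\top+M-\Lambda$.

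From this identity I will extract two inequalities. First, restrict to the ``background'' block $\bar R\times\bar C$ (with $\bar R,\bar C$ the isolated rows/columns, of size $\ge n/2$) where $M=0$, and test against nonnegative unit vectors $u=\mathbf 1_A/\sqrt{|A|}$, $v=\mathbf 1_B/\sqrt{|B|}$ for a pair $(A,B)\subset \bar R\times\bar C$ of sizes $\Theta(n)$. Nonnegativity of $\Lambda$ gives
\[
u^\top \Delta|_{\bar R\times\bar C}v \le \lambda\, u^\top W v + \eta (u^\top \mathbf 1)(v^\top \mathbf 1) \le \lambda + \eta (u^\top \mathbf 1)(v^\top \mathbf 1),
\]
and a Gaussian discrepancy / union bound over $\binom{|\bar R|}{|A|}\binom{|\bar C|}{|B|}$ choices produces an $(A,B)$ for which the left-hand side is $\gtrsim\sqrt n$ whp, yielding $\lambda+\eta n\gtrsim \sqrt n$. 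Second, averaging the KKT identity over a true cluster block $C_m^\ast\times D_m^\ast\subset S$ and using both $\sum_{C_m^\ast\times D_m^\ast}W_{ij}=0$ (from $P_UW=WP_V=0$) and $M\ge 0$, we get $\mu K^2 + \sum \Delta_{ij} = \lambda K+\eta K^2 + \sum M_{ij}\ge \lambda K+\eta K^2$. Hoeffding on the $K^2$-sum gives $\lambda/K+\eta \le \mu+O(K^{-1}\sqrt{\log n})$. Combining this with the first inequality and pinning $\eta\lesssim \sqrt{\log n}/n$ via a complementary averaging step on $\bar R\times\bar C$ (where $M=0$ and $\sum\Delta$ concentrates around $0$), one arrives at $\sqrt n \lesssim \lambda+\eta n\lesssim K\mu + \sqrt{\log n}$. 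Using $K\ge c_1\log n$ and $\mu\le 1/100$, this forces $\mu\gtrsim \sqrt n/K$, contradicting $\mu^2\le c_2 n/K^2$ for sufficiently small $c_2$.

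The main obstacle is the clean elimination of the equality-constraint multiplier $\eta$: the freedom to set the global sum of $Y$ to $rK^2$ means that $\eta$ can a priori absorb an arbitrary constant shift, and decoupling $\eta$ from $\lambda$ requires contrasting KKT averages, one on a true cluster block (where the signal and $M$ are active) and one on a pure-noise block (where they vanish), so that $\eta$ cancels and only $\lambda$---controlled by the operator norm of $\Delta$---remains. A secondary technical difficulty is identifying the correct test-set sizes $|A|,|B|=\Theta(n)$ in the first inequality: the nonnegative spectral quantity $\max_{u,v\ge 0,\,\|u\|=\|v\|=1}u^\top \Delta v$ attains its maximum $\Theta(\sqrt n)$ only when the test vectors are spread over a macroscopic number of coordinates, which is permitted only because the hypothesis $K\le n/2$ guarantees $|\bar R|,|\bar C|\ge n/2$.
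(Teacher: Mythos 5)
Your high-level strategy is the same as the paper's: assume $Y^\ast$ is optimal, invoke the KKT conditions to produce $\lambda,\eta,W,H$ (your $M,\Lambda$), and then derive a contradiction by showing $\lambda$ must simultaneously be $\gtrsim\sqrt n$ (from the noise) and $\lesssim K\mu$ (from the signal block average). Your inequality (b), obtained by averaging the stationarity identity over a true cluster block and using $\sum_{C_m^\ast\times D_m^\ast}W_{ij}=0$ together with $M\ge0$, is correct. Your inequality (a), obtained by testing against macroscopic nonnegative indicator vectors on the background block and invoking the nonnegativity of $\Lambda$ together with a discrepancy lower bound, is also a sound idea (one needs to actually prove the $\gtrsim\sqrt n$ cut-norm lower bound, but this is doable, e.g.\ by a row-sign greedy construction).

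The gap is step (c). On the background block $\bar R\times\bar C$ the stationarity identity reads $A=\lambda W+\eta J-\Lambda$ with $\Lambda\ge0$, so the total sum gives $\eta\,|\bar R||\bar C|=\sum A-\lambda\sum W+\sum\Lambda$. Since $\sum\Lambda\ge0$ this yields only a \emph{lower} bound on $\eta$ (once you also bound $|\sum W|\le n\|W\|\le n$); there is no a priori control on $\sum\Lambda$, and hence no upper bound on $\eta$ from this averaging. Consequently, the case $\eta>0$ is uncontrolled: when $\eta>0$ your (a) $\lambda+\Theta(n)\eta\gtrsim\sqrt n$ can be satisfied with $\lambda$ small, and (b) does not constrain $\eta n$ to be $O(\sqrt{\log n})$ because $\eta\le\mu+O(\sqrt{\log n}/K)$ only yields $\eta n\lesssim\mu n$, which can far exceed $\sqrt n$ when $K$ is small. (Your argument does go through when $\eta\le 0$, where the two $\eta$-terms have the same sign.) The paper avoids this by a different mechanism: it uses $\lambda^2=\|\lambda(UU^\top+W)\|^2\ge\frac1n\|A-\eta J+H\|_F^2$ and then a case split on $\eta$ — when $\eta$ is large, the $\eta J_{\R^c}$ term (together with a Markov-inequality bound showing most of $H_{\R^c}$ is moderate) drives the Frobenius norm to $\Omega(n^2)$; when $\eta$ is small, the Gaussian entries of $A_{\R^c}$ themselves do. To repair your argument you would need a genuine two-sided control on $\eta$ (or an analogue of that case split), not merely the one-sided information that nonnegativity of $\Lambda$ provides.
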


Theorems~\ref{thm:CVX_bi} and~\ref{thm:Cvx_converse_bi} together establish that the recovery boundary for the convexified MLE in Algorithm~\ref{alg:cvx_bi} is $ \mu^2 \asymp \frac{n}{K^2}$ ignoring logarithmic factors. There is a substantial gap from the minimax boundary $\mu^2 \asymp \frac{1}{K}$  established in the last two subsections (again ignoring logarithmic factors). Our analysis reveals that the performance of the convexified MLE is determined by a spectral barrier similar to that in planted clustering. In particular, the squared largest singular values of the \emph{signal matrix} $ Y^* $ and the \emph{noise matrix} $ A-\mathbb{E}A $ are $ \Theta(\mu^2 K^2) $ and $\Theta(n) $, respectively, so the condition $ \mu^2 \gtrsim \frac{n}{K^2} $ for the convexified MLE can be seen as an spectral SNR condition.

As in the planted clustering model,  we conjecture that no polynomial-time algorithm can achieve better statistical performance than the convexified MLE.
\begin{conjecture} \label{ConjectureHardnessBi}
For any constant $ \epsilon>0 $, there is no algorithm with running time  polynomial in $ n $ that, for all $ n $ and with probability at least $ 1/2$, outputs the true $Y^\ast$ for the submatrix localization problem with $ \mu \le 1$, $n_{L}=n_{R}=n$, $K_{L}=K_{R}=K \ge c_1 \log n$ and $$\mu^2 \le \frac{n^{1-\epsilon}}{K^2}.$$
\end{conjecture}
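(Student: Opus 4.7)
The plan is to prove Conjecture~\ref{ConjectureHardnessBi} by an average-case reduction from the Planted Clique detection problem, extending the single-submatrix detection reductions of Ma--Wu and Hajek--Wu--Xu to the multi-submatrix recovery setting. Under the Planted Clique Hardness Hypothesis, for any fixed $\gamma, \delta > 0$ there is no randomized polynomial-time test that distinguishes $\mathcal{G}(N,\gamma)$ from a graph with a planted clique on $k = N^{1/2-\delta}$ vertices. The goal is to show that a polynomial-time algorithm which recovers $Y^\ast$ whenever $\mu^2 \le n^{1-\epsilon}/K^2$ would yield such a distinguisher for any $\epsilon > 2\delta$, contradicting the hypothesis.

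The core of the reduction is a randomized Gaussianization. Given the adjacency matrix $B \in \{0,1\}^{N\times N}$ of the input graph, sample $A_{ij}$ conditionally from a distribution $\mathcal{D}_{B_{ij}}$ chosen so that under the null $\mathcal{G}(N,\gamma)$ the marginals of $A_{ij}$ are close in total variation to $\mathcal{N}(0,1)$, while under the planted alternative the entries inside the clique block are close to $\mathcal{N}(\mu, 1)$ for some $\mu = \Theta(1)$. Setting $n = N$ and $K = k$ produces a submatrix localization instance satisfying $\mu^2 K^2 \le n^{1-2\delta} \le n^{1-\epsilon}$, landing in the conjectured hard regime. For the multi-submatrix case $r > 1$, the construction is to compose $r$ independent planted-clique instances into disjoint $K\times K$ blocks of an $n \times n$ ambient matrix filled with i.i.d.\ standard Gaussians, then apply a uniformly random row and column permutation to erase the known block positions. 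Any algorithm that returns the correct $Y^\ast$ with constant probability must locate each planted block, and the inverse permutation then yields a polynomial-time detector for Planted Clique in at least one of the component instances.

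The hard part will be making the Gaussianization sharp enough that the total-variation cost, summed over all $\binom{N}{2}$ entries, does not degrade the algorithm's success probability outside the $n^{-O(1)}$ slack permitted by the reduction; a standard remedy is to smooth the Bernoulli entries by adding an auxiliary continuous perturbation, at the cost of a polylogarithmic factor. A secondary difficulty is that existing planted-clique reductions establish only \emph{detection} lower bounds, whereas the conjecture is about \emph{recovery}; bridging this gap at the precise boundary $\mu^2 \le n^{1-\epsilon}/K^2$---which lies strictly above the detection boundary $\beta = \alpha/4 + 1/2$ already used in the single-submatrix case---will likely require an intermediate reduction from Planted Dense Subgraph, or an additional randomization that dilutes the signal to match the recovery-hard scaling rather than the detection-hard one. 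This is, plausibly, why the statement is left as a conjecture rather than a theorem.
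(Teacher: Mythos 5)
This statement is a \emph{conjecture} in the paper, not a theorem; the authors give no proof, only a discussion of supporting evidence (spectral barriers, the convex-relaxation converse in Theorem~\ref{thm:Cvx_converse_bi}, the single-submatrix reduction of Hajek--Wu--Xu, etc.). So there is no ``paper's own proof'' to compare against, and your ``proof proposal'' should be read as a plan of attack, not a proof.

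On the substance: your reduction strategy (Gaussianize a planted-clique instance, tile $r$ copies into disjoint blocks, then randomly permute) is the natural first idea and is the route taken in the cited work for the \emph{detection} version of the problem. But there is a structural obstruction that you yourself point at without fully confronting, and it is why this route cannot, as stated, prove the conjecture. Setting $\mu = \Theta(1)$ and $K = N^{1/2-\delta}$ puts the constructed instance at $\alpha = 0$, $\beta = 1/2 - \delta$ in the $(\alpha,\beta)$-parametrization of Figure~\ref{fig:summary}, exactly where the detection boundary $\beta = \alpha/4 + 1/2$ and the conjectured recovery boundary $\beta = \alpha/2 + 1/2$ coincide. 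The conjecture, however, asserts hardness throughout the strip $\alpha/4 + 1/2 < \beta < \alpha/2 + 1/2$ for all $\alpha > 0$. In that strip, \emph{detection is polynomial-time easy} (by the results you cite), yet recovery is conjectured hard. Any reduction whose only output is a planted-clique \emph{detector} cannot yield a contradiction there, precisely because detection admits an efficient algorithm in that regime; a hypothetical recovery algorithm does give a detector, but so do existing efficient tests, so no impossibility follows. Bridging this gap requires a fundamentally different hardness primitive (e.g.\ a variant of the planted dense-subgraph recovery hypothesis, or a search-to-decision reduction tailored to localization), which is not currently available.

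Two smaller issues with the multi-block construction: (i) composing $r$ independent planted-clique \emph{detection} instances does not produce a valid submatrix-localization instance, because under the null some blocks carry no planted structure and the number of actual submatrices is random, whereas the model fixes $r$ and the algorithm is given it; you would need all $r$ blocks to be alternatives, which removes the null hypothesis from the reduction. (ii) The Gaussianization step needs the total-variation error summed over $\binom{N}{2}$ entries to be $o(1)$, which forces $\gamma$ and $\mu$ into a narrow window and further constrains the $(\alpha,\beta)$ you can reach, again not covering the full conjectured region. Your final paragraph acknowledges that the approach falls short; this is correct, and it is the reason the statement remains a conjecture.
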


\paragraph*{Comparison with previous work} The achievability and converse results in Theorems~\ref{thm:CVX_bi} and~\ref{thm:Cvx_converse_bi} hold even when $ r $ grows with $ n $. In the special case with $r=1$, the work in~\cite{kolar2011submatrix} considers a convex relaxation of sparse singular value decomposition; they focus on the high SNR regime with $\mu^2 \gtrsim \log n$, and show that the performance of their convex relaxation is no better than a simple element-wise thresholding approach (cf.~Section~\ref{sec:elementwise}). Our convex program is different from theirs, and succeeds in the low SNR regime provided $\mu^2 \gtrsim \frac{K \log n+ n}{K^2}$. The work in~\cite{ames2012clustering} studies the success conditions of a convex formulation similar to~\cite{kolar2011submatrix}; with the additional assumption of bounded support of the distribution of $ A_{ij} $, they show that their approach succeeds under an order-wise more restricted condition~$ \mu^2 \gtrsim \frac{n\cdot r}{K^2} $.

\subsection{The Simple Regime: A Thresholding Algorithm}

We consider a simple thresholding algorithm as given in Algorithm~\ref{alg:counting_bi}. The algorithm computes the column and row sums of $ A $ as well as the correlation between the columns and rows. It is similar in spirit to the simple counting Algorithm~\ref{alg:counting} for the planted clustering problem.

\begin{algorithm}
\caption{A Simple Thresholding Algorithm\label{alg:counting_bi}}
\begin{compactenum}
\item (Identify isolated nodes) For each left node $i\in[n_{L}]$, declare it as isolated if the row sum $d_i := \sum_{j=1}^{n_{R}}A_{ij}\le\frac{\mu K_{R}}{2}$.
For each  right node $j\in[n_{R}]$, declare it as isolated if the column sum $d'_j : = \sum_{i=1}^{n_{L}}A_{ij}\le\frac{\mu K_{L}}{2}$.
\item (Identify clusters when $r>1$)
For each pair of non-isolated left nodes $i,i'\in[n_{L}]$, assign them to the same cluster if $S_{ii'} := \sum_{j=1}^{n_{R}}A_{ij}A_{i'j}\ge \frac{\mu^{2}K_{R}}{2}$. Declare error if inconsistency is found. Assign the non-isolated right nodes into clusters in a similar manner. Let $ \{C_k\} $ and $\{ D_k \}$ be the resulting left and right clusters.
\item (Associate left and right clusters) For each $k\in[r]$ and $l\in[r]$, associate the left cluster
$C_{k}$ with the right cluster $D_{l}$ if the block sum $ B_{kl} := \sum_{i\in C_{k},j\in D_{l}}A_{ij}\ge \mu K_L K_R/2 $.
\end{compactenum}
\end{algorithm}
Steps 1, 2 and 3 of the algorithm run in time $O(n_Ln_R)$, $O(n_L^2n_R + n_R^2 n_L)$ and $ O(n_Ln_R) $, respectively. We note that Step~1 is previously considered in~\cite{kolar2011submatrix} for locating a single submatrix. The following theorem provides success conditions for this simple algorithm.
\begin{theorem}
[Simple] \label{thm:Simple_bi} There exist universal constants $ c_1, c_2 $ such that Algorithm~\ref{alg:counting_bi}
identifies the isolated nodes with probability at least $ 1-en_L^{-1}-en_R^{-1}$ if
\begin{equation}
\mu^{2} \ge c_1 \max\left\{ \frac{n_{L} \log n_R }{K_{L}^{2}},\frac{n_{R} \log n_L }{K_{R}^{2}}\right\} ,\label{eq:ConditionSimple1_bi}
\end{equation}
and exactly recovers $Y^\ast$ with probability at least $1-e(rK_L)^{-1}-e(rK_R)^{-1}-en^{-1}$ if further
\begin{equation}
\mu^{4} \ge c_2 \max\left\{ \frac{n_{L} \log (r K_R) } {K_{L}^{2}},\frac{n_{R} \log (r K_L)}{K_{R}^{2}}\right\}. \label{eq:ConditionSimple2_bi}
\end{equation}
\end{theorem}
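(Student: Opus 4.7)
The plan is to analyze the three steps of Algorithm~\ref{alg:counting_bi} separately, conditioning each step on the success of the previous one. For each step, the relevant test statistic decomposes as a deterministic mean plus a mean-zero noise term, and we bound the noise via sub-Gaussian or sub-exponential concentration followed by a union bound over the appropriate index set.

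For Step~1, the row sum $d_i$ has mean $\mu K_R$ if $i$ lies in some left cluster and mean $0$ otherwise, while $d_i-\mathbb{E}[d_i]=\sum_j \Delta_{ij}$ is a sum of $n_R$ independent sub-Gaussian$(1)$ variables. Hoeffding's inequality yields $\mathbb{P}(|d_i-\mathbb{E}[d_i]|\ge \mu K_R/2) \le 2\exp(-\mu^2 K_R^2/(8n_R))$. A union bound over the $n_L$ left nodes (symmetrically over the $n_R$ right nodes) gives the desired failure probability provided
\begin{equation*}
\mu^2 \gtrsim \max\!\Bigl\{\tfrac{n_L\log n_R}{K_L^2},\; \tfrac{n_R\log n_L}{K_R^2}\Bigr\},
\end{equation*}
which is exactly \eqref{eq:ConditionSimple1_bi}. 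By taking the hidden constant slightly larger, the same argument yields a Step~1 failure probability of at most $e(rK_L)^{-1}+e(rK_R)^{-1}$ under the stronger condition \eqref{eq:ConditionSimple2_bi}, which is absorbed into the final bound.

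For Step~2, condition on Step~1 having correctly identified all non-isolated nodes. For a pair $i,i'$ of non-isolated left nodes with true cluster labels $m,m'$, write
\begin{equation*}
S_{ii'} \;=\; \mu^2\,|D^*_m\cap D^*_{m'}| \;+\; \mu\!\!\sum_{j\in D^*_m}\!\!\Delta_{i'j} \;+\; \mu\!\!\sum_{j\in D^*_{m'}}\!\!\Delta_{ij} \;+\; \sum_{j=1}^{n_R}\Delta_{ij}\Delta_{i'j},
\end{equation*}
so $\mathbb{E}[S_{ii'}]=\mu^2 K_R$ when $m=m'$ and $\mathbb{E}[S_{ii'}]=0$ otherwise. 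To separate these two means with the threshold $\mu^2 K_R/2$, it suffices to bound the deviation by $\mu^2 K_R/2$. The two ``mixed'' terms are each sub-Gaussian of parameter $\mu\sqrt{K_R}$, hence concentrate at scale $\mu\sqrt{K_R\log(rK_L)}$, which is $o(\mu^2 K_R)$ whenever \eqref{eq:ConditionSimple1_bi} holds. The dominant noise contribution is the ``noise--noise'' term $T_{ii'}:=\sum_j \Delta_{ij}\Delta_{i'j}$: since the product of two independent sub-Gaussian$(1)$ variables is centered sub-exponential with $\psi_1$-norm $O(1)$, Bernstein's inequality gives
\begin{equation*}
\mathbb{P}\bigl(|T_{ii'}|\ge t\bigr) \le 2\exp\!\Bigl(-c\min\!\bigl\{t^2/n_R,\, t\bigr\}\Bigr).
\end{equation*}
Setting $t=\mu^2 K_R/4$ and demanding the variance-regime bound dominate $3\log(rK_L)$ yields $\mu^4 \gtrsim n_R\log(rK_L)/K_R^2$. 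A union bound over the at most $(rK_L)^2$ pairs of non-isolated left nodes---and symmetrically over the right, producing the $n_L\log(rK_R)/K_L^2$ term---gives~\eqref{eq:ConditionSimple2_bi} with failure probability $\le e(rK_L)^{-1}+e(rK_R)^{-1}$.

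Finally, for Step~3, condition on Steps~1 and~2 having correctly produced the left clusters $\{C_k\}$ and right clusters $\{D_k\}$. The block sum $B_{kl}$ has mean $\mu K_L K_R$ when $(C_k,D_l)$ is the correct pairing and $0$ otherwise, and its noise part is sub-Gaussian with parameter $\sqrt{K_L K_R}$. Separating the two means at the threshold $\mu K_L K_R/2$ needs $\mu^2 K_L K_R \gtrsim \log(r^2 n)$, and since $r\le n$ a union bound over the $r^2$ pairs gives failure probability at most $en^{-1}$ as soon as $\mu^2 \gtrsim \log n/(K_L\wedge K_R)$, which is implied by~\eqref{eq:ConditionSimple1_bi} (because $n_L\ge K_L$ and $n_R\ge K_R$). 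Combining the three bounds yields the announced total failure probability. The main technical obstacle is Step~2: the noise--noise products $\Delta_{ij}\Delta_{i'j}$ are only sub-exponential, so one must invoke Bernstein and carefully verify one is in the variance-dominated regime; this is precisely why the condition scales as $\mu^4$ (rather than $\mu^2$) in~\eqref{eq:ConditionSimple2_bi}, and is also the source of the $\sqrt{n}/K$ gap between the simple and easy regimes.
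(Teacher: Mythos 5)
Your proof is correct and follows essentially the same route as the paper's: sub-Gaussian concentration of $d_i$ and $B_{kl}$ and sub-exponential (Bernstein) concentration of $S_{ii'}$, followed by union bounds over the appropriate index sets in each of the three steps. The only cosmetic difference is in Step~2, where you split $S_{ii'}$ explicitly into signal, mixed, and noise--noise terms and control each separately, while the paper bundles $A_{ij}A_{i'j}-\mathbb{E}[A_{ij}]\mathbb{E}[A_{i'j}]$ into a single centered sub-exponential variable (with $\psi_1$-norm $\lesssim 1+\mu$, obtained via the same three-term decomposition) and applies Bernstein once; since $\mu=O(1)$ in the regime of interest these give the same conclusion.
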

When $r=1$, we refer to the regime for which the condition~\eqref{eq:ConditionSimple1_bi} holds as the \emph{simple} regime. When $r>1$, the {simple} regime is where both  conditions~\eqref{eq:ConditionSimple1_bi} and~\eqref{eq:ConditionSimple2_bi} hold.

We provide a converse to Theorem~\ref{thm:Simple_bi}. The following theorem shows that the conditions~\eqref{eq:ConditionSimple1_bi} and~\eqref{eq:ConditionSimple2_bi} are also (nearly) necessary for the simple thresholding algorithm to succeed.
\begin{theorem}[Simple, Converse] \label{thm:SimpleConverse_bi}
Suppose that $K_L, K_R \ge \log n$. Under the submatrix localization model where the distributions of $ \{A_{ij}\} $ are Gaussian, there exist universal constants $ c_1, c_2 $ such that with probability at least $ 1-n^{-10} $, Algorithm \ref{alg:counting_bi} fails to correctly identify all the isolated nodes if
\begin{align}
\mu^{2} \le c_1 \max\left\{ \frac{n_L \log n_R }{K_{L}^{2} },\frac{ n_R \log n_L }{K_{R}^{2}}\right\},
\end{align}
and fails to correctly recover all the clusters if $n_L \ge rK_R$, $n_R \ge rK_L$ and
\begin{align}
\mu^{4} \le c_2 \max\left\{ \frac{ n_L \log (r K_R) }{K_{L}^{2}},\frac{n_R \log (r K_L) }{K_{R}^{2}}\right\}.
\end{align}
\end{theorem}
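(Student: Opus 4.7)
My approach is to show that under the hypotheses, the decision statistics used by Algorithm~\ref{alg:counting_bi} are not separated from their thresholds by more than the typical size of the noise, so that with probability $1-n^{-10}$ at least one misclassification occurs. In both parts, the crux is to pick a collection of test statistics that are \emph{exactly} independent and then apply Gaussian anti-concentration via the complement-product identity
\begin{equation*}
\Pr\!\left[\max_{k=1}^{N} X_k \ge t\right] \;\ge\; 1 - \exp\!\left(-N\cdot\Pr[X_1 \ge t]\right).
\end{equation*}

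\textit{Part 1 (row/column sums).} By the left/right symmetry of the model, I may assume that the maximum in the failure hypothesis is attained by the second term, i.e., $\mu^2 \le c_1 n_R \log n_L / K_R^2$, and prove failure on the left side. For each left node $i$, the centered row-sum $d_i - \mu K_R\,\mathbf{1}\{i\text{ non-isolated}\} = \sum_j \Delta_{ij} \sim \mathcal{N}(0, n_R)$, and these random variables are independent across $i\in [n_L]$. Success would require all $n_L-rK_L$ isolated $d_i$'s to lie below $\mu K_R/2$ \emph{and} all $rK_L$ non-isolated $d_i$'s to lie above $\mu K_R/2$. Using the Gaussian tail lower bound $\Pr[|Z|>t\sigma] \ge c_0 \exp(-t^2/2)$ valid for $t \in [0, c_3]$, together with the observation $\max\{rK_L, n_L - rK_L\} \ge n_L/2$, the complement-product bound yields failure probability $\ge 1 - n^{-10}$ provided $c_1$ is chosen small enough; the standing hypothesis $K_L, K_R \ge \log n$ is used to keep the single-variable tail event inside the Gaussian-regime range.

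\textit{Part 2 (pairwise inner products).} Again by symmetry I focus on the case $\mu^4 \le c_2 n_R \log(rK_L) / K_R^2$ and argue failure on the left side. The plan is to exhibit $N \asymp rK_L$ disjoint pairs $\{(i_{2k-1}, i_{2k})\}_{k=1}^{N}$ of non-isolated left nodes drawn from \emph{different} clusters, for which the statistics $\{S_{i_{2k-1}, i_{2k}}\}_k$ are \emph{exactly} independent (they depend on disjoint rows of $A$), have mean zero, and have variance $n_R + 2\mu^2 K_R = \Theta(n_R)$ in the regime $\mu^4 K_R^2 \ll n_R$ of interest. Conditioning on the $i_{2k-1}$th row turns each $S_{i_{2k-1}, i_{2k}}$ into a Gaussian whose variance concentrates at $n_R + \mu^2 K_R$ (standard $\chi^2$-concentration) and whose conditional mean is $\mathcal{N}(0, \mu^2 K_R)$-distributed. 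Discarding the small-probability event that either the conditional variance or the conditional mean is atypical, Gaussian anti-concentration on the inner (conditional) Gaussian gives
\begin{equation*}
\Pr\!\left[S_{i_{2k-1}, i_{2k}} \ge \mu^2 K_R/2\right] \;\ge\; c_0 \exp\!\left(- c_4 \mu^4 K_R^2 / n_R\right),
\end{equation*}
which under the hypothesis is at least $10 \log n / N$; the complement-product bound then forces some different-cluster pair to be merged in Step~2 with probability $\ge 1 - n^{-10}$. Such a spurious merger is incompatible with the correct clustering, so Algorithm~\ref{alg:counting_bi} either declares an error or returns an incorrect partition.

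\textit{Main obstacle.} Part~1 is a clean scalar Gaussian exercise. The substantive obstacle lies in Part~2, because $S_{ii'}$ is a degree-$2$ polynomial in the Gaussians $\{\Delta_{ij}, \Delta_{i'j}\}_j$ (a Gaussian chaos) rather than a Gaussian itself, so one cannot directly appeal to Gaussian anti-concentration. The conditioning trick described above handles this by reducing matters to Gaussian anti-concentration plus $\chi^2$-concentration of the conditional variance; alternatively, a Carbery--Wright type inequality for quadratic forms in Gaussians would yield the same conclusion up to constants. A secondary point is to match the logarithm $\log(rK_L)$ (rather than $\log K_L$) appearing in the theorem, which is why the pairing must use nodes from different clusters and span all $rK_L$ non-isolated left indices; the assumptions $n_L \ge rK_R$ and $n_R \ge rK_L$ are used only to guarantee that the analogous construction is available on the right side as well.
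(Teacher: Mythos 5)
Part 1 of your proposal is essentially the paper's argument: the row sums are independent Gaussians, you split according to which of $rK_L$ or $n_L - rK_L$ is at least $n_L/2$, apply a Gaussian lower tail bound to a single $d_i$, and then use the product/complement bound over that independent family. The only cosmetic difference is that the paper phrases the final step via an argmin/argmax over the family rather than explicitly writing $1-e^{-Np}$, which is the same inequality.

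Part 2 is where you take a genuinely different route, and it is a valid one. You correctly identify the obstacle: $S_{ii'}$ is a Gaussian chaos, not a Gaussian, so one-shot anti-concentration does not apply. The paper resolves this with the Berry--Esseen theorem, viewing $S_{ii'} = \sum_{j=1}^{n_R} A_{ij}A_{i'j}$ as a sum of $n_R$ independent (non-Gaussian) terms and bounding the CDF deviation from Gaussian by $c\,(K_R\mu^3+n_R)/(2K_R\mu^2+n_R)^{3/2}$; the technical conditions $K_R\mu^2 \le c_2 n_R$ and $K_R\mu^3 \lesssim n_R$ (which the paper derives from $K_L,K_R\ge\log n$ and the failure hypothesis) are exactly what make the Berry--Esseen error term negligible against the target tail probability. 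Your alternative is to condition on the $i$-th row, which makes $S_{ii'}$ exactly Gaussian with random mean $\mu\sum_{j\in D_{k'}}A_{ij} \sim \mathcal{N}(0,\mu^2K_R)$ and random variance $\|A_{i\cdot}\|_2^2$ concentrating at $n_R+\mu^2K_R$; restricting to the event that the conditional mean is $\ge 0$ (probability $1/2$ by symmetry) and the variance is typical (via $\chi^2$-concentration), you recover the same tail lower bound $\gtrsim \exp(-c\,\mu^4K_R^2/n_R) \gtrsim (rK_L)^{-c_4 c_2}$ and then invoke the complement-product bound over $\Theta(rK_L)$ disjoint cross-cluster pairs, exactly as the paper's set $W$ does. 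What the conditioning route buys is that it stays inside exact Gaussian calculations and avoids the CLT machinery; what Berry--Esseen buys is that it works verbatim for any i.i.d.\ summand model with finite third moments (e.g.\ the sub-Gaussian version) without reconditioning. Both give the same constants up to universal factors, and your stated use of the hypotheses $n_L\ge rK_R$, $n_R\ge rK_L$ and $K_L,K_R\ge\log n$ to guarantee enough independent pairs and to control the cross-terms matches the role these play in the paper's proof.
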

When $n_L=n_R=n$, $K_L=K_R=K$,  Theorems~\ref{thm:Simple_bi} and Theorem~\ref{thm:SimpleConverse_bi} establish that the recovery boundary for the simple thresholding algorithm is $ \mu^2 \asymp \frac{n \log n}{K^2} $ if $r=1$, and $ \mu^2 \asymp \frac{\sqrt{n \log n}}{K} $ if $r>1$ and $rK=\Theta(n)$. Comparing with the success condition~\eqref{eq:CVX_cond_bi} for the convex optimization approach, we see that the simple thresholding algorithm is order-wise less powerful in separating different submatrices. Similar to planted clustering, the performance is determined by the variance barrier associated with the variance of the quantities $ d_i $ and $ S_{ii'} $ computed in Algorithm~\ref{alg:counting_bi}.

\subsection{The High SNR Setting}\label{sec:elementwise}

As mentioned before, the high SNR setting with $ \mu^2 = \Omega(\log n) $ can be handled by a simple element-wise thresholding algorithm, which is given in Algorithm~\ref{alg:element_bi}.
\begin{algorithm}
\caption{\label{alg:element_bi}Element-wise Thresholding for Submatrix Localization}

For each $(i,j) \in [n_L]\times [n_R]$, set $\hat{Y}_{ij}=1$ if $A_{ij}\ge\frac{1}{2}\mu$,
and $\hat{Y}_{ij}=0$ otherwise.
Output  $\hat{Y}.$
\end{algorithm}

For the special case with one submatrix ($ r=1 $), the success of element-wise thresholding in the high SNR setting is proved in~\cite{kolar2011submatrix}. Their result can be easily extended to the general case with $ r\ge1 $. We record this result in Theorem~\ref{thm:element_bi} below. The theorem also shows that element-wise thresholding fails if $ \mu^2=o(\log n) $, so it is not very useful in the low SNR setting.
\begin{theorem}[Element-wise Thresholding]\label{thm:element_bi}
There exists a universal constant $c_1>4 $ such that the following holds. Algorithm~\ref{alg:element_bi} outputs $\hat{Y}=Y^{*}$ with probability at least $1-n^{-3}$ provided
\begin{equation}
\mu^2 > c_{1}\log n.\label{eq:element_cond}
\end{equation}
If the distributions of the $ A_{ij} $'s are Gaussian, and $ K_L \le n_L/2 $ \emph{or} $K_R \le n_R /2$, then with probability at least $ 1-n^{-3} $, the output of  Algorithm~\ref{alg:element_bi} satisfies $\hat{Y}\neq Y^{*}$ if
\begin{equation}
\mu^2\le 4 \log n. \label{eq:element_fail_cond}
\end{equation}
\end{theorem}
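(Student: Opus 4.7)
The plan is to prove the two assertions of Theorem~\ref{thm:element_bi} separately: the achievability is a one-line union bound, while the failure direction requires exhibiting, with high probability, a large noise entry that fools the threshold.

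For Part 1, the first step is to note that an error at entry $(i,j)$ is exactly the event $|\Delta_{ij}|\ge \mu/2$: if $Y^*_{ij}=1$ then the algorithm errs iff $A_{ij}=\mu+\Delta_{ij}<\mu/2$, i.e., $\Delta_{ij}<-\mu/2$; if $Y^*_{ij}=0$ then it errs iff $\Delta_{ij}\ge \mu/2$. The sub-Gaussian tail bound gives $\mathbb{P}(|\Delta_{ij}|\ge \mu/2)\le 2\exp(-\mu^{2}/8)$, and a union bound over the $n_L n_R\le n^{2}$ entries yields
\[
\mathbb{P}(\hat Y\neq Y^{*})\;\le\; 2n^{2}\exp(-\mu^{2}/8).
\]
Taking any $c_{1}>40$ (so $c_1/8>5$) makes the right-hand side at most $n^{-3}$ whenever $\mu^{2}>c_{1}\log n$, proving the achievability.

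For Part 2, the strategy is to produce, with probability at least $1-n^{-3}$, at least one noise entry whose value already exceeds $\mu/2$. First, I count the noise entries $\mathcal{N}:=\{(i,j):Y^{*}_{ij}=0\}$. Since $rK_{L}\le n_{L}$ and $rK_{R}\le n_{R}$, the hypothesis $K_{L}\le n_{L}/2$ or $K_{R}\le n_{R}/2$ implies $rK_{L}K_{R}\le n_{L}n_{R}/2$, so $N:=|\mathcal{N}|\ge n_{L}n_{R}/2$, and the entries $\{\Delta_{ij}\}_{(i,j)\in\mathcal{N}}$ are i.i.d.\ $\mathcal{N}(0,1)$. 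Next, I lower bound $p:=\mathbb{P}(Z\ge \mu/2)$ using the Mills-ratio bound $\mathbb{P}(Z\ge t)\ge \frac{1}{\sqrt{2\pi}}\cdot\frac{t}{t^{2}+1}e^{-t^{2}/2}$ for $t\ge 0$. Under the hypothesis $\mu^{2}\le 4\log n$ one has $e^{-\mu^{2}/8}\ge n^{-1/2}$, so
\[
p\;\gtrsim\;\frac{1}{\mu\vee 1}\cdot n^{-1/2}\;\gtrsim\;\frac{n^{-1/2}}{\sqrt{\log n}\vee 1}.
\]
By independence, the probability of no false positive is $(1-p)^{N}\le \exp(-Np)$.

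The last step is to verify $Np\ge 3\log n$ so that $\exp(-Np)\le n^{-3}$. Combining the two estimates, $Np \gtrsim n_{L}n_{R}\,n^{-1/2}/\sqrt{\log n}$; since $n=n_L\vee n_R$ and the minimum dimension is at least $1$, $n_Ln_R\ge n$, and the inequality $n^{1/2}\ge C(\log n)^{3/2}$ holds for all $n$ beyond a universal constant (and the small-$n$ regime is trivially absorbed since $\mu^2\le 4\log n$ then forces $\mu\le O(1)$, in which case $p=\Omega(1)$ and a single entry already overwhelms $N\ge 1$). Hence $\mathbb{P}(\hat Y=Y^{*})\le (1-p)^{N}\le n^{-3}$, which gives the failure conclusion.

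The main (minor) obstacle is to use the sharp Gaussian lower tail rather than a crude bound: a Chernoff-style lower bound would cost a $\sqrt{\log n}$ factor in $p$ and push the failure threshold above $4\log n$, so the Mills-ratio inequality is essential to close the gap between the success constant (any $c_{1}>40$) and the failure constant ($4$). Splitting the analysis into $\mu\le 1$ and $\mu>1$ is the only case-work needed to handle the $1/\mu$ factor cleanly.
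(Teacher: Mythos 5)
Your proposal is correct and follows essentially the same route as the paper: a per-entry sub-Gaussian tail bound plus a union bound over all $n_Ln_R$ entries for the success direction, and a Gaussian lower-tail (Mills-ratio) estimate on the $\Theta(n_Ln_R)$ independent noise entries in $\mathcal{R}^c$ for the failure direction, using $|\mathcal{R}^c|\ge\tfrac12 n_Ln_R$ from the $K_L\le n_L/2$ or $K_R\le n_R/2$ hypothesis. The only cosmetic difference is that the paper works with the fixed threshold $\sqrt{\log n}$ and invokes $\mu/2\le\sqrt{\log n}$ at the end, whereas you work directly with threshold $\mu/2$ and carry a $1/(\mu\vee1)$ factor; and your explicit constant ($c_1>40$) is slightly short of absorbing the factor $2$ from the two-sided tail and union bound (e.g., $c_1\ge48$ suffices for $n\ge2$), but since the theorem only asserts existence of a universal constant this does not affect correctness.
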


\section{Discussion and Future Work} \label{sec:conclusion}

In this paper, we show that the planted clustering problem and the submatrix localization problem admit successively faster algorithms with weaker statistical performance. We provide sufficient and necessary conditions for the success of the combinatorial MLE, the convexified MLE and the simple counting/thresholding algorithm, showing that they work in successively smaller regions of the model parameters. This represents a series of tradeoffs between the statistical and computational performance. Our results indicate that there may exist a large gap between the information limit and the computational limit, i.e., the information limit might not be achievable via polynomial-time algorithms. Our results hold in the high-rank setting with a growing number of clusters or submatrices.

Several future directions are of interest. Immediate goals include removing some of the technical assumptions in our theorems.
It is useful in practice to identify a finer spectrum, ideally close to a continuum, of computational-statistical tradeoffs. It is also interesting to extend to the settings with overlapping clusters and submatrices, and to the cases where the values of the model parameters are unknown. Proving our conjectures on the computational hardness in the hard regime is also interesting and such attempt has been pursued in \cite{HajekWuXu14}.

\section{Proofs for Planted Clustering}\label{sec:proof}
Throughout this section, we consider the  planted clustering model with $p>q$. Let $ n_1 := rK $ and $ n_2 := n-rK $ be the numbers of non-isolated and isolated nodes, respectively.

\subsection{Proof of Theorem \ref{thm:Impossible} and Corollary~\ref{cor:impossible}}\label{sec:proof_fano}

In the sequel we will make use of the following upper and lower bounds on the KL divergence $ D(u\Vert v) $ between two Bernoulli distributions with parameter $ u \in [0,1] $ and $  v \in [0,1] $. We have
\begin{align}
D\left(u \Vert v  \right):= u \log \frac{u}{v} + (1-u) \log \frac{1-u}{1-v}
\overset{(a)}{\le} u \frac{u-v}{v} + (1-u) \frac{v-u}{1-v}
 = \frac{(u-v)^2}{v(1-v)}, \label{eq:boundDivergence}
\end{align}
where $(a)$ follows from the inequality $\log x \le x-1, \forall x \ge 0$.
Moreover, viewing $D(x \Vert v)$ as a function of $x $ and using a Taylor's expansion, we can find some $\xi \in [u\wedge v,u\vee v]$ such that
\begin{align}
D\left(u \Vert v  \right)= D \left ( v \Vert v \right) + (u-v) D' \left( v \Vert v \right) + \frac{(u-v)^2}{2} D'' \left ( \xi \Vert v \right)
\overset{(b)}{\ge} \frac{(u-v)^2}{2 (u\vee v)[1-(u\wedge v)]}, \label{eq:lowerboundDivergence}
\end{align}
where $(b)$ follows because $D' \left( v \Vert v \right) =0$ and $D'' \left ( \xi \Vert v \right) = 1/[\xi(1-\xi)]$.

Theorem \ref{thm:Impossible}  is established through the following three lemmas, each of which provides a sufficient condition for having a large error probability.
\begin{lemma} \label{lmm:impossible1}
Suppose that $128 \le K\le n/2$. Let $ \alpha:=\frac{n_1(K-1)}{n(n-1)} $ and $ \beta:=\alpha p + (1-\alpha)q $. We have $\inf_{\hat{Y}} \sup_{Y^*\in \mathcal{Y}} \mathbb{P} \left[\hat{Y} \neq Y^*\right] \ge \frac{1}{2}$ if
\begin{align}
K\cdot D(p\Vert \beta ) + \frac{n^2}{n_1}(1-\alpha)\frac{(q-\beta)^2}{\beta(1-\beta)} &\le \frac{1}{4}  \log \frac{n}{K}, \label{eq:precise}
\end{align}
Moreover, \eqref{eq:precise} is implied by
\begin{align}
K(p-q)^2 &\le \frac{1}{4} q(1-q) \log \frac{n}{K}, \label{eq:impossible}
\end{align}
\end{lemma}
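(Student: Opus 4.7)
The plan is to apply Fano's inequality to a Bayesian reduction in which $Y^*$ is drawn uniformly from $\mathcal{Y}$. Since $\inf_{\hat Y}\sup_{Y^*\in\mathcal{Y}} \mathbb{P}[\hat Y \neq Y^*] \ge \inf_{\hat Y} \mathbb{E}_{Y^* \sim \mathrm{Unif}(\mathcal{Y})} \mathbb{P}[\hat Y \neq Y^*]$, Fano's inequality gives
\[
\inf_{\hat Y}\sup_{Y^*\in \mathcal{Y}} \mathbb{P}[\hat Y \neq Y^*] \;\ge\; 1 - \frac{I(Y^*;A) + \log 2}{\log|\mathcal{Y}|},
\]
so it suffices to show $I(Y^*;A) + \log 2 \le \tfrac{1}{2}\log|\mathcal{Y}|$ whenever~\eqref{eq:precise} holds.

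For the mutual information, I would exploit the symmetry of the uniform prior: for any two distinct nodes, the probability of belonging to a common cluster is exactly $\alpha = \frac{n_1(K-1)}{n(n-1)}$, so the marginal of every $A_{ij}$ is $\mathrm{Bern}(\beta)$ with $\beta = \alpha p + (1-\alpha) q$. Combining the subadditivity $H(A) \le \sum_{i<j} H(A_{ij})$ with the conditional independence of $\{A_{ij}\}$ given $Y^*$ yields
\[
I(Y^*;A) \;\le\; \binom{n}{2}\bigl[\alpha D(p\|\beta) + (1-\alpha) D(q\|\beta)\bigr].
\]
Substituting $\binom{n}{2}\alpha = \frac{n_1(K-1)}{2} \le \frac{n_1 K}{2}$ and bounding the $D(q\|\beta)$ term by the quadratic estimate~\eqref{eq:boundDivergence} collapses the right-hand side to exactly $\tfrac{n_1}{2}$ times the LHS of~\eqref{eq:precise}.

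For the cardinality, I would use $|\mathcal{Y}| = \binom{n}{n_1}\cdot\frac{n_1!}{r!(K!)^r}$ and argue that $\log|\mathcal{Y}| \ge \tfrac{n_1}{2}\log(n/K)$ under $K \ge 128$ and $K \le n/2$. A convenient route is the greedy lower bound $|\mathcal{Y}| \ge \frac{1}{r!}\prod_{i=0}^{r-1}\binom{n-iK}{K}$ combined with $\binom{n-iK}{K} \ge (n/(2K))^{K}$ whenever $n_1 \le n/2$, so that $\log|\mathcal{Y}| \ge n_1\log(n/(2K)) - r\log r$; the complementary case $n_1 > n/2$ is handled by Stirling applied directly to the full multinomial expression. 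The slack $K \ge 128$ absorbs both $\log r! \le r\log(n/K)$ and the $\log 2$ term from Fano. Plugging this in together with the bound on $I(Y^*;A)$ and condition~\eqref{eq:precise} yields the desired minimax lower bound of $\tfrac{1}{2}$.

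Finally, to deduce that~\eqref{eq:impossible} implies~\eqref{eq:precise}, I would apply~\eqref{eq:boundDivergence} together with the identities $p-\beta = (1-\alpha)(p-q)$ and $q-\beta = -\alpha(p-q)$ to bound both terms on the LHS of~\eqref{eq:precise} by a constant multiple of $\frac{K(p-q)^2}{\beta(1-\beta)}$, using $\alpha \le n_1 K/n^2$ to absorb the second term into the first. Since $\beta \in [q,p]$, elementary case analysis on whether $p \le 1/2$, $q \ge 1/2$, or $q < 1/2 < p$ shows $\beta(1-\beta) \gtrsim q(1-q)$ with an absolute constant, so~\eqref{eq:impossible} entails~\eqref{eq:precise}. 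The main obstacle I expect is bookkeeping: one must track numerical constants through the Stirling estimate of $\log|\mathcal{Y}|$, the passage from $D(q\|\beta)$ to its quadratic upper bound, and the comparison $\beta(1-\beta) \asymp q(1-q)$, all carefully enough that the constant $\tfrac{1}{4}$ in~\eqref{eq:precise} and~\eqref{eq:impossible} is genuinely attained rather than degraded in the chain of inequalities.
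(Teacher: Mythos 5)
Your proposal follows essentially the same route as the paper: Fano over a uniform prior on $\mathcal{Y}$, the observation that each $A_{ij}$ has marginal $\mathrm{Bern}(\beta)$ so $I(Y^*;A)\le\binom{n}{2}[\alpha D(p\|\beta)+(1-\alpha)D(q\|\beta)]$, the Stirling lower bound $\log|\mathcal{Y}|\ge\frac{1}{2}n_1\log(n/K)$, and then the KL quadratic upper bound $D(q\|\beta)\le(q-\beta)^2/(\beta(1-\beta))$ to convert~\eqref{eq:precise} into a mutual-information bound. The one place you explicitly flag as a risk — the passage $\beta(1-\beta)\gtrsim q(1-q)$ by case analysis, which could degrade the constant $\tfrac14$ in~\eqref{eq:impossible} — the paper sidesteps cleanly: after writing $p-\beta=(1-\alpha)(p-q)$, $q-\beta=-\alpha(p-q)$, and using $\alpha\le n_1K/n^2$, the two terms on the LHS of~\eqref{eq:precise} combine to exactly $K(1-\alpha)(p-q)^2/(\beta(1-\beta))$; then the concavity of $x\mapsto x(1-x)$ gives $\beta(1-\beta)\ge\alpha p(1-p)+(1-\alpha)q(1-q)\ge(1-\alpha)q(1-q)$, and the factor $(1-\alpha)$ cancels identically, leaving $K(p-q)^2/(q(1-q))$ with no loss — so the $\tfrac14$ passes through unchanged. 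Replace your case analysis with this concavity argument and the constant bookkeeping concern disappears.
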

\begin{proof}
We use an information theoretical argument via Fano's inequality. Recall that $\mathcal{Y}$
is the set of cluster matrices corresponding to $r$ clusters of size $K.$
Let $ \mathbb{P}_{(Y^*,A)} $ be the joint distribution of $(Y^\ast,A)$
when $ Y^* $ is sampled from $ \mathcal{Y} $ uniformly at random and then $ A $ is generated according to the planted clustering model based on $Y^\ast$. Lower-bounding the supremum by the average, we have
\begin{align*} 
 \inf_{\hat{Y}} \sup_{Y^*\in \mathcal{Y}} \mathbb{P} \left[\hat{Y} \neq Y^*\right]
 \ge  \inf_{\hat{Y}} \mathbb{P}_{(Y^*,A)}\left[\hat{Y} \neq Y^*\right].
\end{align*}
It suffices to bound $\mathbb{P}_{(Y^*,A)}\left[\hat{Y} \neq Y^*\right]$
from below. Let $H(X)$ be the entropy of a random variable $X$ and $I(X;Z)$  the mutual information between  $X$ and $Z$. By Fano's inequality, we have for any $ \hat{Y} $,
\begin{align}
\mathbb{P}_{(Y^*,A)} \left[ \hat{Y} \neq Y^\ast \right] \ge 1- \frac{I(Y^\ast;A)+ 1 }{\log |\mathcal{Y}| }. \label{EqFano}
\end{align}
We first bound $\log |\mathcal{Y}|$.
Simple counting gives that $|\mathcal{Y}|  = \binom{n}{n_1} \frac{n_1!}{r! (K!)^r }$, where $n_1=\triangleq rK$.
Note that $ \binom{n}{n_1} \ge (\frac{n}{n_1})^{n_1}$ and $ \sqrt{n} (\frac{n}{e})^{n}\le n! \le e \sqrt{n} (\frac{n}{e})^{n}$. It follows that
\begin{align*}
|\mathcal{Y}| \ge  \left(n/n_1\right)^{n_1} \frac{\sqrt{n_1} (n_1/e)^{n_1}}{ e \sqrt{r} (r/e)^r e^r K^{r/2} (K/e)^{n_1} } \ge \left(\frac{n}{K} \right)^{n_1} \frac{1}{e (r \sqrt{K})^r }.
\end{align*}
This implies  $\log |\mathcal{Y}|  \ge  \frac{1}{2} n_1 \log \frac{n}{K}$ under the assumption that $ 8 \le K\le n/2 $ and $n \ge 32$.

Next we upper bound $I(Y^\ast;A)$. Note that $H(A) \le \binom{n}{2} H(A_{12})$ because the $ A_{ij} $'s are identically distributed by symmetry. Furthermore, $ A_{ij} $'s are independent conditioned on $Y^\ast$, so  $H(A|Y^\ast) = \binom{n}{2} H(A_{12}|Y_{12}^\ast)$. It follows that $I(Y^\ast;A) =H(A)-H(A|Y^\ast) \le \binom{n}{2} I(Y^\ast_{12} ; A_{12})$. We bound $I(Y^\ast_{12};A_{12})$ below. Simple counting gives
\[
\mathbb{P}(Y^\ast_{12}=1) = \frac{\binom{n-2}{K-2} \binom{n-K}{K} \cdots \binom{n-rK+K}{K} \frac{1}{(r-1)!} }{|\mathcal{Y}|  } = \frac{n_1(K-1)}{n(n-1)}=\alpha,
\]
and thus $\mathbb{P}(A_{12}=1)=\beta:=\alpha p + (1-\alpha) q$. Therefore $ I(Y^\ast_{12} ; A_{12}) = \alpha D \left( p  \Vert \beta \right) + (1-\alpha) D\left(q \Vert \beta \right). $  Using the upper bound~\eqref{eq:boundDivergence} on the KL divergence and condition~\eqref{eq:precise}, we obtain
\begin{align*}
 I(Y^\ast_{12} ; A_{12}) = \alpha D(p \Vert \beta)+(1-\alpha) D(q \Vert \beta) \le \alpha D(p\Vert \beta ) + (1-\alpha) \frac{(q-\beta)^2}{\beta(1-\beta)} \le \frac{n_1}{4n^2}\log \frac{n}{K}.
\end{align*}
It follows that $ I(Y^*;A) \le \binom{n}{2}I(Y^*_{12}\Vert A_{12}) \le \frac{n_1}{8}\log \frac{n}{K} $.
Substituting into~(\ref{EqFano}) gives
\begin{align*}
\mathbb{P}_{(Y^*,A)} \left[ Y \neq Y^\ast \right]  \ge  1- \frac{ \frac{n_1}{4}\log \frac{n}{K}  +2 } {n_1 \log \frac{n}{K}} =  \frac{3}{4} - \frac{2 } {n_1 \log \frac{n}{K}} \ge \frac{1}{2}, 
\end{align*}
where the last inequality holds because $K \le n/2$ and $n_1 \ge 32$. This proves the sufficiency of~\eqref{eq:precise}.

We turn to the second part of the lemma. Notice that
\begin{align*}
K\cdot D(p\Vert \beta ) + \frac{n^2}{n_1}(1-\alpha)\frac{(q-\beta)^2}{\beta(1-\beta)}
& \overset{(a)}{\le} K  \frac{(p-\beta)^2 }{\beta(1-\beta)} + \frac{K}{\alpha}(1-\alpha) \frac{(q-\beta)^2}{\beta(1-\beta)} \\
&= K \frac{\alpha(1-\alpha) (p-q)^2}{\beta (1-\beta)} \overset{(b)}{\le}
 \frac{K (p-q)^2}{q(1-q)},
\end{align*}
where $(a)$ holds due to $\alpha \le \frac{n_1K}{n^2}$ and~\eqref{eq:boundDivergence};  $(b)$ holds because $\beta(1-\beta) \ge \alpha p(1-p) + (1-\alpha) q (1-q) \ge (1-\alpha)q(1-q)$ thanks to the concavity of $x(1-x)$. Combining the last displayed equation with~\eqref{eq:impossible}
implies~\eqref{eq:precise}.
\end{proof}

\begin{lemma} \label{lmm:impossible3}
Suppose  $128 \le K\le n/2$. We have $\inf_{\hat{Y}} \sup_{Y^*\in \mathcal{Y}} \mathbb{P} \left[\hat{Y} \neq Y^*\right] \ge \frac{1}{2}$ if
\begin{align}
K \max \left \{ D( p \Vert q ), D( q \Vert p ) \right \}  \le \frac{1}{24} \log (n -K ). \label{eq:impossible3}
\end{align}
\end{lemma}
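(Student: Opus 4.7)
The plan is to establish this bound by a multi-hypothesis Fano argument on a carefully designed ensemble of cluster matrices. Specifically, I would fix a base cluster matrix $Y^{(0)} \in \mathcal{Y}$ and single out one node $u$ that lies in some cluster, say $C_1$, under $Y^{(0)}$. For every node $v \neq u$ with $v \notin C_1$, define a competing hypothesis $Y^{(v)} \in \mathcal{Y}$ by swapping the roles of $u$ and $v$: if $v$ is isolated under $Y^{(0)}$, then under $Y^{(v)}$ the node $v$ joins $C_1$ and $u$ becomes isolated; if $v \in C_j$ for some $j \neq 1$, then under $Y^{(v)}$ the node $v$ moves to $C_1$ and $u$ moves to $C_j$. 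Both operations preserve the constraint of $r$ disjoint clusters of size $K$, so all these matrices lie in $\mathcal{Y}$. Together with $Y^{(0)}$ itself, this yields $M = n - K + 1 \ge n-K$ distinct hypotheses.

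Next, I would apply the generalized Fano inequality: if $Y^*$ is uniform on this ensemble, then
\[
\inf_{\hat Y}\sup_{Y^*\in\mathcal{Y}}\mathbb{P}[\hat Y\neq Y^*]\;\ge\;\inf_{\hat Y}\mathbb{P}[\hat Y\neq Y^*]\;\ge\;1-\frac{I(Y^*;A)+\log 2}{\log M}.
\]
To bound the mutual information, I would use the standard variational inequality $I(Y^*;A) \le \max_v D(\mathbb{P}_{Y^{(v)}}\|\mathbb{P}_{Y^{(0)}})$ (by convexity, taking the base law as the reference distribution). The product structure of the edge distributions makes the KL divergence decompose as a sum over edges; the only edges whose Bernoulli parameter flips between $p$ and $q$ are those incident to $u$ or $v$ connecting to one of the other $K-1$ members of $C_1$ or $C_j$ (the edge $uv$ itself keeps parameter $q$ in both laws). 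At most $2(K-1)$ edges contribute, so
\[
I(Y^*;A)\;\le\;2(K-1)\bigl[D(p\Vert q)+D(q\Vert p)\bigr]\;\le\;4K\max\{D(p\Vert q),D(q\Vert p)\}.
\]

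Finally, combining these bounds with $\log M \ge \log(n-K)$ and the hypothesis $K\max\{D(p\Vert q),D(q\Vert p)\}\le \tfrac{1}{24}\log(n-K)$, the Fano bound becomes
\[
\inf_{\hat Y}\sup_{Y^*}\mathbb{P}[\hat Y\neq Y^*]\;\ge\;1-\frac{\tfrac{1}{6}\log(n-K)+\log 2}{\log(n-K)},
\]
which exceeds $1/2$ as soon as $\log(n-K)\ge 3\log 2$. The standing assumption $128\le K\le n/2$ gives $n-K\ge n/2 \ge 128$, so this easily holds. The one piece of bookkeeping that needs care — and which I view as the only non-routine step — is tracking exactly which edges change their Bernoulli parameter under each of the two kinds of swaps (isolated-node swap versus inter-cluster swap), so that the factor $2(K-1)$ in the KL bound is rigorously justified in a uniform way across all $v$.
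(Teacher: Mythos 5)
Your proof is essentially the same as the paper's: a localized one-node-swap hypothesis family (the paper swaps node $K$ with node $K+i$ for $i=1,\dots,n-K$, which is precisely your $(u,v)$ construction with $u=K$), Fano's inequality, and a per-edge decomposition of the KL divergence. The one point worth tidying — and the one you yourself flag — is the edge count: for the inter-cluster swap ($v\in C_j$, $j\neq 1$) it is $4(K-1)$ edges that change Bernoulli parameter, not $2(K-1)$; nonetheless your bound $D\bigl(\mathbb{P}_{Y^{(v)}}\|\mathbb{P}_{Y^{(0)}}\bigr)\le 2(K-1)\bigl[D(p\|q)+D(q\|p)\bigr]$ is still exactly right, because exactly $2(K-1)$ of those edges flip $p\to q$ (each contributing $D(q\|p)$) and $2(K-1)$ flip $q\to p$ (each contributing $D(p\|q)$). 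The only structural deviation from the paper is cosmetic: the paper bounds the mutual information by $\frac{1}{(M+1)^2}\sum_{i,i'}D(\mathbb{P}_i\|\mathbb{P}_{i'})$ and gets $3K[D(p\|q)+D(q\|p)]$, whereas you bound it by $\max_v D(\mathbb{P}_{Y^{(v)}}\|\mathbb{P}_{Y^{(0)}})$, avoiding the need to compare pairs of swapped hypotheses directly and yielding a marginally tighter constant; both are correct and the remaining numerics close.
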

\begin{proof}
Let $\bar{M}=n-K$, and $\bar{\mathcal{Y}}=\left\{ Y_{0},Y_{1},\ldots,Y_{\bar{M}}\right\} $
be a subset of $\mathcal{Y}$ with cardinality $\bar{M}+1$, which is specified
later. Let $\bar{\mathbb{P}}_{(Y^*,A)}$ denote the joint distribution of
$(Y^{\ast},A)$ when $Y^{\ast}$ is sampled from $\bar{\mathcal{Y}}$
uniformly at random and then $A$ is generated according to the planted clustering model based on $ Y^* $.
By Fano's inequality, we have
\begin{align}\label{EqTesting2}
 \sup_{Y^*\in \mathcal{Y}} \mathbb{P} \left[\hat{Y} \neq Y^*\right ]
 \ge   \bar{\mathbb{P}}_{(Y^*,A)}\left[\hat{Y} \neq Y^*\right]
 \ge 1- \frac{I(Y^\ast;A)+ 1 }{\log |\bar{\mathcal{Y}}| } .
\end{align}
We construct $\bar{\mathcal{Y}}$ as follows. Let $Y_{0}$ be the
cluster matrix such that the clusters $\{C_{l}\}_{l=1}^{r}$ are
given by $C_{l}=\left\{ (l-1)K+1,\ldots,lK\right\} $. Informally,
each $Y_{i}$ with $i\ge1$ is obtained from $Y_{0}$ by swapping
the cluster memberships of node $K$ and $K+i$. Formally, for each $i\in[\bar{M}]$: (1)
if node $(K+i)$ belongs to cluster $C_{l}$ for some $l$,
then $Y_{i}$ is the cluster matrix such that the first  cluster consists of nodes $\{1,2,\ldots,K-1,K+i\}$
and the $l$-th cluster is given by $C_{l}\setminus\{K+i\}\cup\{K\}$, and
all the other clusters identical to those according to $Y_0$; (2) if node $(K+i)$
is an isolated node in $Y_{0}$ (i.e., does not belong to any cluster),
then $Y_{i}$ is the cluster matrix such that the first
cluster consists of nodes $\{1,2,\ldots,K-1,K+i\}$ and node $K$ is an isolated
node, and all the other clusters identical to those according to $Y_{0}$.

Let $\mathbb{P}_{i}$ be the distribution of the graph $A$ conditioned
on $Y^{*}=Y_{i}$. Note that each $\mathbb{P}_{i}$ is the product of
$\frac{1}{2}n(n-1)$ Bernoulli distributions.
We have the following chain of inequalities:
\begin{align*}
I(Y^{*};A)
\overset{(a)}{\le}\frac{1}{( \bar{M} +1)^{2}}\sum_{i,i'=0}^{\bar{M}}D\left(\mathbb{P}_{i}\|\mathbb{P}_{i'}\right)
\overset{(b)}{\le}3K \cdot D\left( p \Vert q \right)+3K \cdot D\left( q \Vert p \right),
\end{align*}
where (a) follows from the convexity of KL divergence, and (b) follows
by our construction of $\left\{ Y_{i}\right\} $.
If~\eqref{eq:impossible3} in the lemma holds, then $I(Y^\ast;A)\le\frac{1}{4}\log(n-K)=\frac{1}{4}\log\left|\bar{\mathcal{Y}}\right|.$
Since $\log (n-K) \ge \log (n/2) \ge 4$ if $n \ge 128$, it follows from \eqref{EqTesting2} that the minimax error probability
is at least $1/2$.
\end{proof}

\begin{lemma} \label{lmm:impossible2}
Suppose $128 \le K\le n/2$. The  $\inf_{\hat{Y}} \sup_{Y^*\in \mathcal{Y}} \mathbb{P} \left[\hat{Y} \neq Y^*\right] \ge \frac{1}{4}$ if
\begin{align}
K p & \le  \frac{1}{8} \min\{\log (rK/2),K\},  \label{eq:impss_cond_K} \\
\underline{\textbf{or}}\quad K (1-q) & \le \frac{1}{4} \log K \label{eq:impss_cond_K2}.
\end{align}
\end{lemma}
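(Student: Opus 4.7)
My plan is to exhibit, for each of the two conditions, a rich family of cluster matrices whose induced laws on $A$ are nearly indistinguishable, and then appeal to a multiple-hypothesis lower bound. For condition~\eqref{eq:impss_cond_K} the starting point is the single-swap ensemble of Lemma~\ref{lmm:impossible3}: fix a base cluster matrix $Y_0$ and consider the $\Theta(rK)$ alternatives obtained by swapping one node in $C_1^\ast$ with a node outside $C_1^\ast$ (in $V_2$ if that set is nonempty, or in another cluster otherwise). Pairwise KL divergences in this ensemble are of order $K[D(p\Vert q)+D(q\Vert p)]$, and the whole reason Lemma~\ref{lmm:impossible2} is needed beyond Lemma~\ref{lmm:impossible3} is precisely that this KL sum can be much larger than $Kp$---even infinite, when $q=0$.

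I would split the analysis of condition~\eqref{eq:impss_cond_K} into two sub-regimes. When $q\ge p/2$, a Taylor expansion of the Bernoulli KL about $q=p$ yields $D(p\Vert q)+D(q\Vert p)\lesssim p$, so Fano's inequality applied to the single-swap ensemble (with $\log|\mathcal{Y}'|\gtrsim \log(rK)$) immediately gives minimax error at least $1/4$ as soon as $Kp\lesssim \log(rK)$. When $q<p/2$ (including $q=0$), Fano is not tight and I would instead argue directly that with constant probability under $\mathbb{P}_{Y_0}$ the MLE objective $E(Y)=\sum_{i<j}A_{ij}Y_{ij}$ has many tied maximizers. The key observation is that an ``internally isolated'' node $i\in C_1^\ast$ (one with $\sum_{u\in C_1^\ast\setminus\{i\}}A_{iu}=0$) can be swapped, without altering $E(Y)$, with any outside node $j$ satisfying $\sum_{u\in C_1^\ast\setminus\{i\}}A_{ju}=0$. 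Under $\mathbb{P}_{Y_0}$ the expected number of such swappable pairs is at least $Kn_2(1-p)^{K-1}(1-q)^{K-1}$, which under $Kp\le\tfrac{1}{8}\log(rK/2)$ (forcing $p\le 1/8$, and using $q\le p$) is polynomially large in $rK$; a Chebyshev argument then shows that at least four ties appear with probability at least $1/2$. Since on this event the Bayes posterior over $\mathcal{Y}'$ assigns each tied matrix at most a $1/4$-fraction of its mass, the minimax error exceeds $1/4$.

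For condition~\eqref{eq:impss_cond_K2}, I would apply the complement-graph reduction noted right before Section~\ref{sec:impossible}: the matrix $A'=J-A-I$ is a planted-clustering instance with $(p',q')=(1-p,1-q)$ and $p'<q'$. Any minimax lower bound for the $A'$-problem transfers to the original $A$-problem. In the $A'$-instance the role of the smaller-density parameter is played by $1-q$, and condition~\eqref{eq:impss_cond_K2} becomes an analog of the ``small-$p'$'' condition $Kp'\lesssim \log K$. Rerunning the previous paragraph's two-sub-regime argument on $A'$---with ``internally complete'' nodes (those connected to every other node in their $A$-cluster) replacing ``internally isolated'' ones---then yields the same $1/4$ bound.

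The principal technical obstacle is the $q<p/2$ sub-regime of condition~\eqref{eq:impss_cond_K}: the swap-is-tied indicator random variables share edges of $A$ and are therefore not independent, so the variance computation required for Chebyshev must be carried out carefully. A few edge cases---$V_2=\emptyset$ (planted partition, requiring inter-cluster swaps) and $r=1$ (only one cluster, so swaps must land in $V_2$)---demand minor parallel modifications of the ensemble but leave the overall strategy unchanged.
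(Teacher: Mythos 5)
You correctly identify that the hard case is small (or zero) $q$, and that counting swap-equivalent cluster matrices via a Chebyshev/second-moment bound is the right tool — the paper uses exactly that count. But your exact-tie mechanism has a genuine gap. The condition you state — $i\in C_1^\ast$ internally isolated and $j$ outside with $\sum_{u\in C_1^\ast\setminus\{i\}}A_{ju}=0$ — produces a tie of $\sum_{i<j}A_{ij}Y_{ij}$ only when $j$ is an isolated node in $V_2$. When $j$ lies in another cluster $C_b^\ast$ (the only option when $n_2=0$, as in planted partition or planted coloring), the swap also changes the in-cluster status of the edges from $i$ and from $j$ to $C_b^\ast\setminus\{j\}$, so the objective changes by an additional $\sum_{u\in C_b^\ast\setminus\{j\}}(A_{iu}-A_{ju})$ which your two stated conditions do not control; an exact tie across clusters needs \emph{four} sums to vanish, not two. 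Since your expected-count bound is proportional to $n_2$, for $n_2=0$ it yields nothing — this is not a ``minor parallel modification'' but precisely the case the scheme, as stated, cannot handle.

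The paper avoids this by proving a one-sided inequality rather than a tie. If $i\in C_a^\ast$ and $j\in C_b^\ast$ ($a\neq b$) are both disconnected from their \emph{own} clusters, then the change in $\sum_{i<j}A_{ij}Y_{ij}$ after swapping is $\sum_{u\in C_b^\ast\setminus\{j\}}A_{iu}+\sum_{u\in C_a^\ast\setminus\{i\}}A_{ju}\ge 0$, so (since $p>q$) the swapped $y'$ is at least as likely as $y$; on this event the Bayes-optimal estimator (the MLE with random tie-breaking) already errs with probability at least $1/2$. This needs only that both nodes be internally disconnected — no condition on their cross-cluster edges — and it works uniformly in $q\le p$, making your $q\ge p/2$ versus $q<p/2$ split and the Fano step unnecessary. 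The Chebyshev count is then applied separately to disconnected nodes in the first $r/2$ and last $r/2$ clusters, whose indicators are independent. For~\eqref{eq:impss_cond_K2} the paper likewise argues directly in $A$ with ``betrayed'' nodes (a node in cluster $k$ connected to \emph{all} of cluster $3-k$), using the same one-sided swap inequality and the bound $\mathbb{P}[\text{no betrayed node in }C_1^\ast]=(1-q^K)^K$, rather than flipping to a $p'<q'$ instance and re-deriving the machinery there; your flip route is workable in principle but would inherit the same inter-cluster-swap difficulty.
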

\begin{proof}
First assume condition~\eqref{eq:impss_cond_K} holds. We call a node a \emph{disconnected node} if it is not connected to any other node in its own cluster. Let $E$ be the event that there exist two disconnected nodes from two different clusters. Suppose $Y^\ast$ is uniformly distributed over $\mathcal{Y}$ and let $\rho:=\mathbb{P}[E]$. We claim that $\mathbb{P} \left[\hat{Y} \neq Y^*\right] \ge \rho/2$. To see this, consider the maximum likelihood estimate of $ Y^* $ (MLE) given by $\hat{Y}_{\text{ML}}(a):=\arg \max_{y}  \mathbb{P} [ A=a |Y^\ast =y]$ with tie broken uniformly at random. It is a standard fact that the MLE minimizes the error probability under the uniform prior, so for all $ \hat{Y} $ we have
\begin{align}
\mathbb{P} \left[\hat{Y} \neq Y^*\right]
\ge \frac{1}{|\mathcal{Y}|}  \sum_{y\in\mathcal{Y}} \sum_{a \in \{0,1\}^{n \times n}} \mathbb{P} \left[ \hat{Y}_{\text{ML}}(a) \neq y \right ] \mathbb{P} \left[ A=a |Y^\ast =y \right]. \label{eq:MLEErrorbound}
\end{align}
Let $\mathcal{A}_y\subseteq{0,1}^{n\times n}$ denote the set of adjacency matrices with at least two disconnected nodes with respect to the clusters defined by $y \in \mathcal{Y}$. For each $a \in \mathcal{A}_y$, let $y'(a)$ denote the cluster matrix obtain by swapping the two rows and columns of $y$ corresponding to the two disconnected nodes in $a$. It is easy to check that for each $a \in \mathcal{A}_y$, the likelihood satisfies
 $ \mathbb{P} [ A=a| Y^\ast =y] \le \mathbb{P} [ A=a|Y^\ast =y'(a)]$ and therefore $\mathbb{P}[ \hat{Y}_{\text{ML}}(a) \neq y ] \ge 1/2$.
 It follows from~\eqref{eq:MLEErrorbound} that
 \begin{align*}
 \mathbb{P} \left[\hat{Y} \neq Y^*\right]  \ge  \frac{1}{|\mathcal{Y}|}  \sum_{y} \sum_{a\in\mathcal{A}_y} \frac{1}{2}\cdot\mathbb{P}[ A=a | Y^\ast =y ] = \frac{1}{2}  \rho,
 \end{align*}
 where the last equality holds because $\mathbb{P}[ \mathcal{A}_y | Y^\ast =y ]\equiv\rho:=\mathbb{P}[E]$ independently of $y$.

Since the maximum error probability is lower bounded by the average error probability, it suffices to show $\rho \ge 1/2$. Without loss of generality, suppose $r$ is even and the first $ rK/2 $ nodes $i \in [rK/2]$ form $r/2$ clusters. For each $i \in [rK/2]$, let $\xi_{i}$ be the indicator random variable for node $i$ being a disconnected node. Then $\rho_1:=\mathbb{P}\left[\sum_{i=1}^{rK/2}\xi_{i}\ge1\right]$ is the probability that there exits at least one disconnected node among the first $ rK/2 $ nodes. We use a second moment argument~\cite{durrett2007random} to lower-bound $\rho_1$. Observe that $\xi_1, \ldots, \xi_{rK/2}$ are (possibly dependent) Bernoulli variables with mean $\mu=(1-p)^{K-1}$. For $i \neq j$, we have
\begin{equation*}
\mathbb{E}\left[\xi_{i}\xi_{j}\right]=\mathbb{P}\left[\xi_{i}=1,\xi_{j}=1\right]
 \le \left(1-p\right)^{2K-3}=\frac{1}{1-p} \mu^2 .
\end{equation*}
Therefore, we have
\begin{align*}
 \text{Var} \left[ \sum_{i=1}^{rK/2}\xi_{i} \right]
 &\le \frac{1}{2}rK \mu (1-\mu) + \frac{1}{2}rK(rK/2-1) \left( \frac{1}{1-p} -1 \right) \mu^2 \\
 & \le \frac{1}{2} rK \mu + \frac{1}{4} r^2K^2\mu^2 \frac{p}{1-p}.
\end{align*}
By the assumptions~\eqref{eq:impss_cond_K} we have $p \le 1/8$ and
\begin{align}
\mu = (1-p)^{K-1} \overset{(a)}{\ge} e^{-2(K-1) p} \ge (rK/2)^{-1/4}, \label{eq:mubound}
\end{align}
where $(a)$ uses the inequality $1-x\ge e^{-2x}, \forall x \in [0,\frac{1}{2}]$.
Applying Chebyshev's inequality, we get
\begin{equation}
\mathbb{P}\left[\left|\sum_{i=1}^{rK/2}\xi_{i}-rK \mu/2 \right| \ge rK \mu /2 \right]\le\frac{ \frac{1}{2} rK \mu + \frac{1}{4} (rK \mu)^2 \frac{p}{1-p}}{ r^2K^2 \mu^2/4 } \le \frac{2}{ rK \mu} + \frac{p}{1-p}  \le \frac{1}{4}, \label{eq:chebyshev}
\end{equation}
where the last inequality holds due to~\eqref{eq:mubound} and $p \le 1/8$.
It follows that $ \rho_{1} \ge \frac{3}{4}$. If we let $\rho_2$ denote the probability that there exits a disconnected node among
the next $ rK/2 $ nodes $rK/2+1, \ldots, rK$, then by symmetry $\rho_2 \ge \frac{3}{4}$. Therefore $\rho=\rho_1\rho_2\ge 1/2$, proving the sufficiency of~\eqref{eq:impss_cond_K}.

We next assume the  condition~\eqref{eq:impss_cond_K2} holds and bound the error probability using a similar strategy.  For $ k=1,2 $, we call a node in cluster $ k $ a \emph{betrayed node} if it is connected to all nodes in cluster $3-k$. Let $E'$ be the event of having  a betrayed node in each of cluster $1$ and $ 2 $, and let $\mathbb{P}[E']:=\rho'$. Suppose $Y^\ast$ is uniformly distributed over $\mathcal{Y}$; again we can show that $\mathbb{P} [\hat{Y} \neq Y^*] \ge \rho'/2$ for any $ \hat{Y} $. Suppose cluster 1 is $[K] $. For each $i\in[K]$, let $\xi'_{i}$ be the indicator for node $ i $ being is a betrayed node. Then $\rho'_1:=\mathbb{P}\left[\sum_{i=1}^{K}\xi'_{i}>0\right]$ is the probability having a betrayed node in cluster $1$. We have
\begin{align*}
\mathbb{P}\left[\sum_{i=1}^{K}\xi'_{i}=0\right] = \left( 1- q^K \right)^K \le \exp \left( - K q^K \right) \overset{(a)}{\le} \exp( -K^{1/2} ) \le 1/4,
\end{align*}
where $(a)$ follows from~\eqref{eq:impss_cond_K2} and $q^K=\left( 1-(1-q) \right)^K \ge \exp \left(-2(1-q)K \right) $ since $1-q \le 1/2$. Let $\rho'_2$ be the probability of having a betrayed node in cluster $2$ and by symmetry $\rho'_2 \ge 3/4$. We thus get $\rho' =\rho'_1\rho'_2\ge 1/2$, proving the sufficiency of~\eqref{eq:impss_cond_K2}.
\end{proof}

We can now prove Theorem \ref{thm:Impossible} by combining the above three lemmas.

\begin{proof}[of Theorem \ref{thm:Impossible}]
Since $256\le 2K \le n$, we have the following  relations between the log terms:
\begin{align}
\log (n-K) \ge \log (n/2) \ge \frac{1}{2} \log n, \; \log (rK/2) \ge \frac{1}{2} \log (rK). \label{eq:upboundlogn}
\end{align}
Our goal is to show that  if the condition~\eqref{eq:KLimpossible1} or~\eqref{eq:KLimpossible2} holds, then the minimax error probability is large.

First assume~\eqref{eq:KLimpossible1} holds.
By~\eqref{eq:lowerboundDivergence} we know condition~\eqref{eq:KLimpossible1} implies
\begin{align}
K(p-q)^2 \le \frac{1}{96} p(1-q) \left( \log (rK) \wedge K \right). \label{eq:upperbound1}
\end{align}
$(i)$ If $ p \le 2q$, then~\eqref{eq:upperbound1} implies $K(p-q)^2 \le \frac{1}{48} q (1-q) \log (rK)$; it follows from~\eqref{eq:boundDivergence} and~\eqref{eq:upboundlogn} that $K D(p \Vert q) \le \frac{1}{48} \log (rK) \le \frac{1}{24} \log (n-K)$ and thus Lemma \ref{lmm:impossible3} proves the conclusion.
$(ii)$ If $p> 2q$,
\eqref{eq:upperbound1} implies  $Kp \le \frac{1}{24} \log (rK) \wedge K \le \min\{\frac{1}{24}K,\frac{1}{12} \log (\frac{rK}{2})\}$ and Lemma \ref{lmm:impossible2} proves the conclusion.

Next assume the condition~\eqref{eq:KLimpossible2} holds.
By the lower-bound~\eqref{eq:lowerboundDivergence} on the KL divergence, we know~\eqref{eq:KLimpossible2} implies
\begin{align}
K(p-q)^2 \le \frac{1}{96} p(1-q) \log n. \label{eq:upperbound2}
\end{align}
$(i)$ If $ 1-q \le 2 (1-p)$, then~\eqref{eq:upperbound2} implies that $K(p-q)^2 \le \frac{1}{48} p (1-p) \log n$; it follows from~\eqref{eq:boundDivergence} and~\eqref{eq:upboundlogn}  that  $K D(q\Vert p) \le \frac{1}{48} \log n \le \frac{1}{24} \log (n-K)$ and thus Lemma \ref{lmm:impossible3} implies the conclusion.
$(ii)$ If $1-q > 2(1-p)$ and $K \ge \log n$, then~\eqref{eq:upperbound2} implies
\begin{align}
K(1-q) \le \frac{1}{24} \log n \le \frac{1}{12} \max \left \{ \log \frac{n}{K}, \log K \right\}. \label{eq:upperbound3}
\end{align}
We divided the analysis into two subcases.

Case $(ii.1)$: $K \ge \log n$. It follows from~\eqref{eq:upperbound3} that $1-q \le \frac{1}{24}$, i.e., $q \ge \frac{23}{24}$ and thus $(p-q)^2 \le 2q (1-q)^2$. Therefore, \eqref{eq:upperbound3} implies either the condition~\eqref{eq:impossible} in Lemma \ref{lmm:impossible1}  or the condition~\eqref{eq:impss_cond_K2} in Lemma \ref{lmm:impossible2}, which proves the conclusion.

Case $(ii.2)$: $ K<\log n $.  It follows that $ \delta =\frac{n_1(K-1)}{n(n-1)}\le \frac{1}{10} $ and $\log \frac{n}{K} \ge \frac{1}{2} \log n$.
Note that $ \bar{p}=\delta p+(1-\delta) q \ge \max\{\delta p,q\}$ and $1-\bar{p} \ge \frac{9}{10}(1-q) $. Therefore, we have
 \begin{align}
 \frac{n^2(q-\bar{p})^2}{n_1\bar{p}(1-\bar{p})} = \frac{n^2\delta^2(p-q)^2}{n_1\bar{p}(1-\bar{p})} \le \frac{2n^2\delta(p-q)^2}{n_1p(1-q)}  \overset{(a)}{\le} 4K D(p\Vert q) \overset{(b)}{\le}  \frac{1}{24}\log \frac{n}{K}, \label{eq:smallKbound1}
 \end{align}
where we use~\eqref{eq:lowerboundDivergence} in $ (a) $ and~\eqref{eq:KLimpossible2} in $ (b) $. On the other hand, we have
\begin{align}
D(p\Vert \bar{p}) &= p\log\frac{p}{\bar{p}} + (1-p)\log\frac{1-p}{1-\bar{p}} \le p\log\frac{p}{q} + (1-p)\log\frac{10(1-p)}{9(1-q)} \nonumber \\
 &\le D(p\Vert q) + (1-q)\log \frac{10}{9} \le \frac{1}{6K}\log \frac{n}{K}, \label{eq:smallKbound2}
\end{align}
where the last inequality follows from~\eqref{eq:KLimpossible2} and~\eqref{eq:upperbound3}.
Equations~\eqref{eq:smallKbound1} and~\eqref{eq:smallKbound2}  imply assumption~\eqref{eq:precise} in Lemma~\ref{lmm:impossible1}, and therefore the conclusion follows.
\end{proof}

\subsubsection{Proof of Corollary~\ref{cor:impossible}}
The corollary is derived from Theorem~\ref{thm:Impossible} using the upper bound~\eqref{eq:boundDivergence} on the KL divergence. In particular, Condition~\eqref{eq:impossiblesimle_1} in the corollary implies Condition~\eqref{eq:KLimpossible2} in Theorem~\ref{thm:Impossible} in view of~\eqref{eq:boundDivergence}. Similarly, Condition~\eqref{eq:impossiblesimle_2} implies Condition~\eqref{eq:KLimpossible1} because $D(q \Vert p) \le \frac{p}{1-p}$ in view of~\eqref{eq:boundDivergence} and $p \le \frac{1}{193}$; Condition~\eqref{eq:impossiblesimle_3} implies Condition~\eqref{eq:KLimpossible2} because $D(p \Vert q) \le p \log \frac{p}{q}$ by definition.

\subsection{Proof of Theorem \ref{thm:MLE} and Corollary~\ref{cor:hard}}
Let $\langle X, Y \rangle := \text{Tr}(X^\top Y) $ denote the inner product between two matrices. For any feasible solution $ Y \in \mathcal{Y}$ of~(\ref{MLE}), we define $\Delta(Y):= \langle A, Y^\ast - Y \rangle$ and $ d(Y):=\langle Y^\ast, Y^\ast -Y \rangle  $. To prove the theorem, it suffices to show that $\Delta(Y)>0$ for all feasible $Y$ with $ Y\neq Y^* $. For simplicity, in this proof we use a different convention that $Y^\ast_{ii}=0$ and $Y_{ii}=0$ for all $i \in V$.  Note that $\mathbb{E}[A]= q J + (p-q) Y^\ast -q {I}$, where $ J $ is the $n \times n$ all-one matrix and ${I}$ is the $ n\times n $ identity matrix. We may decompose $ \Delta(Y) $ into an expectation term and a fluctuation term:
\begin{equation}\label{EqDeltaY}
\Delta(Y)=\langle \mathbb{E}[A], Y^\ast- Y \rangle + \langle A-\mathbb{E}[A], Y^\ast-Y \rangle = (p-q)d(Y) +\langle A-\mathbb{E}[A], Y^\ast-Y \rangle,
\end{equation}
where the second equality follows from $\sum_{i,j}Y_{ij}=\sum_{i,j}Y^\ast_{ij}$ by feasibility of $ Y $.
For the second fluctuation term above, observe that
\begin{align}
\langle A-\mathbb{E}[A],  Y^\ast - Y \rangle  =  2\underbrace{\sum_{(i<j):\substack{Y^\ast_{ij}=1\\ Y_{ij}=0} } (A_{ij}- p )}_{T_1(Y)} - 2\underbrace{\sum_{(i<j ):\substack{Y^\ast_{i j}=0\\ Y_{i j}=1}}  (A_{ij}-q)}_{T_2(Y)}. \nonumber
\end{align}
Here each of $T_1(Y)$ and $ T_2(Y) $ is the sum of $\frac{1}{2} d(Y)$ i.i.d.\ centered Bernoulli random variables with parameter $p$ and $ q $, respectively.
Using the Chernoff bound, we can bound the fluctuation for each fixed $ Y \in \mathcal{Y}$:
\begin{align*}
\mathbb{P} \left\{  T_1(Y) \le - \frac{ p-q }{4} d(Y) \right\} &  \le \exp \left( - \frac{1}{2} d(Y) D\left(\frac{p+q}{2} \Big| \Big| p \right) \right) \\
\mathbb{P} \left\{  T_2(Y) \ge  \frac{ p-q}{4} d(Y) \right\} & \le \exp \left( - \frac{1}{2} d(Y) D \left(\frac{p+q}{2} \Big| \Big| q \right) \right).
\end{align*}
We need to control the fluctuation uniformly over $ Y \in \mathcal{Y}$. Define the equivalence class $[Y]= \{ Y' \in \mathcal{Y}: Y'_{ij}=Y_{ij}, \forall (i,j) \text{ s.t. } Y^\ast_{ij}=1\}$. Notice that all cluster matrices in the equivalence class $[Y]$ have the same value $T_1(Y)$. The following combinatorial lemma upper bounds the number of $Y$'s and $[Y]$'s such that $d(Y)=t$. Note that $2(K-1) \le d(Y) \le r K^2$ for any feasible $ Y\neq Y^* $.
\begin{lemma} \label{lem:countY}
For each integer $t \in[K,rK^2]$, we have
\begin{align*}
|\{Y\in\mathcal{Y}:d(Y)=t\}| &\le \left( \frac{16 t^2}{K^2} \right)^2 n^{32t/K}, 
\\
| \{[Y]: d(Y)=t \} | & \le \frac{16 t^2}{K^2}(r K)^{16t/K} . 
\end{align*}
\end{lemma}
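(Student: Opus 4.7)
The plan is to reduce both counts to a bound on the number of ``moved'' nodes, followed by direct combinatorial enumeration. The key observation is that if $d(Y)=t$, then $Y$ differs from $Y^{*}$ only by relocating $O(t/K)$ nodes, after an implicit relabeling of cluster indices.

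First, for each $i\in V_1$ I would define $B_i(Y):=|\{j\in C^{*}(i)\setminus\{i\}:Y_{ij}=0\}|$; the convention $Y_{ii}=0$ and the identity $d(Y)=\langle Y^{*},Y^{*}-Y\rangle$ give $\sum_{i\in V_1}B_i(Y)=t$. Calling $i$ \emph{misplaced} when $B_i(Y)\ge K/2$, Markov's inequality yields $|M|\le 2t/K$ for the misplacement set $M$. (In particular any node isolated in $Y$ has $B_i\ge K-1$, so it lies in $M$.) Next I would build a canonical matching: any non-misplaced $i$ satisfies $|\hat{C}(i)\cap C^{*}(i)|>K/2$, where $\hat{C}(i)$ is $i$'s $Y$-cluster, so two non-misplaced nodes in the same $C^{*}_l$ share their $Y$-cluster (their $Y$-clusters' intersections with $C^{*}_l$ are majority subsets and must overlap). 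This defines a $Y$-cluster $\hat{C}_{\pi(l)}$ for every $l$ with $C^{*}_l\not\subseteq M$; moreover $\pi$ is injective because two distinct true clusters cannot both furnish a majority of a single $Y$-cluster. The remaining $Y$-clusters (at most $|M|/K$ of them) are ``new'' clusters composed entirely of moved nodes. Finally, since each $Y$-cluster has exactly $K$ members and $|V_1|=rK$, a size-balancing argument shows that $|V_2^{\mathrm{in}}|:=|V_2\cap V_1(Y)|\le |M|\le 2t/K$.

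To specify $Y$, I would enumerate: the sizes $|M|$ and $|V_2^{\mathrm{in}}|$ (at most $(2t/K+1)^{2}$ choices, contributing the $(16t^{2}/K^{2})^{2}$ prefactor); the sets $M\subseteq V_1$ and $V_2^{\mathrm{in}}\subseteq V_2$ (at most $\binom{n}{|M|}\binom{n}{|V_2^{\mathrm{in}}|}\le n^{|M|+|V_2^{\mathrm{in}}|}$); and, for each moved node, its destination among the $r$ $Y$-clusters (identified via $\pi$ and the ``new'' slots) or ``isolated'', giving at most $(r+1)^{|M|+|V_2^{\mathrm{in}}|}$ choices. Multiplying these with $|M|+|V_2^{\mathrm{in}}|\le 4t/K$ and $r+1\le n$ yields the first bound $(16t^{2}/K^{2})^{2}\,n^{32t/K}$ with considerable slack in the constants. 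For the second bound, $[Y]$ is determined solely by $Y$ restricted to $V_1\times V_1$, so $V_2^{\mathrm{in}}$ becomes irrelevant and destinations are chosen from $[r+1]$; the count then reduces to $\binom{rK}{|M|}\cdot(r+1)^{|M|}$ times the size enumeration, which gives $\frac{16t^{2}}{K^{2}}(rK)^{16t/K}$ with comparable slack.

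The delicate step will be the canonical matching: ensuring that $\pi$ is well-defined and injective, that $Y$-clusters entirely composed of moved nodes are enumerated correctly (and not double-counted via relabelings of $Y$-cluster indices), and that the destination alphabet used in the counting captures every possible $Y$-configuration including the ones where several true clusters are entirely contained in $M$. Once this canonical structure is in place, the rest is a routine application of $\binom{n}{k}\le n^{k}$ together with the inequalities from Steps 1--3.
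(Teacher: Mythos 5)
Your plan is sound and follows the same overall architecture as the paper's proof: label the $Y$-clusters by majority overlap with the true clusters, observe that only $O(t/K)$ nodes can be misplaced, and then enumerate the misplaced nodes together with their destinations in a $(r+1)$-letter alphabet. The one substantive difference is \emph{how} you obtain the misplacement bound. You introduce the per-node statistic $B_i(Y)$, note $\sum_i B_i = t$, and apply Markov to get $|M|\le 2t/K$. The paper instead proves the more general bi-clustering Lemma~\ref{lem:countY_bi} and derives Lemma~\ref{lem:countY} as a special case; there the row and column misplacements interact through products $\alpha_{kk'}\beta_{kk''}$ in Property (A2), so the bound $\sum_k m_k^{(L)}\le 4t/K_R$ requires a case analysis on the overlap sizes $\alpha_{kk},\beta_{kk}$ rather than a one-line Markov step. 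Your Markov route is cleaner (and gives $2t/K$ rather than the paper's $4t/K$), but it exploits $U=V$ and does not transfer directly to the bi-clustering statement the paper actually needs. One can also check that your misplacement set $\{i:B_i\ge K/2\}$ coincides with the paper's misclassified set $\{i\in C_k^\ast:\,i\notin C_k\}$, so the two constructions are describing the same nodes.

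The delicate point you flag — making the canonical matching and the destination alphabet rigorous — does resolve. The number of $Y$-clusters not in the image of $\pi$ equals the number of indices $l$ with $C_l^\ast\subseteq M$, so the unused labels can be assigned bijectively to the new clusters, and the reconstruction map from $(M,V_2^{\mathrm{in}},f)$ to cluster matrices is surjective onto $\{Y\in\mathcal{Y}:d(Y)=t\}$. That surjectivity, together with $\binom{n}{k}\le n^k$, is all the upper bound needs; the resulting count $\lesssim (t^2/K^2)\,n^{8t/K}$ is in fact tighter than the lemma's stated bound, which is harmless.
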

We also need the following lemma to upper bound $D \left(\frac{p+q}{2} \Vert q \right)$ and $D\left(\frac{p+q}{2} \Vert p \right)$
 using $D\left(p \Vert q \right)$
and $D\left(q \Vert p \right)$, respectively.
\begin{lemma} \label{lmm:kldivergence}
\begin{align}
 D \left(\frac{p+q}{2} \Big| \Big| q \right) \ge \frac{1}{36} D\left(p \Vert q \right)  \label{eq:bounddivergence1}\\
D\left(\frac{p+q}{2} \Big| \Big| p \right) \ge \frac{1}{36} D\left(q \Vert p \right)  \label{eq:bounddivergence2}
 \end{align}
\end{lemma}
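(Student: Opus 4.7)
The plan is to prove the first inequality \eqref{eq:bounddivergence1} by a two-case analysis on the ratio $p/q$, and then deduce \eqref{eq:bounddivergence2} by a symmetry argument. Using the identity $D(u\Vert v) = D(1-u\Vert 1-v)$, under the substitution $(p,q) \mapsto (1-q, 1-p)$ the midpoint $m := (p+q)/2$ is mapped to $1-m$, so $D(1-m \Vert 1-p) = D(m \Vert p)$ and $D(1-q \Vert 1-p) = D(q\Vert p)$; thus \eqref{eq:bounddivergence2} is exactly \eqref{eq:bounddivergence1} applied to the parameters $(1-q, 1-p)$, and I only need to prove the first inequality.

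Set $\Delta = p-q$ and consider two cases. In the \emph{small-ratio case} $p \le 6q$, one has $m \le \tfrac{7}{2} q$, and I apply the Taylor-type bounds already established in the paper: inequality \eqref{eq:boundDivergence} gives $D(p\Vert q) \le \Delta^2/(q(1-q))$, while \eqref{eq:lowerboundDivergence} gives $D(m\Vert q) \ge (m-q)^2/(2m(1-q)) = \Delta^2/(8m(1-q)) \ge \Delta^2/(28\, q(1-q))$, where the last step uses $m \le \tfrac{7}{2} q$. Dividing yields $D(m\Vert q)/D(p\Vert q) \ge 1/28 \ge 1/36$.

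In the \emph{large-ratio case} $p > 6q$, the Taylor bound on $D(p\Vert q)$ is too loose, so I switch to the logarithmic form. Since $p > q$, the term $(1-p)\log\frac{1-p}{1-q}$ is non-positive, so $D(p \Vert q) \le p\log(p/q)$. For the lower bound on $D(m\Vert q)$, applying $\log(1+x) \le x$ with $x = (m-q)/(1-m)$ yields $(1-m)\log\frac{1-q}{1-m} \le m-q$, hence $D(m\Vert q) \ge m\log(m/q) - (m-q)$. Writing $r := p/q \ge 6$ (so $m/q = (r+1)/2$ and $(m-q)/q = (r-1)/2$), one obtains
\[
\frac{D(m\Vert q)}{D(p\Vert q)} \;\ge\; \frac{(r+1)\log\tfrac{r+1}{2} - (r-1)}{2r\log r}.
\]
At $r=6$ this expression evaluates to $(7\log(3.5) - 5)/(12\log 6) \approx 0.175$, well above $1/36$; moreover it tends to $1/2$ as $r\to\infty$. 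A short monotonicity check in $r$, or equivalently verifying the elementary inequality $(r+1)\log\tfrac{r+1}{2} - (r-1) \ge \tfrac{1}{18}\, r\log r$ for $r \ge 6$ via a quotient-rule derivative, confirms that the ratio stays $\ge 1/36$ throughout $[6,\infty)$.

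The main obstacle is the tension between the two regimes: the Taylor-type bounds are tight when $p\approx q$ but degrade as $p/q$ grows, whereas the log-form bounds are sharp for large $p/q$ but weak near $p=q$. The threshold $p/q = 6$ balances the two; any threshold roughly in $[4,10]$ would work after adjusting the final constant. The only calculation beyond the inequalities already proved in the paper is the verification of the rational-expression bound in the large-ratio case, which reduces to the simple estimate $\log\tfrac{r+1}{2} \ge \log r - \log 2$ combined with a monotonicity argument.
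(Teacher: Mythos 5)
Your proof is correct and takes essentially the same route as the paper: both split on the ratio $p/q$, use the quadratic bounds \eqref{eq:boundDivergence}--\eqref{eq:lowerboundDivergence} when $p/q$ is small, switch to logarithmic bounds when $p/q$ is large, and both obtain \eqref{eq:bounddivergence2} from \eqref{eq:bounddivergence1} by the substitution $(p,q)\mapsto(1-q,1-p)$. The differences are in the details. The paper's threshold is $p\le 8q$, chosen so that the small-ratio case gives exactly $\frac{(p-q)^2}{36q(1-q)}$ (the source of the constant $36$), and its large-ratio case closes in one line via the bound $D(u\Vert v)\ge u\log\frac{u}{ev}$ together with the observation $\log(p/q)>\frac{6}{5}\log(2e)$ whenever $p/q>8>e^2$. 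Your threshold of $6$ and the slightly tighter lower bound $D(m\Vert q)\ge m\log(m/q)-(m-q)$ push the large-ratio case into a single-variable monotonicity verification in $r=p/q$, which you sketch but do not fully carry out; the claim does go through (the ratio is roughly $0.175$ at $r=6$ and increases toward $1/2$, comfortably above $1/36$), but the paper's choice of threshold is tuned precisely to make this step a purely algebraic one-liner rather than a calculus check.
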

We prove the lemmas in the appendix. Using the union bound and Lemma \ref{lem:countY} and Lemma \ref{lmm:kldivergence}, we obtain
\begin{align*}
   &  \mathbb{P} \left\{ \exists [Y]: Y \neq Y^*, T_1(Y) \le - \frac{ p-q }{4} d(Y) \right\} \\
 \le & \sum_{t=K}^{rK^2} \mathbb{P} \left\{ \exists [Y] : d(Y)=t, T_1(Y) \le - \frac{ p-q }{4} t  \right\}  \\
\le &   \sum_{t=K}^{rK^2} | \{\exists [Y]: d(Y)=t  \} | \mathbb{P} \left\{ T_1(Y) \le - \frac{ p-q }{4} t  \right\}  \\
\le &  \sum_{t=K}^{rK^2} \frac{16t^2}{K^2} (rK)^{16t/K} \exp \left(  - \frac{1}{72}t D(q \Vert p ) \right) \\
\overset{(a)}{\le} &  16 \sum_{t=K}^{rK^2}  (rK)^2 (\gamma rK)^{-5t/K} \le 16(\gamma rK)^{-1},
\end{align*}
where (a) follows from the theorem assumption that $ D(q \Vert p)  \ge c_1 \log (\gamma rK)/K$ for a large constant $c_1$. Similarly,
\begin{align*}
    & \mathbb{P} \left\{ \exists Y \in \mathcal{Y}: Y \neq Y^*, T_2(Y) \ge  \frac{ p-q}{4} d(Y) \right\} \\
\le & \sum_{t=K}^{rK^2} \mathbb{P} \left\{  \exists Y \in \mathcal{Y} : d(Y)=t, T_2(Y) \ge  \frac{ p-q}{4} t  \right\}  \\
\le &   \sum_{t=K}^{rK^2} \left| \{ Y \in \mathcal{Y}: d(Y)=t  \} \right| \cdot \mathbb{P} \left\{ T_2(Y) \ge  \frac{ p-q}{4} t  \right\}  \\
\le &   \sum_{t=K}^{rK^2} \frac{256t^4}{K^4} n^{32t/K} \cdot \exp \left(  - \frac{1}{72}t D(p \Vert q ) \right)
\overset{(a)}{\le}   256n^{-1},
\end{align*}
where (a) follows from the theorem assumption that $D(p \Vert q ) \ge c_1 \log n/K$ for a large constant $c_1$.
Combining the above two bounds with~\eqref{EqDeltaY}, we obtain
\begin{align}\label{EqProbFixY}
\mathbb{P} \left\{  \exists Y \in \mathcal{Y}:  \Delta(Y) \le 0 \right\} \le  16 (\gamma rK)^{-1}+ 256 n^{-1}
\end{align}
and thus $ Y^*$ is the unique optimal solution with high probability. This proves the theorem.

\subsubsection{Proof of Corollary~\ref{cor:hard}}
The corollary is derived from Theorem~\ref{thm:MLE} using the lower bound~\eqref{eq:lowerboundDivergence} on the KL divergence.
In particular, first assume $e^2 q \ge p$. Then $K(p-q)^2 \gtrsim q (1-q) \log n$ implies condition~\eqref{eq:ConditionHard} in view of~\eqref{eq:lowerboundDivergence}. Next assume $e^2 q <p$. It follows that $\log \frac{p}{q} \le 2 \log \frac{p}{eq}$. By definition,
$D(p \Vert q ) \ge p \log \frac{p}{q} + (1-p) \log (1-p) \ge p \log \frac{p}{eq}$. Hence, $K p \log \frac{p}{q} \gtrsim \log n$ implies $K D(p \Vert q) \gtrsim \log n$. Furthermore, $D(q \Vert p) \ge \frac{1}{2} (1- 1/e^2) p$ in view of~\eqref{eq:lowerboundDivergence} and $p>e^2 q$. Therefore, $K p \gtrsim \log (rK) $ implies $K D( q \Vert p) \gtrsim \log (rK)$.

\subsection{Proof of Theorem \ref{thm:CVX}}\label{sec:proof_CVX}
We prove Theorem \ref{thm:CVX} and Theorem~\ref{thm:CVX_bi} (for submatrix localization) together in this section. Our proof only relies on two standard concentration results for the adjacency matrix $ A $ (see Proposition~\ref{lem:basic} below).
We need some unified notations. For both models we use $ n_L $ and $ n_R $ to denote the problem dimensions, with the understanding that $ n_L=n_R=\max\{n_L,n_R\}=n $ for planted clustering. Similarly, for planted clustering the left and right clusters are identical and $ K_L= K_R = K$. Let $U\in\mathbb{R}^{n_L\times r}$ and $V\in\mathbb{R}^{n_R\times r}$ be the normalized characteristic matrices
of the left and right clusters, respectively; i.e.,
$$U_{ik} = \begin{cases}
\frac{1}{\sqrt{K_L}} & \text{if the left node $i$ is in the $ k $-th left cluster} \\
0 & \text{otherwise,}
\end{cases}$$
and similarly for $ V $. Here $ U=V $ for planted clustering.
The true cluster matrix $ Y^* $ has the rank-$ r $ singular value
decomposition given by $Y^{\ast}=\sqrt{K_{L}K_{R}}UV^{\top}$. Define the projections $\mathcal{P}_T(M) = UU^\top M+MVV^\top-UU^\top MVV^\top $ and $\mathcal{P}_{T^\perp}(M) = M - \mathcal{P}_T(M)$.
Several matrix norms are used: the spectral norm $\|X\|$ (the largest singular value of $ X $), the nuclear norm  $\|X\|_\ast$ (the sum of the singular values), the $ \ell_1 $ norm $\|X\|_1=\sum_{i,j}|X_{ij}|$ and the  $\ell_\infty$ norm $\|X\|_\infty=\max_{i,j} |X_{ij}|$.

We define a quantity $ \nu>0 $ and a matrix $ \bar{A} \in \mathbb{R}^{n_L \times n_R} $, which roughly correspond to the signal strength and the population version of $ A $. For planted clustering, let $ \nu :=  p-q $ and $ \bar{A} := q J+ (p-q)Y^* $, where $J $ is the all-ones matrix. For submatrix localization, let  $ \nu := \mu $ and $ \bar{A} := \mu Y^* $. The proof hinges on the following concentration property of the random matrix $ A-\bar{A} $.

\begin{proposition}\label{lem:basic}
Under the condition~\eqref{eq:Condition} for planted clustering or the condition~\eqref{eq:CVX_cond_bi} for submatrix localization, the following holds  with probability at least $ 1-n^{-10} $:
\begin{align}
\Vert A- \bar{A}\Vert & \le \frac{1}{8}\nu \sqrt{K_L K_R}, \label{eq:basic1}\\
\Vert \mathcal{P}_T(A-\bar{A}) \Vert_\infty & \le \frac{1}{8}\nu. \label{eq:basic2}
\end{align}
\end{proposition}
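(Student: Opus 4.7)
The plan is to deduce both inequalities from standard concentration estimates for the random matrix $E := A - \bar{A}$, which in each model has zero mean and jointly independent entries (up to the symmetry in planted clustering). For planted clustering, each $E_{ij}$ is a centered Bernoulli satisfying $|E_{ij}| \le 1$ and $\operatorname{Var}(E_{ij}) \le \sigma^2 := p(1-q)$; for submatrix localization, $E = \Delta$ has i.i.d.\ sub-Gaussian entries with parameter $1$. Given this common structure, the proposition reduces to (a) a spectral norm bound for $E$ and (b) an entrywise bound on $\mathcal{P}_T(E)$, handled in parallel for the two models.

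For the spectral bound \eqref{eq:basic1}, my plan is to invoke a standard inhomogeneous random matrix estimate. For planted clustering, a Feige--Ofek / Bandeira--van Handel / Lei--Rinaldo type inequality gives
\[
\|E\| \;\le\; C\sqrt{Kp(1-q)\log n \;+\; nq(1-q)} \;+\; C\log n
\]
with probability at least $1-n^{-10}$; comparing term by term with \eqref{eq:Condition} and taking $c_1$ large enough, this is bounded by $\tfrac{1}{8}(p-q)K = \tfrac{1}{8}\nu\sqrt{K_LK_R}$. For submatrix localization, I would use the sub-Gaussian matrix estimate $\|\Delta\| \le C\sqrt{n_L+n_R} + C\sqrt{\log n}$ w.h.p., which is $\le \tfrac{1}{8}\mu\sqrt{K_LK_R}$ once the second term in \eqref{eq:CVX_cond_bi} is invoked with large $c_1$.

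For the entrywise bound \eqref{eq:basic2}, I would first compute explicitly, using the cluster-indicator structure of $U$ and $V$,
\[
\mathcal{P}_T(E)_{ij} \;=\; \tfrac{1}{K_L}\sum_{l \in C(i)} E_{lj} \;+\; \tfrac{1}{K_R}\sum_{m \in D(j)} E_{im} \;-\; \tfrac{1}{K_L K_R}\sum_{(l,m) \in C(i)\times D(j)} E_{lm},
\]
where $C(i)$ and $D(j)$ denote the left/right clusters containing the indices $i$ and $j$ (with the corresponding term dropped if $i$ or $j$ is isolated). Each of the three terms is a normalized sum of independent centered random variables, so Bernstein's inequality gives, for each fixed $(i,j)$, a deviation of order $\sqrt{\sigma^2 \log n/(K_L \wedge K_R)} + \log n/(K_L \wedge K_R)$ for planted clustering, and of order $\sqrt{\log n/(K_L \wedge K_R)}$ for submatrix localization. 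A union bound over the at most $n_L n_R \le n^2$ entries, matched against the first term of \eqref{eq:Condition} or \eqref{eq:CVX_cond_bi}, yields $\|\mathcal{P}_T(E)\|_\infty \le \tfrac{1}{8}\nu$ for $c_1$ large enough.

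The main obstacle is the spectral bound for planted clustering in the sparse regime where the expected degrees are close to $\log n$: a bare application of matrix Bernstein introduces a spurious $\sqrt{\log n}$ factor in the term $\sqrt{nq(1-q)}$, which is not present on the RHS of \eqref{eq:Condition}. To remove this factor I would invoke a Feige--Ofek-style bound tailored to inhomogeneous Bernoulli matrices, or equivalently truncate high-degree vertices before applying Bernstein, as is standard in the stochastic block model literature; once this sharper spectral estimate is in hand, the remaining steps are essentially mechanical bookkeeping of the variance and range parameters against the hypotheses.
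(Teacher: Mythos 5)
Your proof follows the same two-step outline as the paper's --- a spectral-norm bound for $A - \bar{A}$ and an entrywise bound for $\mathcal{P}_T(A-\bar{A})$ --- and the entrywise part is essentially the paper's argument (the paper bounds $\|\mathcal{P}_T(\cdot)\|_\infty \le 3\max\{\|UU^\top(\cdot)\|_\infty,\|(\cdot)VV^\top\|_\infty\}$ rather than expanding the three terms as you do; the same one-dimensional Bernstein / sub-Gaussian estimate and union bound do the rest). The spectral-norm step is where you genuinely diverge. The paper does not call a single ready-made inequality: Lemma~\ref{thm:AConcentration} splits $A-\mathbb{E}A=B_1+B_2$ with $B_1$ supported on the block-diagonal cluster support and $B_2$ elsewhere, then applies matrix Bernstein to $B_1$ (an $r$-block matrix of $K\times K$ blocks, giving $\|B_1\|\lesssim\sqrt{Kp(1-q)\log n}$) and a Chatterjee / Le--Vershynin type sparse-Bernoulli estimate to $B_2$ (giving $\|B_2\|\lesssim\sqrt{nq(1-q)}$ down to near-logarithmic degrees). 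That decomposition is exactly what produces the two additive terms in~\eqref{eq:Condition}. Your alternative --- apply a single row-variance bound of Bandeira--van Handel type to the full matrix --- does work, since the maximal row variance of $A-\mathbb{E}A$ is $\asymp Kp(1-p)+nq(1-q)$ and this is dominated by the RHS of~\eqref{eq:Condition}; but you should be careful with the Feige--Ofek / Lei--Rinaldo citations, which control $\|A-\mathbb{E}A\|$ by $\sqrt{n\max_{ij}\mathbb{E}A_{ij}}\approx\sqrt{np}$: when $p\gg q$ and $K\ll n$ this overshoots and is \emph{not} enough. Only the row-variance version has the right form. Two small corrections: for planted clustering $E:=A-\bar{A}$ is not centered on the diagonal, since $\bar{A}$ has nonzero diagonal while $A_{ii}=0$ (the paper passes through $\mathbb{E}A$ and absorbs $\|\bar{A}-\mathbb{E}A\|\le1$); and in your three-term expansion for the symmetric planted-clustering model a handful of summands are those deterministic diagonal entries rather than independent Bernoullis --- harmless, but worth saying explicitly.
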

We prove the proposition in Section~\ref{sec:proof_basic} to follow. In the rest of the proof we assume~\eqref{eq:basic1} and~\eqref{eq:basic2} hold. To establish the theorems, it suffices to show that $\langle Y^\ast - Y, A \rangle>0$ for all feasible solution $ Y $ of the convex program with $Y \neq Y^\ast$. For any feasible $ Y $, we may write
\begin{align}
    \langle Y^\ast - Y, A \rangle
    &= \langle \bar{A}, Y^\ast -Y \rangle + \langle A-\bar{A}, Y^\ast-Y \rangle\nonumber\\
    &= \nu \langle Y^\ast, Y^\ast -Y \rangle + \langle A-\bar{A}, Y^\ast-Y \rangle= \frac{\nu}{2} \|Y^\ast - Y\|_1 + \langle A-\bar{A}, Y^\ast-Y \rangle, \label{eq:boundmean}
  \end{align}
where the first equality follows from the definition of $ \bar{A} $, and the second equality holds because $ Y $ obeys the linear constraints of the convex programs $\sum_{i,j}Y_{ij}=\sum_{i,j}Y^*_{ij}$ and $Y_{ij} \in [0,1],\forall i,j$.
On the other hand, we have $\|Y^\ast\|_\ast \ge \Vert Y\Vert_*$ thanks to the constraint~\eqref{eq:norm} or~\eqref{eq:nuclear_bi}. Let $ W := \frac{8(A-\bar{A}) }{\nu\sqrt{K_LK_R}} $. By~\eqref{eq:basic1} we have  $ \Vert \mathcal{P}_{T^\perp}(W)\Vert \le \Vert W \Vert \le 1 $, so  $UV^\top+\mathcal{P}_{T^\perp}(W)$ is a subgradient of $f(X):=||X||_*$ at $X = Y^\ast$. It follows that
 \begin{align*}
 0
 \ge \|Y\|_\ast\!-\!\|Y^\ast \|_\ast \ge \left\langle UV^\top \!+\mathcal{P}_{T^\perp}(W), Y\!- Y^\ast \right\rangle
 = \langle W, Y\!- Y^\ast \rangle + \left\langle UV^\top\!-\mathcal{P}_{T}(W), Y\!- Y^\ast \right\rangle.
 \end{align*}
Rearranging terms and using the definition of $ W $ gives
 \begin{align}
 \left\langle A-\bar{A}, Y^\ast\!-Y \right\rangle = \frac{\nu\sqrt{K_LK_R}}{8} \left\langle W, Y^\ast\!-Y \right\rangle  \ge  \frac{\nu\sqrt{K_LK_R}}{8} \left\langle - UV^\top\!+\mathcal{P}_{T}(W), Y^\ast \!- Y  \right\rangle. \label{eq:boundinnerproduct}
 \end{align}
Assembling~\eqref{eq:boundmean} and~\eqref{eq:boundinnerproduct}, we obtain that for any feasible $ Y $,
 \begin{align}
  \langle Y^\ast - Y, A \rangle &\ge  \frac{\nu}{2} \|Y^\ast - Y\|_1 + \frac{\nu\sqrt{K_LK_R}}{8}\left\langle  - UV^\top + \mathcal{P}_{T} (W), Y^\ast-Y\right\rangle \nonumber \\
  & \ge  \left( \frac{\nu}{2} - \frac{\nu\sqrt{K_LK_R}}{8}  \|UV^\top\|_\infty -  \|\mathcal{P}_{T} (A-\bar{A})\|_{\infty}  \right)   \|Y^\ast - Y\|_1, \nonumber
  \end{align}
where the last inequality follows from duality between the $ \ell_1 $ and $ \ell_\infty $ norms. Using~\eqref{eq:basic2} and the fact that $\|UV^\top \|_\infty= 1/\sqrt{K_LK_R}$, we get
\begin{align}
  \langle Y^\ast - Y, A \rangle \ge
  \left(  \frac{\nu}{2 } - \frac{\nu }{8} - \frac{\nu }{8}  \right)  \|Y^\ast - Y\|_1
  =   \frac{\nu}{4 }  \|Y^\ast - Y\|_1, \label{eq:DeltaYLowerBound}
\end{align}
which is positive for all $ Y\neq Y^* $. This completes the proof of Theorems~\ref{thm:CVX} and~\ref{thm:CVX_bi}.

\subsubsection{Proof of Proposition~\ref{lem:basic}}\label{sec:proof_basic}

We first prove~\eqref{eq:basic2}. By definition of $\mathcal{P}_T$, we have
\begin{align}
\Vert\mathcal{P}_{T}(A-\bar{A})\Vert_{\infty}
\leq &\Vert UU^{\top}(A-\bar{A})\Vert _{\infty}+\Vert (A-\bar{A})VV^{\top}\Vert _{\infty}+\Vert UU^{\top}(A-\bar{A})VV^{\top}\Vert _{\infty}\nonumber\\
\leq & 3\max\left(\Vert UU^{\top}(A-\bar{A})\Vert _{\infty},\Vert (A-\bar{A})VV^{\top}\Vert _{\infty}\right).\label{eq:triangle}
\end{align}
Suppose the left node $i$ belongs to the left cluster $k$. Then
\begin{equation}\label{eq:UU}
    (U U^\top (A-\bar{A}))_{ij}
    = \frac{1}{K_L} {\sum_{l\in C^\ast_k}} (A-\bar{A})_{lj}
    = \frac{1}{K_L} {\sum_{l\in C^\ast_k}} (A-\mathbb{E}A)_{lj} + \frac{1}{K_L} {\sum_{l\in C^\ast_k}} (\mathbb{E}A-\bar{A})_{lj}.
\end{equation}
To proceed, we consider the two models separately.

\emph{Planted clustering:} The entries of the matrix $ A-\mathbb{E}A $ are centered Bernoulli random variables with variance bounded by $ p(1-q)  $ and mutually independent up to symmetry with respect to the diagonal. The first term of~\eqref{eq:UU} is the average of $K_L$ such random variables; by Bernstein's inequality (Theorem~\ref{thm:Bernstein}) we have with probability at least $1-n^{-13} $ and for some universal constant $ c_2 $,
\begin{align*}
    \left|\textstyle{\sum_{l\in C^\ast_k}} (A-\mathbb{E}A)_{l j} \right| \leq \sqrt{26 p(1-q) K \log n}+9\log n \le c_2  \sqrt{p(1-q)K \log n},
\end{align*}
where the last inequality follows because $K p(1-q) > c_1 \log n$ in view of the condition~\eqref{eq:Condition}. By definition of $ \bar{A} $, $ \mathbb{E}[A] -\bar{A} $ is a diagonal matrix with diagonal entries equal to $ -p $ or $ -q $, so the second term of~\eqref{eq:UU} has magnitude at most $ 1/K $. By the union bound over all $ (i,j) $ and substituting back to~\eqref{eq:triangle}, we have with probability at least $1-2n^{-11}$,
\begin{align*}
\Vert\mathcal{P}_T(A-\bar{A})\Vert_\infty \le  3c_2 \sqrt{p(1-q)\log n/K} + 3/K \le (p-q)/8 = \nu/8
\end{align*}
where the last inequality follows from the condition~\eqref{eq:Condition}. This proves~\eqref{eq:basic2} in the proposition.

\emph{Submatrix localization:} We have $ \bar{A} = \mathbb{E}A $ by definition, so the second term of~\eqref{eq:UU} is zero. The first term is the average of $K_{L}$ independent centered random variables with unit sub-Gaussian norm. By a standard sub-Gaussian concentration inequality (e.g., Proposition 5.10 in~\cite{vershynin2010nonasym}), we have  for some universal constant $ c_3 $ and with probability at least $1-n^{-13}$,
\[
\left|\textstyle{\sum_{l\in\mathcal{C}_{k}}}  (A-\mathbb{E}A)_{lj}\right|\le c_{3}\sqrt{K_{L}\log n}.
\]
So $\Vert UU^{\top}(A-\mathbb{E}A)\Vert _{\infty}\leq c_{3}\sqrt{\log n/K_{L}}$
with probability at least $1-n^{-11}$ by the union bound. Similarly, $\Vert (A-\mathbb{E}A)VV^{\top}\Vert _{\infty}\leq c_{2}\sqrt{\log n/K_{R}}$ with the same probability. Combining with~\eqref{eq:triangle}~gives
\begin{align*}
\Vert\mathcal{P}_T(A-\bar{A})\Vert_\infty \le \sqrt{\log n/\min\{K_L,K_R\}} \le \nu/8 =\mu/8,
\end{align*}
where the last inequality holds under the condition~\eqref{eq:CVX_cond_bi}. This proves~\eqref{eq:basic2} in the proposition.

We now turn to the proof of ~\eqref{eq:basic1} in the proposition, separately for the two models.
\begin{itemize}
\item \emph{Planted clustering:}
Note that $ \| A- \bar{A} \| \le \|A-\mathbb{E}[A] \| + \|\bar{A}-\mathbb{E} [A] \| \le \|A-\mathbb{E}[A] \| +1$. Under the condition~\eqref{eq:Condition}, $K p(1-q) \ge c_1 \log n$. The spectral norm term is bounded below.
\begin{lemma}
If $Kp(1-q) \ge c_1 \log n $, then there exists  some universal constant $c_4$ such that $\| A- \mathbb{E}[A] \| \le c_4 \sqrt{  p(1-q) K \log n + q(1-q) n }$  with probability at least $1-n^{-10}.$
\label{thm:AConcentration}
\end{lemma}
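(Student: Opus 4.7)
The plan is to decompose $A-\Expect[A]$ into a block-diagonal within-cluster part and an off-block cross-cluster part, bound each spectral norm separately, and combine via the triangle inequality. Let $S := \{(i,j) : Y^*_{ij}=1, i\ne j\}$ index the within-cluster off-diagonal entries. Write $A - \Expect[A] = M_{\mathrm{in}} + M_{\mathrm{out}}$, where $M_{\mathrm{in}}$ equals $A-\Expect[A]$ on $S$ and is zero elsewhere, and $M_{\mathrm{out}}$ is its complement (including the zero diagonal). After relabeling, $M_{\mathrm{in}}$ is block diagonal with $r$ independent blocks, each being the centered adjacency of an Erd\H{o}s--R\'enyi graph $\mathcal{G}(K,p)$. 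The matrix $M_{\mathrm{out}}$ has independent, zero-mean Bernoulli entries (up to symmetry), each with variance at most $q(1-q)$, supported on the off-block positions.

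For $M_{\mathrm{in}}$, the spectral norm equals the maximum spectral norm over the $r$ diagonal blocks. A standard concentration inequality for centered Erd\H{o}s--R\'enyi adjacency matrices (e.g.\ matrix Bernstein or the refinements of Feige--Ofek / Lei--Rinaldo) gives, for each $K\times K$ block, a bound $O(\sqrt{Kp(1-p)\log n})$ with failure probability at most $n^{-11}$, provided $Kp(1-p) \gtrsim \log n$---which is implied by the hypothesis $Kp(1-q)\ge c_1 \log n$ after noting that if $p\ge 1/2$ we may work with $J-I-A$ instead, effectively swapping $p\leftrightarrow 1-p$ and $q\leftrightarrow 1-q$. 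A union bound over $r \le n$ blocks then yields $\|M_{\mathrm{in}}\| \lesssim \sqrt{p(1-q) K\log n}$ with probability at least $1- n^{-10}$, using $p(1-p)\le p(1-q)$ (since $p\ge q$).

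For $M_{\mathrm{out}}$, the goal is the sharper bound $\|M_{\mathrm{out}}\| \lesssim \sqrt{q(1-q)n}$ without the extra $\sqrt{\log n}$ that matrix Bernstein would produce. I will split into regimes according to the density of the cross-cluster graph. If $q(1-q)n \ge C\log n$ for a suitable $C$, invoke a Feige--Ofek-type concentration bound for the centered adjacency matrix of a sparse random graph (equivalently, the Bandeira--van Handel non-commutative Khintchine inequality applied to the sub-Gaussian matrix $M_{\mathrm{out}}$ with entrywise variance proxy $q(1-q)$ and bounded entries), yielding $\|M_{\mathrm{out}}\| \lesssim \sqrt{q(1-q)n}$ with probability at least $1-n^{-11}$. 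In the opposite sparse regime $q(1-q)n < C\log n$, apply matrix Bernstein directly (with variance parameter $\sigma^2 \le q(1-q)n$ and uniform bound $R=1$) to obtain $\|M_{\mathrm{out}}\| \lesssim \sqrt{q(1-q)n \log n} + \log n \lesssim \log n$; this is absorbed into the in-cluster term because $\log n \lesssim \sqrt{Kp(1-q)\log n}$ by the standing assumption $Kp(1-q)\ge c_1 \log n$.

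The main obstacle is the second step: obtaining the Feige--Ofek-type estimate $\|M_{\mathrm{out}}\| \lesssim \sqrt{q(1-q)n}$ free of extraneous logarithmic factors, since the naive matrix Bernstein inequality would degrade the target bound. Once this sharper spectral estimate is in hand, the triangle inequality $\|A-\Expect[A]\| \le \|M_{\mathrm{in}}\| + \|M_{\mathrm{out}}\|$ completes the proof. The case split between dense and sparse $q$ regimes, together with the symmetry reduction $p\leftrightarrow 1-p$, handles the full range of parameters allowed by the hypothesis.
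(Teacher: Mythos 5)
Your proof takes essentially the same route as the paper's: split $A-\Expect[A]$ into the block-diagonal in-cluster part and the off-block cross-cluster part, bound the first via a Bernstein-type concentration bound and the second via a log-free spectral norm bound, using a sparse/dense case split for the second part and absorbing the residual $\log n$ into the first via the hypothesis $Kp(1-q)\ge c_1\log n$. This is exactly the decomposition $B_1=\mathcal{P}_R(A-\Expect[A])$, $B_2=A-\Expect[A]-B_1$ used in the paper.

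One caveat in your in-cluster step: the symmetry reduction $p\leftrightarrow 1-p$, $q\leftrightarrow 1-q$ does not deliver the density condition $Kp(1-p)\gtrsim\log n$ required for a Lei--Rinaldo-type block bound. Indeed $Kp(1-q)\ge c_1\log n$ does not imply $K(1-p)\gtrsim\log n$: take $p\to 1$ with $q$ bounded away from $1$, so $Kp(1-q)$ is large while $K(1-p)$ is tiny. The clean fix --- which you offer as an alternative and which the paper adopts --- is to apply matrix Bernstein directly to the whole block-diagonal matrix with variance bound $p(1-p)\le p(1-q)$, yielding $\sqrt{Kp(1-q)\log n}+\log n$ with no density requirement; the additive $\log n$ is then absorbed by $Kp(1-q)\ge c_1\log n$, exactly as in your sparse off-block case. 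With that substitution (and modulo the particular reference used for the log-free off-block bound) your argument matches the paper's.
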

We prove the lemma in Section~\ref{sec:AConcentration} to follow. Applying the lemma, we obtain
\begin{align*}
 \| A- \bar{A} \|
 \le c_4 \sqrt{ p(1-q) K \log n + q(1-q) n } + 1
 \le \frac{K(p-q)}{8} = \frac{K\nu}{8},
\end{align*}
where the second inequality holds under the condition~\eqref{eq:Condition}.

\item \emph{Submatrix localization:} The matrix $ A-\bar{A} = A-\mathbb{E}
A $ has i.i.d.\ sub-Gaussian entries. Using a standard concentration bound on the spectral norm of such matrices  (Theorem~5.39 in~\cite{vershynin2010nonasym}), we get that for a universal constant $ c_5 $ and with probability at least $1-n^{-10},$
\[
\|A-\mathbb{E}A \|\le c_5 \sqrt{n} \le \frac{\mu}{8} \sqrt{K_L K_R}= \frac{\nu}{8}\sqrt{K_L K_R},
\]
where the second inequality holds  under the condition~\eqref{eq:CVX_cond_bi}.
\end{itemize}

 \subsubsection{Proof of Lemma~\ref{thm:AConcentration}}\label{sec:AConcentration}

Let $R:=\text{support}(Y^\ast)$ and $\mathcal{P}_{R}(\cdot): \mathbb{R}^{ n \times n} \to \mathbb{R}^{n \times n}$ be the operator which sets the entries outside of $R$ to zero. Let $B_1=\mathcal{P}_R (A-\mathbb{E}[A] )$ and $B_2 = A-\mathbb{E}[A]-B_1$. Then $B_1$ is a block-diagonal symmetric matrix with $r$ blocks of size $K \times K$ and its upper-triangular entries are independent with zero mean and  variance bounded by $p(1-q)$.  Applying the matrix Bernstein inequality~\cite{tropp2010matrixmtg} and using the assumption that $K p(1-q)  \ge c_1 \log n$ in the lemma, we get that there exits some universal constant $c_6$ such that $\|B_1\| \le c_6 \sqrt{ p(1-q) K \log n}$ with probability at least $1-n^{-11}.$

On the other hand, $B_2$ is symmetric and its upper-triangular entries are independent centered Bernoulli random variables with variance bounded by
$\sigma^2: = \max \{ q(1-q), c_7\log n /n \}$ for any universal constant $ c_7 $.  If $\sigma^2 \ge \frac{\log^7 n}{n}$, then Theorem 8.4 in~\cite{Chattergee12}
implies that $\| B_2\| \le 3 \sigma \sqrt{n }$ with probability at least $1-n^{-11}$.
If $ c_7  \frac{\log n}{n}  \le \sigma^2 \le  \frac{\log^7 n}{n}$ for a sufficiently large constant $ c_7$, then Lemma 2 in~\cite{mastom11} implies that  $\| B_2 \|  \le c_8 \sigma \sqrt{n }$ with probability at least $1-n^{-11}$ for some universal constant $ c_8 \ge 3 $. (See Lemma 8 in~\cite{vu2014clustering} for a similar derivation.)
It follows that with probability at least $1-2n^{-11}$,
\begin{align}
  \|A-\mathbb{E}[A] \| \le \|B_1\|+ \|B_2\| & \le c_6 \sqrt{ p(1-q) K \log n} + c_8 \max \{ \sqrt{ q (1-q) n }, \sqrt{\log n}  \} \nonumber \\
    & \le c_4 \sqrt{  p(1-q)K \log n + q(1-q) n }, \nonumber
\end{align}
where the last inequality holds because $K p(1-q)  \ge c_1 \log n$ by assumption. This proves the~lemma.

\subsection{Proof of Theorem \ref{thm:cvx_converse}}
Observe that if any feasible solution $ Y $ has the same support as $ Y^* $, then the constraint~\eqref{eq:linear} implies that $ Y $ must be exactly equal to $ Y^* $. Therefore, it suffices to show that $ Y^* $ is not an optimal solution.

We first claim that $K (p-q)  \le c_2  \sqrt{ Kp  + q n}$ implies $K(p-q) \le c_2 \sqrt{2qn}$ under the assumption that $K \le n/2$ and $qn \ge c_1 \log n$.
In fact, if $Kp \le q n$, then the claim trivially holds. If $Kp > qn $, then $q < Kp/n \le p/2$. It follows that
\[ Kp/2 < K (p-q) \le c_2 \sqrt{ Kp + qn } \le c_2 \sqrt{ 2K p } .\]
Thus, $Kp < 8 c_2^2$ which contradicts the assumption that $Kp > qn \ge c_1 \log n$. Therefore, $Kp >qn$ cannot hold.
Hence, it suffices to show that if $K(p-q) \le c_2 \sqrt{2qn}$,
then $Y^\ast$ is not an optimal solution. We do this by deriving a contradiction assuming the optimality of $ Y^* $.

Let $J$ be the $n\times n$ all-ones matrix. Let $\R:=\textrm{support}(Y^{*})$ and $\A:=\textrm{support}(A)$.
Recall the cluster characteristic matrix $ U $ and the  projection  $\mathcal{P}_T(M) = UU^\top M+MUU^\top-UU^\top MUU^\top$ defined in Section~\ref{sec:proof_CVX}, and that $Y^\ast=K U U^\top$ is the SVD of $Y^\ast$.
Consider the Lagrangian
\[
L(Y;\lambda,\mu,F,G):=- \left\langle A,Y\right\rangle + \lambda \left(\left\Vert Y\right\Vert _{*}-\left\Vert Y^{*}\right\Vert _{*}\right) + \eta \left(\left\langle J,Y\right\rangle -rK^{2}\right)- \left\langle F,Y\right\rangle + \left\langle G,Y-J\right\rangle,
\]
where the Lagrangian multipliers are $\lambda,\eta\in\mathbb{R}$ and $F,G\in\mathbb{R}^{n\times n}$.
Since  $Y=\frac{rK^2}{n^2} J$ is strictly feasible, strong
duality holds by Slater's condition. Therefore, if $Y^{*}$ is an optimal
solution, then there must exist some $ F,G \in \mathbb{R}^{n\times n}$ and $ \lambda $ for which the KKT conditions hold:
\begin{align*}
\left.
\begin{aligned} 0 &  \in \frac{\partial L(Y;\lambda,\mu,F,G)}{\partial Y} \bigg\vert_{Y=Y^{*}}
\end{aligned}
\quad \right\}  & \text{Stationary condition}\\
\left.
\begin{aligned}F_{ij}\ge0, G_{ij}\ge 0,\; & \forall(i,j)\\
\lambda\ge0
\end{aligned}
\quad\right\}  & \text{Dual feasibility}\\
\left.\begin{aligned}F_{ij}=0, \; & \forall(i,j)\in\R\\
G_{ij}=0, \; & \forall(i,j)\in\R^{c}
\end{aligned}
\quad\right\}  & \text{Complementary slackness}.
\end{align*}
Recall that $M\in \mathbb{R}^{n\times n}$ is a sub-gradient of $\Vert X\Vert_*$ at $X = Y^\ast$ if and only if $\mathcal{P}_T(M) = UU^\top $ and $\Vert M-\mathcal{P}_{T}(M)\Vert\leq 1$. Let $H=F-G$; the KKT conditions imply that there exist some numbers $\lambda\ge 0$, $\eta\in\mathbb{R}$ and matrices
$W$, $H$ obeying
\begin{align}
&A- \lambda \left(UU^{\top}+W\right)-\eta J+H  =0;\label{eq:subgrad}\\
&\mathcal{P}_{T}W  =0;\qquad
\left\Vert W\right\Vert  \le1;\label{eq:W}\\
&H_{ij}  \le0, \; \forall(i,j)\in\R; \qquad
H_{ij}  \ge0, \; \forall(i,j)\in\R^{c}.\label{eq:sign}
\end{align}

Now observe that $UU^{\top}WUU^{\top}=0$ by~\eqref{eq:W}. We
left  and right multiply~\eqref{eq:subgrad} by $UU^{\top}$ to obtain
\[
\breve{A} -\lambda UU^{\top} -\eta J+\breve{H} =0, \label{eq:subgrad2}
\]
where for any $X\in\mathbb{R}^{n\times n}$, $\breve{X}:=UU^\top XUU^\top$ is
the matrix obtained by averaging each $K\times K$ block
of $X$.  Consider the last display equation on the entries in $\R$ and $\R^{c}$
respectively. By the Bernstein inequality (Theorem~\ref{thm:Bernstein}) for each entry $\breve{A}_{ij}$, we have with probability at least $1-2n^{-11}$,
\begin{align}
 p -\frac{\lambda}{ K}-\eta  + \breve{H}_{ij} &\ge - \frac{c_{3}\sqrt{p(1-p) \log n }}{K} - \frac{c_4\log n}{2K^2} \overset{(a)}{\ge} -\frac{\epsilon_0}{8} , \quad \forall (i,j) \in \R \label{eq:equal1}\\
q - \eta + \breve{H}_{ij} &\le \frac{c_{3}\sqrt{q(1-q) \log n }}{K}+\frac{c_4\log n}{2K^2} \overset{(b)}{\le} \frac{\epsilon_0}{8}, \quad \forall (i,j) \in \R^c \label{eq:equal2}
\end{align}
for some universal constants $c_3,c_4>0$, where $(a)$ and $ (b) $ follow from the assumption $K \ge c_1 \log n$ with the universal constant $ c_1 $ sufficiently large. In the rest of the proof, we assume~\eqref{eq:equal1} and~\eqref{eq:equal2} hold.
Using~\eqref{eq:sign}, we get that
\begin{equation}\label{eq:etabound}
\begin{aligned}
\eta & \ge   q - \frac{c_3 \sqrt{q(1\!-\!q) \log n}}{K}- \frac{c_4\log n}{2K^2} \ge q - \frac{\epsilon_0}{8}\\
\eta &\le p + \frac{c_3 \sqrt{p(1\!-\!p) \log n }}{K} + \frac{c_4\log n}{2K^2} -\frac{\lambda}{ K} \le p+\frac{\epsilon_0}{8} -\frac{\lambda}{ K}.
\end{aligned}
\end{equation}
It follows that
\begin{align}
\lambda &\le K(p-q) + c_3 (\sqrt{p(1-p)\log n} + \sqrt{q(1-q) \log n} ) +  \frac{c_4\log n}{K}  \nonumber\\
& \le 4 \max \left\{ K(p-q), c_3 \sqrt{p(1-p) \log n}, c_3 \sqrt{q(1-q)\log n},  \frac{c_4}{c_1} \right\}. \label{eq:larger}
\end{align}
On the other hand, \eqref{eq:W} and \eqref{eq:subgrad} imply
\begin{align*}
\lambda^2 & = \left\Vert \lambda ( UU^{\top} +W ) \right\Vert^2 \ge \frac{1}{n} \left\Vert  \lambda ( UU^{\top}+W) \right\Vert _{F}^2 \nonumber \\
 & =\frac{1}{n} \left\Vert A-\eta J+ H\right\Vert _{F}^2 \nonumber
 \ge \frac{1}{n}\left\Vert A_{\R^{c}}-\eta J_{\R^c}+H_{\R^{c}}\right\Vert_{F}^2 \nonumber
 \ge \frac{1}{n}\sum_{(i,j) \in \R^c} \left(1-\eta \right)^{2} A_{ij},\nonumber
\end{align*}
where $ X_{\R^c} $ denotes that matrix obtained from $ X $ by setting the entries outside $ \R^c $ to zero. Using~\eqref{eq:etabound}, $ \lambda\ge0 $ and the assumption $p \le 1-\epsilon_0$, we obtain $\eta \le 1- \frac{7}{8}\epsilon_0$ and therefore
\begin{equation}
\lambda^2   \ge \frac{49}{64n}\epsilon_0^2 \sum_{ (i,j) \in \R^c } A_{ij}. \label{eq:fro}
\end{equation}
Note that $\sum_{(i,j) \in \R^c} A_{ij}$ equals two times the sum of $\binom{n}{2}-r\binom{K}{2}$ i.i.d.\ Bernoulli random variables with parameter $q$. By the Chernoff bound of Binomial distributions and the assumption that $qn \ge c_1 \log n$, with probability at least $1-n^{-11}$,
$\sum_{(i<j) \in \R^c } A_{ij} \ge c_5 q n^2$ for some universal constant $c_5 $. It follows from~\eqref{eq:fro} that $ \lambda^2 \ge \frac{1}{2}\epsilon_0^2 c_5 q n$.
Combining with~\eqref{eq:larger} and the assumption that $ qn \ge c_1 \log n  $, we conclude that with probability at least $1-3n^{-11}$, $K^2(p-q)^2 \ge \frac{1}{32} \epsilon^2 c_5 qn$. Choosing $c_2$ in the assumption sufficiently small such that $2c_2^2 < \frac{1}{32} \epsilon^2 c_5$, we have $K(p-q) > c_2 \sqrt{2qn}$, which leads to the contradiction. This completes the proof of the theorem.

\subsection{Proof of Theorem \ref{thm:Simple}}\label{sec:proofSimple}
Let $\text{Bin}(n, p)$ denote the binomial distribution with $n$ trials and success probability $p$. For each non-isolated node $i \in V_1$,
its degree $d_i$ is the sum of two independent binomial random variables  distributed as $\text{Bin}(K-1, p)$ and $\text{Bin} (n-K,q)$ respectively. For each isolated node $i \in V_2$, its degree $d_i$ is distributed as $\text{Bin} (n-1,q)$.
It follows that $\mathbb{E}[d_i]=(n-1)q+(K-1)(p-q)$ if $i \in V_1$ and $\mathbb{E}[d_i] = (n-1)q$ if $ i\in V_2 $. Define $\sigma_1^2:= Kp(1-q)+ n q(1-q)$, then $\text{Var}[d_i] \le \sigma_1^2$ for all $ i $. Set $t_1:=\frac{1}{2}(K-1)|p-q|\le \sigma_1^2$; the Bernstein inequality (Theorem~\ref{thm:Bernstein}) gives
\begin{align*}
\mathbb{P}\left\{ |d_i - \mathbb{E}[d_i] | \ge t_1\right\} \le 2\exp \left( -\frac{t_1^2}{2 \sigma_1^2 + 2 t_1/3 } \right) \le  2 \exp \left(-\frac{(K-1)^2 (p-q)^2 }{12 \sigma_1^2 }\right) \le 2 n^{-2},
\end{align*}
where the last inequality follows from the assumption (\ref{eq:ConditionSimple}). By the union bound, with probability at least $1-2n^{-1}$,  we have $d_i>  \frac{(p-q)K}{2} + qn$ for all nodes $i \in V_1$ and $ d_i < \frac{(p-q)K}{2} + qn$ for all nodes $i \in V_2$. On this event, all nodes in $V_2$ are correctly declared to be isolated.

For two nodes $i$ and $j$ in the same cluster, the number of their common neighbors $S_{ij}$ is the sum of two independent binomial random variables distributed as $\text{Bin}(K-2, p^2)$ and  $\text{Bin} (n-K,q^2)$, respectively. Similarly, for two nodes $i, j$ in two different clusters, $S_{ij}$ is the sum of two independent binomial variables $\text{Bin}(2(K-1),pq)$ and $\text{Bin}(n-2K,q^2)$.
Hence, $\mathbb{E}[S_{ij}]$ equals $(K-2)p^2+(n-K)q^2$ if $i$ and $j$ are in the same cluster and $2(K-1)pq+(n-2K)q^2$ otherwise.
The difference of the expectations equals $K(p-q)^2-2p(p-q)$. Let $\sigma_2^2:=2K p^2(1-q^2) +n q^2 (1-q^2)$, then $\text{Var}[S_{ij}] \le \sigma_2^2$. Set $t_2:=K(p-q)^2/3 \le \sigma_2^2$ for all $ (i,j) $. The assumption (\ref{eq:ConditionSimple2}) implies $t>2p(p-q)$. Applying the Bernstein inequality (Theorem~\ref{thm:Bernstein}), we obtain
\begin{align*}
\mathbb{P}\{ |S_{ij} - \mathbb{E}[S_{ij}] | \ge t_2\} \le 2\exp \left( -\frac{t_2^2}{2 \sigma_2^2 + 2 t_2/3 } \right) \le 2 \exp \left(- \frac{K^2(p-q)^4}{27 \sigma_2^2 }\right) \le 2 n^{-3},
\end{align*}
where the last inequality follows from the assumption (\ref{eq:ConditionSimple2}). By the union bound, with probability at least $1-2 n^{-1}$, $S_{ij}>\frac{(p-q)^2K}{3} + 2Kpq + q^2 n$ for all nodes $i, j$ from the same cluster and $S_{ij}<\frac{(p-q)^2K}{3} + 2Kpq + q^2 n$ for all nodes $i,j$ from two different clusters. On this event the algorithm correctly identifies the true clusters.

\subsection{Proof of Theorem~\ref{thm:SimpleConverse}}\label{sec:proofSimpleConverse}
 For simplicity we assume $K$ and $n_2$ are even numbers. We partition the non-isolated nodes $V_1$ into two equal-sized subsets $V_{1+}$ and $V_{1-}$ such that half of the nodes in  each cluster are in $ V_{1+} $. Similarly, the isolated nodes $V_2$ are partitioned into two equal-sized subsets $V_{2+}$ and $V_{2-}$. The idea is to use the following large-deviation \emph{lower} bound to the $ d_i $'s and $ S_{ij} $'s.
\begin{theorem}[Theorem 7.3.1 in~\cite{matousek2008}]\label{thm:Anti}
Let $ X_1,\ldots,X_N $ be independent random variables such that $ 0\le X_i\le 1 $ for all $ i $. Suppose $ X=\sum_{i=1}^N X_i $ and $ \sigma^2 := \sum_{i=1}^N \textrm{Var}[X_i] \ge 200$. Then for all $ 0\le \tau\le \sigma^2/100 $ and some universal constant $ c_3>0 $, we have
$$
\mathbb{P}\left[X \ge \mathbb{E}[X] + \tau\right] \ge c_3e^{-\tau^2/(3\sigma^2)}.
$$
\end{theorem}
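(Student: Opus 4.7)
The plan is to prove this anti-concentration inequality by a classical Cram\'er-style exponential-tilting argument, combined with a second-moment/Berry--Esseen estimate under the tilted measure. Write $M_i(\lambda):=\mathbb{E}[e^{\lambda X_i}]$ and $\psi_i(\lambda):=\log M_i(\lambda)$, so that the cumulant generating function of $X$ is $\Psi(\lambda):=\sum_i \psi_i(\lambda)$. Introduce the tilted probability measure $\mathbb{Q}_\lambda$ under which the $X_i$ remain independent with laws reweighted by $e^{\lambda x}/M_i(\lambda)$; a direct computation gives $\mathbb{E}_\lambda[X_i]=\psi_i'(\lambda)$ and $\mathrm{Var}_\lambda(X_i)=\psi_i''(\lambda)$. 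The key choice is $\lambda := \tau/\sigma^2$, which by the hypothesis $\tau \le \sigma^2/100$ satisfies $\lambda \le 1/100$.

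Next I would carry out a Taylor expansion of each $\psi_i$ at $0$. Because $X_i\in[0,1]$, every central moment of $X_i$ under $\mathbb{Q}_\lambda$ is at most $1$, and more importantly the third central moment is bounded by the range times the variance, giving $|\psi_i'''(\mu)|\le C\,\psi_i''(\mu)$ uniformly for $\mu\in[0,1/50]$. Summing, one obtains the uniform estimates $\mathbb{E}_\lambda[X]=\mathbb{E}[X]+\tau+O(\lambda^2\sigma^2)=\mathbb{E}[X]+\tau+O(\tau^2/\sigma^2)$ and $\sigma_\lambda^2:=\mathrm{Var}_\lambda(X)\in[\tfrac12\sigma^2,2\sigma^2]$, as well as the log-MGF expansion
\begin{equation*}
\Psi(\lambda)=\lambda\,\mathbb{E}[X]+\tfrac{\lambda^2}{2}\sigma^2+O(\lambda^3\sigma^2)=\lambda\,\mathbb{E}[X]+\tfrac{\tau^2}{2\sigma^2}+O\!\left(\tfrac{\tau^2}{100\,\sigma^2}\right).
\end{equation*}

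With the tilted measure in hand, I would then show that the event
$E:=\{\,\mathbb{E}[X]+\tau \le X \le \mathbb{E}[X]+\tau+3\sigma\,\}$
has $\mathbb{Q}_\lambda$-probability bounded below by a universal constant. Because $\sigma^2\ge 200$ and the $X_i$ are bounded, a Berry--Esseen bound under $\mathbb{Q}_\lambda$ gives that $(X-\mathbb{E}_\lambda[X])/\sigma_\lambda$ is within $O(\sigma^{-1})$ of a standard normal in Kolmogorov distance, so $\mathbb{Q}_\lambda(E)\ge c_0$ for some absolute $c_0>0$ (this is where the constraint $\sigma^2\ge 200$ enters quantitatively). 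Changing measure back yields
\begin{equation*}
\mathbb{P}[X\ge \mathbb{E}[X]+\tau]\;\ge\;\mathbb{E}_\lambda\!\left[\mathbf{1}_E\,e^{-\lambda X+\Psi(\lambda)}\right]\;\ge\;c_0\,\exp\!\Big(\Psi(\lambda)-\lambda(\mathbb{E}[X]+\tau+3\sigma)\Big).
\end{equation*}
Plugging in the expansion of $\Psi(\lambda)$ and $\lambda=\tau/\sigma^2$ collapses the exponent to $-\tfrac{\tau^2}{2\sigma^2}-\tfrac{3\tau}{\sigma}-O(\tfrac{\tau^2}{100\sigma^2})$, and it remains to absorb the linear-in-$\tau/\sigma$ correction into the target exponent $-\tau^2/(3\sigma^2)$.

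The main obstacle is precisely this last step, since $3\tau/\sigma$ dominates $\tau^2/\sigma^2$ whenever $\tau\lesssim \sigma$. I would handle it by splitting into two regimes. When $\tau\le C_1\sigma$ for a suitable absolute $C_1$, the target bound $c_3 e^{-\tau^2/(3\sigma^2)}$ is simply a universal positive constant, which is delivered directly by Berry--Esseen applied to the \emph{untilted} sum $X$ (no tilting needed). When $\tau\ge C_1\sigma$, the quadratic term $\tau^2/\sigma^2$ dominates the linear term $3\tau/\sigma$, so choosing $C_1$ large enough lets one absorb $-3\tau/\sigma-O(\tau^2/\sigma^2\cdot 10^{-2})$ into the gap $-\tau^2/(3\sigma^2)-(-\tau^2/(2\sigma^2))=\tau^2/(6\sigma^2)$, yielding the claimed bound with an explicit absolute constant $c_3$. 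The boundedness $X_i\in[0,1]$ is used in three places: to ensure the tilted measure is well defined for all $\lambda$, to control cubic remainders in the Taylor expansion of $\psi_i$, and to secure the Berry--Esseen rate under $\mathbb{Q}_\lambda$.
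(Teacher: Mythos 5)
A preliminary remark: the paper offers no proof of Theorem~\ref{thm:Anti}; it is quoted directly from \cite{matousek2008} and used as a black box, so there is no in-paper argument to compare yours against. Your overall plan --- exponential tilting, a Berry--Esseen estimate for the tilted sum on a window of width $O(\sigma)$, and a split into the regimes $\tau \lesssim \sigma$ and $\tau \gtrsim \sigma$ --- is the standard and correct strategy for lower bounds of this type, and your small-$\tau$ regime is fine.

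The large-$\tau$ regime, however, contains a sign error that is fatal and, as it turns out, unrepairable, because the inequality as printed is false there. Your computation yields the exponent $-\tau^2/(2\sigma^2)-3\tau/\sigma-O(\tau^2/(100\sigma^2))$, and you propose to absorb the $-3\tau/\sigma$ term into ``the gap'' $-\tau^2/(3\sigma^2)-(-\tau^2/(2\sigma^2))=\tau^2/(6\sigma^2)$. That gap points the wrong way: since $1/3<1/2$, the exponent $-\tau^2/(2\sigma^2)$ you achieve is already \emph{more negative} than the target $-\tau^2/(3\sigma^2)$, by the unbounded amount $\tau^2/(6\sigma^2)$. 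There is no surplus to spend on the linear term --- there is a deficit, so no choice of $C_1$ or $c_3$ closes the argument once $\tau/\sigma\to\infty$. What your method actually proves is $\mathbb{P}\left[X\ge\mathbb{E}[X]+\tau\right]\ge c\,e^{-\tau^2/(2\sigma^2)-O(\tau/\sigma)}$, which is essentially sharp; the stated bound with $1/3$ in the denominator of the exponent is strictly stronger for $\tau\gg\sigma$ and contradicts Bennett's upper bound $\mathbb{P}\left[X\ge\mathbb{E}[X]+\tau\right]\le\exp\left(-\sigma^2h(\tau/\sigma^2)\right)\le\exp\left(-0.49\,\tau^2/\sigma^2\right)$, which is valid on the whole range $\tau\le\sigma^2/100$ since $h(u)=(1+u)\log(1+u)-u\ge\frac{u^2}{2}(1-u/3)$: a fair-coin binomial with $\tau=\sigma^2/100$ and $\sigma\to\infty$ rules out any universal $c_3>0$. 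The statement must be read with a constant at least $1/2$ multiplying $\tau^2/\sigma^2$ in the exponent (for instance the symmetric-binomial form $\frac{1}{15}e^{-16t^2/n}=\frac{1}{15}e^{-4t^2/\sigma^2}$); against such a target your two-regime argument does close. One smaller, fixable point: with $\lambda=\tau/\sigma^2$ the tilted mean is only $\mathbb{E}[X]+\tau+O(\tau^2/\sigma^2)$, and $\tau^2/\sigma^2$ can exceed the window width $3\sigma$ when $\tau\gg\sigma$, so $\mathbb{Q}_\lambda(E)\ge c_0$ is not justified as written; define $\lambda$ instead by solving $\Psi'(\lambda)=\mathbb{E}[X]+\tau+\sigma$. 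None of this affects the paper itself: Theorem~\ref{thm:Anti} is only invoked with $\tau^2/\sigma^2=O(\log n)$ and an adjustable polynomial target, where any absolute constant in the exponent suffices.
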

The main hurdle is that the graph adjacency matrix $ A $ are not completely independent due to the symmetry of $ A $, so we need to take care of the dependence between the $ d_i $'s and $ S_{ij} $'s before we can apply the above theorem.

\paragraph*{Identifying isolated nodes.} For each node $i$ in $V_{1+} \cup V_{2+}$, let $d_{i+}$ and $d_{i-}$ be the numbers of its neighbors in $V_{1+}\cup V_{2+}$ and $ V_{1-} \cup V_{2-}$, respectively, so its total degree is $d_i=d_{i+}+d_{i-}$. Let $\text{Bin}(N, \alpha)$ denote the binomial distribution with $N$ trials and probability $\alpha$.
We consider two cases.

{\bf Case 1}:$( Kp + (n-K) q ) \log n_1 \ge n q \log n_2$. In this case, it follows from~\eqref{eq:simple_fail_1} that
\begin{align}
(K-1)^2 (p-q)^2 \le 2 c_2 ( K p + n q  ) \log n_1. \label{eq:conditioncase1}
 \end{align}
 For each node $i \in V_{1+}$, $d_{i-}$ is a sum of two independent Binomial random variables  distributed as
$\text{Bin}(K/2,p)$ and  as $\text{Bin}((n-K)/2,q)$, respectively. Define
 \begin{align*}
 t&:=(K-1) (p-q)  + 2, \qquad \gamma_d^-:= \mathbb{E}[d_{i-}] - t = \frac{1}{2}nq + \frac{1}{2} K (p-q) -t, \\
 \sigma_d^2 &:= \text{Var}[d_{i-}] = \frac{1}{2} K p (1-p) + \frac{1}{2} (n-K) q (1-q).
 \end{align*}
By assumption, $K \le n/2$, $q \le p \le 1-c_0$ and $Kp + nq \ge Kp^2+nq^2 \ge  c_1 \log n$. Therefore $\sigma_d^2 \ge \frac{1}{4}c_0 c_1 \log n \ge 200$ by choosing the constant $c_1$ in the assumption sufficiently large. Furthermore, it follows from~\eqref{eq:simple_fail_1} that by choosing $c_1$ sufficiently large and $c_2$ sufficiently small,
\begin{align*}
\sigma_d^4  \ge  \frac{1}{4} c_0 (Kp + nq) \sigma_d^2 \ge \frac{c_0}{8c_2} \frac{(K-1)^2 (p-q)^2 }{\log n_1} \times  \frac{1}{4} c_0 c_1 \log n  \ge 100^2 t^2.
\end{align*}
We can thus apply~Theorem~\ref{thm:Anti} with~\eqref{eq:conditioncase1} to get
\begin{align*}
\mathbb{P} \left[ d_{i-}  \le \gamma_d^- \right]  \ge c_3\exp\left(-\frac{t^2}{3\sigma_d^2}\right) = \exp \left(-\frac{ ((K-1) (p-q)  + 2)^2  }{3 (K p(1-p) + (n-K)q (1-q)) } \right) \ge c_3 n_1^{-c_4},
\end{align*}
for some universal constant $c_4>0$ that can be made arbitrarily small by choosing $c_2$ in the assumption sufficiently small. Let $i^\ast:=\arg \min_{i \in V_{1+} } d_{i-}$. Since the random variables $\{ d_{i-}: i \in V_{1+} \}$ are mutually independent, we have
\begin{align*}
\mathbb{P} \left[ d_{i^\ast-}  > \gamma_d^-  \right]
 = \prod_{i \in V_{1+}} \mathbb{P} \left[ d_{i-}  > \gamma_d^-\right]
\le (1- c_3n_1^{-c_4})^{n_1/2}
\le \exp \left( - c_3 n_1^{1-c_4}/2 \right)
\le 1/4,
\end{align*}
where the last equality follows by letting $c_4$ sufficiently small and $n_1$ sufficiently large.
On the other hand, for each $ i\in V_{1+} $, $d_{i+}$ is the sum of two independent Binomial random variables distributed as $\text{Bin}(K/2-1, p)$
and  $\text{Bin}((n-K)/2,q)$, respectively. Since the median of $\text{Bin}(N,\alpha)$ is at most $N\alpha+1$, we know that with probability at least $1/2$, we have $d_{i+} \le \gamma_d^+:=nq/2 + K(p-q)/2 -p+2$.
Now observe that the two sets of random variables $ \{d_{i+}, i\in V_{1+}\} $ and $ \{d_{i-}, i\in V_{1+} \}$ are independent of each other, so $ d_{i+} $ is independent of $ i^\ast $ for each $ i\in V_{1+} $. It follows that
\begin{align*}
\mathbb{P}\left[d_{i^\ast+} \le \gamma_d^+\right]
= \sum_{i\in V_{1+} } \mathbb{P}\left[d_{i+} \le \gamma_d^+ \vert i^\ast = i\right]\mathbb{P}\left[i^\ast = i\right]
= \sum_{i\in V_{1+}} \mathbb{P}\left[d_{i+} \le \gamma_d^+\right]\mathbb{P}\left[i^\ast = i\right]
\ge \frac{1}{2}.
\end{align*}
Combining with the union bound, we obtain that with probability at least $1/4$,
\[ d_{i^\ast} = d_{i^\ast-}+d_{i^\ast+} \le \gamma_d^-+\gamma_d^+ =(n-1)q. \]On this event the node $i^\ast$ will be incorrectly declared as an isolated node.

{\bf Case 2}: $(Kp +n  q ) \log n_1 \le n q  \log n_2$. In this case we have $(K-1)^2 (p-q)^2 \le 2 c_2 n q \log n_2$ in view of~\eqref{eq:simple_fail_1}. Define $i^\ast=\arg \max_{i \in V_{2+} } d_{i-}$. Following the same argument as in Case 1 and using the assumption that $nq \ge c_1 \log n$, we can show that $d_{i^\ast} \ge nq +K(p-q) $ with probability at least $1/4$, and on this event node $i^\ast$ will incorrectly be declared as a non-isolated node.

\paragraph*{Recovering clusters.}
For two nodes $i,j \in V_1$, let $S_{ij+}$ be the number of their common neighbors in $V_{1+}\cup V_{2+}$ and $S_{ij-}$ be the number of their common neighbors in $ V_{1-} \cup V_{2-}$, so  the total number of their common neighbors is $S_{ij}=S_{ij+}+S_{ij-}$.

For each pair of nodes $i, j$ in $ V_{1+} $ from the same cluster, $S_{ij-}$ is the sum of two independent Binomial random variables distributed as $\text{Bin}(K/2, p^2)$  and $\text{Bin} ((n-K)/2, q^2)$, respectively. Define
\begin{align*}
t'&:=K (p-q)^2  + 4, \qquad \gamma_S^- := \mathbb{E}[S_{ij-}] - t' =  nq^2/2  + K(p^2-q^2)/2 - t' , \; \\
\sigma^2_S &:= \text{Var}[S_{ij-}] =\frac{1}{2} K p^2(1-p^2) + \frac{1}{2}(n-K)q^2 (1-q^2).
\end{align*}
By assumption, $K \le n/2$, $q \le p \le 1-c_0$ and $ Kp^2+nq^2 \ge  c_1 \log n$, and therefore $\sigma_S^2 \ge 200$ and $\sigma_S^2 \ge 100t'$.
Theorem~\ref{thm:Anti} with~\eqref{eq:ConditionSimple2} implies that
\begin{align*}
\mathbb{P} \left[ S_{ij-}  \le \gamma_S^- \right] \ge c_3 \exp\left(-\frac{t'^2}{3\sigma_S^2}\right)
\ge c_3 n_1^{-c_5},
\end{align*}
where the universal constant $c_5>0$ can be made sufficiently small by choosing  $c_2$ sufficiently small in~\eqref{eq:ConditionSimple2}. Without loss of generality, we may re-label the nodes such that $ V_{1+} =\{1,2,\ldots,n_1/2\} $ and for each $ k=1,\ldots,n_1/4 $, the nodes $ 2k-1 $ and $ 2k $ are in the same cluster. Note that the random variables $\{S_{(2k-1)2k-}: k=1,2,\ldots,n_1/4 \}$ are mutually independent. Let $i^\ast :=-1+ 2\arg \min_{k=1,2,\ldots,n_1/4 } S_{(2k-1)2k-}$ and $ j^\ast := i^\ast+1 $; it follows that
\begin{align*}
\mathbb{P} \left[  S_{i^\ast j^\ast-} \ge \gamma_S^-  \right]
&  \le (1- c_3n_1^{-c_5})^{n_1/4}
\le \exp ( - c_3 n_1^{1-c_5}/4 ) \le 1/4.
\end{align*}
On the other hand, since $S_{ij+}$ is the sum of two independent Binomial random variables $\text{Bin}(K/2-2, p^2)$ and $\text{Bin}((n-K)/2,q^2)$, we use a median argument similar to the one above to show that for all $ i,j $, $S_{i j +} \le \gamma_S^+:=nq^2/2 + K(p^2-q^2)/2 -2p^2+2$ with probability at least $1/2$.
Because $\{ S_{ij+}, i,j\in V_{1+}\} $ only depends on the edges between $ V_{1+} $ and $ V_{1+}\cup V_{2+} $, and $ (i^\ast,j^\ast) $ only depends on the edges between $ V_{1+} $ and $ V_{1-}\cup V_{2-}  $, we know $\{ S_{ij+}, i,j\in V_{1+}\} $ and $ (i^\ast,j^\ast) $ are independent of each other. It follows that  $ S_{i^\ast j^\ast +} \le \gamma_S^+ $ with probability at least $ 1/2 $. Applying the union bound, we get that with probability at least $1/4$, $$S_{i^\ast j^\ast} = S_{i^\ast j^\ast -} + S_{i^\ast j^\ast +} \le \gamma_S^- + \gamma_S^+ = 2(K-1) pq + (n-2K)q^2; $$ on this event the nodes $i^\ast, j^\ast$ will be incorrectly assigned to two different clusters.

\section{Proofs for Submatrix Localization}\label{sec:proof_bi}

\subsection{Proof of Theorem~\ref{thm:Impossible_bi}}
We prove the theorem using Fano's inequality. Our arguments
extend those used in~\cite{kolar2011submatrix}.
Recall that $\mathcal{Y}$ is the set of all valid bi-clustering
matrices. Let $M=n_{R}-K_{R}$ and $\bar{\mathcal{Y}}=\left\{ Y_{0},Y_{1},\ldots,Y_{M}\right\} $
be a subset of $\mathcal{Y}$ with cardinality $M+1$, which is specified later.
Let $\mathbb{P}_{(Y^\ast, A)}$ denote the joint distribution of $(Y^\ast,A)$ when
$Y^\ast$ is sampled from $\bar{\mathcal{Y}}$ uniformly at random and then $A$ is generated according to the submatrix localization model.
The minimax error probability can be bounded using the average error probability and Fano's inequality:
\begin{equation}
\inf_{\hat{Y}}\sup_{ Y^{\ast} \in \mathcal{Y}  }  \mathbb{P}\left[\hat{Y}\neq Y^{*}\right]
\ge \inf_{\hat{Y}} \mathbb{P}_{\rm U}\left[\hat{Y}\neq Y^{*}\right] \ge 1-\frac{I(Y^{\ast};A)+1}{\log|\bar{\mathcal{Y}}|}, \label{Bi_EqFano}
\end{equation}
where the last inequality the mutual information is
defined under the distribution $\mathbb{P}_{(Y^\ast, A)}$.

We construct $\bar{\mathcal{Y}}$ as follows. Let $Y_0$ be the bi-clustering matrix such that
the left clusters $\{C_k \}_{k=1}^r $ are
$C_k = \left\{ (k-1)K_{L}+1,\ldots,kK_{L}\right\} $ and the right clusters $\{D_l\}_{l=1}^r$ are  $D_l=\{ (l-1)K_{R}+1,\ldots, l K_{R}\} $. Informally, each $Y_{i}$ with $i\ge1$ is obtained from $Y_{0}$
by keeping the left clusters and swapping two right nodes in two different right clusters. More specifically, for each $i \in [M]$:
(1) $Y_i$ has the same left clusters as $Y_0$; (2) if right node $K_R+i \in D_l$, then $Y_i$ has the same right clusters as $Y_0$ except that the
first right cluster is $\{1,2,\ldots,K_{R}-1,K_{R}+i\}$ and the $l$-th right cluster is $D_l \setminus \{K_R+i \} \cup
\{ K_R \}$ instead; (3) if right node $K_R+i$ does not belong to any $D_l$, then $Y_i$ has the same right clusters as $Y_0$ except that the
first right cluster is $\{1,2,\ldots,K_{R}-1,K_{R}+i\}$ instead.

Let $\mathbb{P}_{i}$ be the distribution of $A$ conditioned on $Y^{*}=Y_{i}$, and $D\left(\mathbb{P}_{i} \| \mathbb{P}_{i'}\right)$
the KL divergence between  $\mathbb{P}_{i} $ and $ \mathbb{P}_{i'}$.
Since each $\mathbb{P}_{i}$ is a product of $n_{L}\times n_{R}$ Gaussian distributions, we have
\begin{align*}
I(Y^{*};A) & \le\frac{1}{(M+1)^{2}}\sum_{i,i'=0}^{M}D\left(\mathbb{P}_{i} \| \mathbb{P}_{i'}\right)
\\
 & \le 3 K_{L} \left[D\left(\mathcal{N}(\mu_{1},\sigma^{2})\Vert \mathcal{N}(\mu_{2},\sigma^{2})\right)+ D\left(\mathcal{N}(\mu_{2},\sigma^{2})\Vert \mathcal{N}(\mu_{1},\sigma^{2})\right)\right]
  = 3 K_{L}\frac{(\mu_{1}-\mu_{2})^{2}}{\sigma ^{2}},
\end{align*}
where we use the convexity of KL divergence the first inequality, the definition of $Y_i$ in the third inequality, and KL divergence between two Gaussian distributions in the equality. If
$(\mu_{1}-\mu_{2})^{2}  \le  \frac{\sigma^{2}\log(n_{R}-K_{R})}{12K_{L}}$, then
$I(Y;A)\le \frac{1}{2}\log(n_{R}-K_{R})=\frac{1}{2}\log\left|\bar{\mathcal{Y}}\right|.$
Since $\log (n_R-K_R) \ge \log (n_R/2) \ge 4$ if $n_R \ge 128$,
It follows from \eqref{Bi_EqFano} that the minimax error probability
is at least $1/2$.

Alternatively, we can construct $Y_{i}$ with $i\ge1$ from $Y_{0}$
by keeping the right clusters and swapping two left nodes in two different left clusters.
A similar argument shows that if $(\mu_{1}-\mu_{2})^{2}\le \frac{\sigma^{2}\log(n_{L}-K_{L})}{12K_{R}}$,
the minimax error probability is at least $1/2$.

\subsection{Proof of Theorem \ref{thm:MLE_bi}}
Let $\langle X,Y\rangle:=\text{Tr}(X^{\top}Y)$ denote the inner product
between two matrices. For any feasible solution $Y\in\mathcal{Y}$
of~(\ref{eq:mle2}), we define $\Delta(Y):=\langle A,Y^{\ast}-Y\rangle$ and $d(Y):=\langle Y^{\ast},Y^{\ast}-Y\rangle$.
To prove the theorem, it suffices to show that $\Delta(Y)>0$ for
all feasible $Y$ with $Y\neq Y^{*}$.  We may write
\begin{equation}
\Delta(Y)=\langle\mathbb{E}[A],Y^{\ast}-Y\rangle+\langle A-\mathbb{E}[A],Y^{\ast}-Y\rangle
= \mu d(Y) + \langle A-\mathbb{E}[A],Y^{\ast}-Y\rangle\label{EqDeltaY_bi}
\end{equation}
since $\mathbb{E}[A]=\mu Y^{\ast}$.
The second term above can be written as
\[
\langle A-\mathbb{E}[A],Y^{\ast}-Y\rangle=
\underbrace{\sum_{(i,j):Y_{ij}^{\ast}=1,Y_{ij}=0}(A_{ij}-\mu)}_{T_{1}(Y)}
+ \underbrace{\sum_{(i,j):Y_{ij}^{\ast}=0,Y_{ij}=1}(-A_{ij})}_{T_{2}(Y)}.
\]
Here each of $T_{1}(Y)$ and $T_{2}(Y)$ is the sum of $d(Y)$
i.i.d. centered sub-Gaussian random variables with parameter $ 1 $. By the
sub-Gaussian concentration inequality given in Proposition 5.10 in~\cite{vershynin2010nonasym}, we obtained that for each $ i=1,2 $ and each fixed $Y\in\mathcal{Y}$,
\begin{align*}
\mathbb{P}\left\{ T_{i}(Y)\le-\frac{\mu}{2}d(Y)\right\}  & \le  e \exp\left(-C \mu^2d(Y)\right),
\end{align*}
where $C>0$ is an absolute constant.
Combining with the union bound and~\eqref{EqDeltaY_bi}, we get
\begin{align}
\mathbb{P}\left\{ \Delta(Y)\le0\right\} \le2e \exp\left(-C \mu^2 d(Y)\right),\quad\text{for each }Y\in\mathcal{Y}.\label{EqProbFixY_bi}
\end{align}
Define the equivalence class $[Y]= \{ Y' \in \mathcal{Y}: Y'_{ij}=Y_{ij}, \forall (i,j) \text{ s.t. } Y^\ast_{ij}=1\}$. The following combinatorial lemma (proved in the appendix) upper-bounds the number of $Y$'s and $[Y]$'s with a fixed value of $ d(Y) $. Note that $K_{L}\wedge K_{R}\le d(Y)\le rK_{L}K_{R}$ for any feasible
$Y\neq Y^{*}$.

\begin{lemma} \label{lem:countY_bi}
For each integer $t \in[K_{L}\wedge K_{R},rK_{L}K_{R}]$, we have
\begin{align}
|\{Y\in\mathcal{Y}:d(Y)=t\}| &\le \left( \frac{16 t^2}{K_L K_R} \right)^2 n_{L}^{16t/K_{R}} n_{R}^{16t/K_{L}} \label{EqCardinality_bi} \\
| \{[Y]: d(Y)=t \} | & \le \frac{16 t^2}{K_L K_R}(r K_L)^{8t/K_{R} } (r K_R)^{8 t/K_{L}}. \label{EqCardinality_bi2}
\end{align}
\end{lemma}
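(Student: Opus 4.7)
The plan is to adapt the counting argument underlying Lemma~\ref{lem:countY} to the bipartite setting, where defects on the left and right partitions must be tracked independently.

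For each $Y \in \mathcal{Y}$ with clusters $\{C_k\}_{k=1}^{r}$ and $\{D_k\}_{k=1}^{r}$, define the \emph{unilateral defects}
\[
A(Y) := \sum_{k=1}^r \bigl( K_L - \max_{m\in[r]} |C_k \cap C^*_m| \bigr), \qquad B(Y) := \sum_{k=1}^r \bigl( K_R - \max_{m\in[r]} |D_k \cap D^*_m| \bigr).
\]
The crucial step is the pair of lower bounds
\[
d(Y) \ge K_R \cdot A(Y) \qquad \text{and} \qquad d(Y) \ge K_L \cdot B(Y),
\]
which I would prove by writing
\[
\langle Y^*, Y\rangle = \sum_{k,m} |C_k \cap C^*_m|\cdot |D_k \cap D^*_m| \le \sum_k \Bigl(\max_m |C_k \cap C^*_m|\Bigr)\sum_m |D_k \cap D^*_m| \le K_R \sum_k \max_m |C_k \cap C^*_m|,
\]
using $\sum_m |D_k \cap D^*_m| \le |D_k| = K_R$ and substituting into $d(Y) = rK_LK_R - \langle Y^*, Y\rangle$. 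Consequently, $d(Y) = t$ forces $A(Y) \le t/K_R$ and $B(Y) \le t/K_L$.

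To count $Y$'s with prescribed $(A,B)$, I would enumerate each left cluster $C_k$ by first selecting its ``closest'' target $C^*_{\pi(k)}$ (at most $r$ choices per cluster), then choosing the overlap set $C_k \cap C^*_{\pi(k)}$ of size $K_L - a_k$ inside $C^*_{\pi(k)}$ (at most $\binom{K_L}{a_k}$ options), and finally choosing the $a_k$ remaining elements from $[n_L]\setminus C^*_{\pi(k)}$ (at most $\binom{n_L}{a_k}$ options, ignoring disjointness for a conservative upper bound). Multiplying the per-cluster counts, summing over the composition $(a_k)$ of $A$ into $r$ nonnegative parts, multiplying by the analogous right-side factor, then multiplying by the $\le r!$ pairings of left and right clusters, and finally summing $A$ over $\{0,\ldots,\lfloor t/K_R\rfloor\}$ and $B$ over $\{0,\ldots,\lfloor t/K_L\rfloor\}$ produces~\eqref{EqCardinality_bi}. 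The polynomial prefactor $\bigl(16t^2/(K_LK_R)\bigr)^2$ absorbs the $(A,B)$-sum, while the generous exponents $16t/K_R$ and $16t/K_L$ soak up the auxiliary factors of $r$, $K_L$, $K_R$, $r!$ and the composition count.

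The equivalence-class bound~\eqref{EqCardinality_bi2} follows the same template with one essential simplification: $[Y]$ depends on $Y$ only through the restriction $Y|_{\mathrm{supp}(Y^*)}$, equivalently through the overlaps $\{C_k \cap C^*_m, D_k \cap D^*_m\}_{k,m}$. Hence the ``exterior'' elements of $C_k$ need only be located inside $\bigcup_m C^*_m$, an ambient set of size $rK_L$ rather than $n_L$; the analogous substitution on the right replaces $n_R$ by $rK_R$, and the loss of one layer of choices permits halving the exponents to $8t/K_R$ and $8t/K_L$. The principal obstacle in the whole argument is the lower bound $d(Y)\ge K_R\,A(Y)$: since left and right memberships interact multiplicatively in $\langle Y^*,Y\rangle$, no single joint-matching permutation yields a tight bound, and decoupling via the independently optimized unilateral defects (with the row-size constraint $\sum_m |D_k\cap D^*_m|\le K_R$) is what makes the counting tractable.
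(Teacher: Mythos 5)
Your lower bound $d(Y) \ge K_R \cdot A(Y)$ and $d(Y) \ge K_L \cdot B(Y)$ is correct, and the derivation via $\langle Y^*,Y\rangle = \sum_{k,m} |C_k \cap C^*_m||D_k \cap D^*_m| \le K_R\sum_k \max_m |C_k\cap C^*_m|$ is valid (indeed it is slightly cleaner than the paper's version, which incurs constant factors of $4$). The gap is in the counting step. You propose to enumerate the left and the right cluster structures separately and then ``multiply by the $\le r!$ pairings of left and right clusters,'' with an additional $r^r$ arising from ``selecting the closest target $C^*_{\pi(k)}$ (at most $r$ choices per cluster).'' Neither of these factors is absorbable into the right-hand sides of \eqref{EqCardinality_bi} or \eqref{EqCardinality_bi2} at the small end of the admissible range $t\in[K_L\wedge K_R, rK_LK_R]$. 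Concretely, with $K_L=K_R=K$ and $t=K$, the exponent $16t/K_R$ equals $16$, so the bound \eqref{EqCardinality_bi} reads $256\, n_L^{16}n_R^{16}$, polynomial in $n$; meanwhile $r!$ (and $r^r$) with $r$ up to $n/K$ is super-polynomial in $n$ whenever $r\to\infty$. Your claim that the ``generous exponents $16t/K_R$ and $16t/K_L$ soak up \ldots\ $r!$'' is exactly where the argument breaks down.

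The structural reason is that $A(Y)$ and $B(Y)$ are blind to the left--right \emph{pairing}, which is a genuine degree of freedom in $Y$ that your inequality $d(Y)\ge K_R A(Y)\wedge K_L B(Y)$ does not constrain. One can have $A(Y)=B(Y)=0$ yet $d(Y)=K_LK_R\,\ell>0$ by taking both partitions equal to the true ones and permuting $\ell$ of the left--right pairings; the number of such $Y$ is small only because $d(Y)$ is forced large, not because $A$ or $B$ are. The paper sidesteps this by labeling the new clusters $(C_k,D_k)$ \emph{jointly}: the priority scheme in Property~(A1) assigns $C_k$ its label from the left majority-overlap when possible, otherwise from the right majority-overlap, and always defines $D_k$ as the partner of $C_k$ under $Y$, so the pairing is encoded in the labeling and never appears as a free $r!$-sized choice. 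The counting is then done per misclassified \emph{node} (each of the $\le 8t/K_R$ misclassified left nodes contributes an $n_L$ factor for its identity and another $n_L$ factor for its destination cluster, never an $r^r$ over clusters). To repair your approach you would need either an analogous joint-labeling device, or a separate estimate showing that each mismatched left--right pair contributes on the order of $K_LK_R$ to $d(Y)$, so that the number of relevant pairings at a fixed $t$ is bounded by something like $r^{O(t/(K_LK_R))}$ rather than $r!$.
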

Combining Lemma~\ref{lem:countY_bi} with~\eqref{EqProbFixY_bi} and the union bound, we obtain
\begin{align*}
 & \mathbb{P}\left\{ \exists Y\in\mathcal{Y}:Y\neq Y^{*},\Delta(Y)\le0\right\} \\
\le & \sum_{t=K_{L}\wedge K_{R}}^{rK_{L}K_{R}}\mathbb{P}\left\{ \exists Y\in\mathcal{Y}:d(Y)=t,\Delta(Y)\le0\right\} \\
\le & 2e \sum_{t=K_{L}\wedge K_{R}}^{rK_{L}K_{R}}\left|\{ Y\in\mathcal{Y}:d(Y)=t\}\right| \cdot \mathbb{P}\left\{ d(Y)=t,\Delta(Y)\le0\right\}\\
\le & 2e \sum_{t=K_{L}\wedge K_{R}}^{rK_{L}K_{R}}  \left( \frac{16t^2}{K_L K_R} \right)^2 n_{L}^{16t/K_{R}}n_{R}^{16t/K_{L}} \cdot \exp\left(-C \mu^2 t \right)\\
\overset{(a)}{\le} & 2e \sum_{t=K_{L}\wedge K_{R}}^{rK_{L}K_{R}} 256 n^4 n^{- 7 t /  (K_{L}\wedge K_{R}) }
\le  512e K_{L}K_{R} r n^{-3}\le 512e n^{-1},
\end{align*}
where (a) follows from the assumption that $\mu^{2} \left(K_{L}\wedge K_{R}\right)\ge C^{'} \sigma^{2}\log n$ for a sufficiently large constant $C^{'}$.
This means $Y^{*}$ is the unique optimal solution with high probability.

\subsection{Proof of Theorem \ref{thm:CVX_bi}}\label{sec:proof_easy_bi}
We proved the theorem in Section~\ref{sec:proof_CVX}.

\newif\ifcommentabcd
\commentabcdfalse
\ifcommentabcd
\fi

\subsection{Proof of Theorem~\ref{thm:Cvx_converse_bi}}\label{sec:proof_easy_converse_bi}

Observe that if any feasible solution $ Y $ has the same support as $ Y^* $, then the constraint~\eqref{eq:linear_bi} implies that $ Y $ must be exactly equal to $ Y^* $. Therefore, it suffices to show that $ Y^* $ is not an optimal solution.

The theorem assumes $ n=n_L=n_R $ and $ K=K_L=K_R $. Let $ J$ be the $n\times n$ all-one matrix, $\R:=\textrm{support}(Y^{*})$ and $\A:=\textrm{support}(A)$.
Recall that  $U, V \in \mathbb{R}^{n\times r}$ are the cluster characteristic matrices  defined in Section~\ref{sec:proof_CVX}, and $Y^\ast= K U V^\top $ is the SVD of $Y^\ast$. We may assume $U=V$.

Suppose $Y^\ast$ is an optimal solution to the program. Then by the same  argument used in the proof of Theorem \ref{thm:cvx_converse}, there
must exist some $\lambda\ge 0$, $\eta,$ $W$ and $H$ obeying the KKT conditions~\eqref{eq:subgrad}--\eqref{eq:sign}.
Since $UU^{\top}WUU^{\top}=0$ by \eqref{eq:W}, we can
left and right multiply~\eqref{eq:subgrad} by $UU^{\top}$ to obtain
\[
\bar{A}-\lambda UU^{\top}-\eta J+\bar{H}=0,
\]
where for any matrix $X\in\mathbb{R}^{n\times n}$, we define the block-averaged matrix $\bar{X}:=UU^{\top}XUU^{\top}$. Consider the last display equation on $\R$ and $\R^{c}$
respectively. By the Gaussian probability tail bound, there exists a universal  constant $c_3>0$ such that
with probability at least $1-2n^{-11}$,
\begin{align}
\mu-\frac{\lambda}{K}-\eta+\bar{H}_{ij}\ge-\frac{c_{3}\sqrt{\log n}}{K},\forall(i,j)\in\R\label{eq:equal1_bi}\\
-\eta+\bar{H}_{ij}\le\frac{c_{3}\sqrt{\log n}}{K},\forall(i,j)\in\R^{c}.\label{eq:equal2_bi}
\end{align}
Combining the last two display equations with~\eqref{eq:sign}, we get that
\begin{align*}
-\frac{c_{3}\sqrt{\log n}}{K}\le\eta \le \mu +\frac{c_{3}\sqrt{\log n}}{K}-\frac{\lambda}{K}.
\end{align*}
It follows that
\begin{align}
\lambda & \le K \mu +2c_{3}\sqrt{\log n}
 \le4\max\left\{ K \mu ,c_{3}\sqrt{\log n}\right\} .\label{eq:larger_bi}
\end{align}
Furthermore, due to~\eqref{eq:equal1_bi}, \eqref{eq:equal2_bi}
and $\lambda\ge0$, we have
\begin{align}
\bar{H}_{ij} & \le \mu +\frac{2c_{3}\sqrt{\log n}}{K} \le \mu+\frac{1}{40},\forall(i,j)\in\R^{c},\label{eq:H_bi}
\end{align}
where the last inequality holds when $K\ge c_1 \log n$.

On the other hand, \eqref{eq:W} and~\eqref{eq:subgrad} imply that
\begin{align}
\lambda^{2} & =\left\Vert \lambda(UU^{\top}+W)\right\Vert ^{2}\nonumber
 \ge\frac{1}{n}\left\Vert \lambda(UU^{\top}+W)\right\Vert _{F}^{2}\nonumber
 =\frac{1}{n}\left\Vert A-\eta J+H\right\Vert _{F}^{2}\nonumber \\
 & =\frac{1}{n}\left(\left\Vert A_{\R}-\eta J_{\R}+H_{\R}\right\Vert _{F}^{2}+\left\Vert A_{\R^{c}}-\eta J_{\R^{c}}+H_{\R^{c}}\right\Vert _{F}^{2}\right).\label{eq:fro_bi}
\end{align}
We now lower bound the RHS of~\eqref{eq:fro_bi}. For each $(i,j)$,
define the Bernoulli random variables $\bar{b}_{ij}=\mathbf{1}(A_{ij}-\mathbb{E}A_{ij}\ge1)$
and $\underline{b}_{ij}=\mathbf{1}(A_{ij}-\mathbb{E}A_{ij}\le-1)$,
where $\mathbf{1}(\cdot)$ is the indicator function. By tail bounds
of the standard Gaussian distribution, we have
\[
\mathbb{P}\left(\bar{b}_{ij}=1\right)=\mathbb{P}\left(\underline{b}_{ij}=1\right)\ge\rho:=\frac{1}{2\sqrt{2\pi}}e^{-1/2}.
\]
Note that $\rho\ge\frac{1}{12}$. By Hoeffdding's inequality, we know
that with probability at least $1-2n^{-11}$,
\begin{align}
\sum_{i,j\in\R^{c}}\bar{b}_{ij}\ge\frac{1}{2}\rho |\R^{c}|, & \quad\sum_{i,j\in\R^{c}}\underline{b}_{ij}\ge\frac{1}{2}\rho |\R^{c}|.\label{eq:anti}
\end{align}
We consider two cases below.
\begin{itemize}
\item Case 1: $\eta\ge40\mu $. By~\eqref{eq:H_bi} and the Markov inequality, there is
at most a fraction of $ $$\frac{1}{30}$ of the $(i,j)$ in $\R^{c}$
which satisfies $H_{ij}>30\left(\mu+\frac{1}{40}\right)$.
Let $\mathcal{D}$ denote the set of entries $(i,j)$ satisfying both $H_{ij}\le30\left( \mu +\frac{1}{40}\right)$
and $A_{ij} \le -1$.
In view of the second inequality in~\eqref{eq:anti}, $|\mathcal{D}|/|\R^c| \ge \rho/2-1/30 \ge 1/150$.
For $(i,j)\in\mathcal{D}$, we have $-\eta +H_{\R^{c}} \le-10 \mu+\frac{3}{4} $,
and thus
\begin{align*}
\left\Vert A_{\R^{c}}-\mu J_{\R^{c}}+H_{R^{c}}\right\Vert _{F}^{2} & \ge \sum_{(i,j)\in\mathcal{D}}\left\Vert A_{\R^{c}} -\eta J_{\R^{c}}+H_{\R^{c}}\right\Vert _{F}^{2} \\
 & \ge\sum_{(i,j)\in\mathcal{D}}\left(-1-10\mu+\frac{3}{4}\right)^{2}
  \ge\frac{1}{150}|\R^c| \cdot\frac{1}{16}.
\end{align*}

\item Case 2: $\eta \le40\mu$. Since $\mu \le\frac{1}{100}$ by assumption, we have $\eta \le 1/2$.
Then
\begin{align*}
\left\Vert A_{\R^{c}}-\eta J_{\R^{c}}+H_{R^{c}}\right\Vert _{F}^{2} & \ge \sum_{(i,j)\in\R^{c},\bar{b}_{ij}=1}\left\Vert A_{\R^{c}} -\eta J_{\R^{c}}+H_{\R^{c}}\right\Vert _{F}^{2}\\
 & \ge\sum_{(i,j)\in\R^{c}:\bar{b}_{ij}=1}(1- \eta)^{2}
  \ge\frac{1}{2}\rho  |\R^{c}| \cdot\frac{1}{4}.
\end{align*}

\end{itemize}
Combining the two cases and substituting into~\eqref{eq:fro_bi}, we obtain $\lambda^{2}\ge c_4 |\R^{c}|/n $ for a constant $c_4>0 $. Since $|\R^{c}|=n^2-rK^2 \ge n (n-K) \ge n^2/2$, we have $\lambda^2 \ge c_4 n/2$.
It follows from~\eqref{eq:larger_bi} that
\[
\max\left\{ K \mu ,c_{3} \sqrt{ \log n}\right\} \ge \frac{\sqrt{c_4}}{4\sqrt{2}} \sqrt{n }.
\]
Since $n\ge K\ge c_1\log n$ with a sufficiently large constant $c_1$, we must have $K \mu \ge \frac{\sqrt{c_4}}{4\sqrt{2}} \sqrt{n}$.
This violates the condition~\eqref{eq:converse_bi_cond} in the theorem statement by choosing the universal constant $c_2$ sufficiently small.
Therefore,  $Y^{*}$ is not an optimal solution of the convex program.

\subsection{Proof of Theorem \ref{thm:Simple_bi}\label{sec:proofSimple_bi}}

We prove that with high probability, each of the three steps of the simple thresholding algorithm succeeds and thus $Y^\ast$ is exactly recovered.

\paragraph{Identifying isolated nodes.}
Recall that $d_i=\sum_{j=1}^{n_R} A_{ij}$ is the row sum corresponding to left node~$i$. Observe that
$d_{i}-\mathbb{E}[d_i]$ is the sum of $n_R$ independent centered sub-Gaussian random variables with parameter $ 1 $.
Moreover, $\mathbb{E}[d_i]=K_R \mu$ if node $i$ is non-isolated; otherwise, $\mathbb{E}[d_i]=0$.
By Proposition 5.10 in~\cite{vershynin2010nonasym}, there exists a  universal constant $c_3>0$ such that
\begin{align*}
\mathbb{P}\{ |d_{i}-\mathbb{E}[d_{i}]|\ge K_{R}\mu/2\}\le e \exp\left(  -\frac{c_3 K_{R}^2\mu^{2}}{n_R}\right) \le e n_L^{-2},
\end{align*}
where the last inequality follows from the assumption (\ref{eq:ConditionSimple1_bi}) by choosing the universal constant $c_1$ sufficiently large.
By the union bound, with probability at least $1-en_L^{-1}$, we have $d_{i}> \mu K_{R}/2$
for all non-isolated left nodes $i$ and $d_{i}<\mu K_{R}/2$
for all isolated left nodes $i$, and therefore all isolated left nodes are correctly identified in Step 1 of the algorithm.
A similar argument shows that all isolated right nodes are correctly identified with probability at least $1-en_R^{-1}$.

\paragraph{Recovering clusters}
Recall that $S_{ii'}=\sum_{j=1}^{n_R} A_{ij} A_{i'j}$ is the inner product of two rows of $A$ corresponding to the left nodes $i,i'.$
If the two left nodes $i,i'$ are in the same cluster, then $\mathbb{E}[S_{ii'}]=K_{R}\mu^{2}$ and otherwise
$\mathbb{E}[S_{ii'}]=0$. Moreover, $A_{ij} A_{i'j}$ is a product of two independent sub-Gaussian random variables. We use $\|X \|_{\psi_2}$ and $\| X \|_{\psi_1}$ to denote the
sub-Gaussian norm and sub-exponential norm of a random variable $ X $.\footnote{The sub-exponential norm and sub-Gaussian norm of a random variable $ X $ are defined as $ \Vert X\Vert_{\psi_i} =\sup_{p\ge 1} p^{-1/i} \left(\mathbb{E}|X|^p\right)^{1/p} $ for $ i=1,2 $, respectively~\cite{vershynin2010nonasym}. Up to a universal positive constant, they are equal to the sub-exponential and sub-Gaussian parameters of $ X $, respectively. }
 It follows from the definition that
\begin{align*}
& \left\| A_{ij} A_{i'j} - \mathbb{E}[A_{ij}]\mathbb{E}[A_{i'j}] \right\|_{\psi_1} \\
\overset{(a)}{\le} &  \left\| ( A_{ij}- \mathbb{E}[A_{ij}] ) (A_{i'j}- \mathbb{E}[A_{i'j}] ) \right\|_{\psi_1} +  \left\| ( A_{ij}- \mathbb{E}[A_{ij}] ) \mathbb{E}[A_{i'j}] \right\|_{\psi_1} + \left\| ( A_{ij}- \mathbb{E}[A_{ij}]) \mathbb{E}[A_{i'j}] \right\|_{\psi_1} \\
\overset{(b)}{\le} &   2 \left\|A_{ij}\!-\! \mathbb{E}[A_{ij}] \right\|_{\psi_2} \left\|A_{i'j}\!-\! \mathbb{E}[A_{i'j}] \right\|_{\psi_2} + 2 \mu \left\|A_{ij}\!-\! \mathbb{E}[A_{ij}]\right\|_{\psi_2}  + 2\mu \left\|A_{i'j}\!-\! \mathbb{E}[A_{i'j}] \right\|_{\psi_2}
\overset{(c)}{\le} c'(4\mu+2),
\end{align*}
where $(a)$ and $ (b) $ follow from $\|X+Y\|_{\psi_1} \le \|X\|_{\psi_1} + \|Y\|_{\psi_1} $ and $\|XY \|_{\psi_1} \le 2 \|X\|_{\psi_2} \|Y\|_{\psi_2}$ for any random variables $X,Y$, and $(c)$ holds for some universal constant $ c'>0 $ because $A_{ij}- \mathbb{E}[A_{ij}]$ is sub-Gaussian with parameter $ 1 $ for each $i,j$.
By the Bernstein inequality for sub-exponential random variables given in Proposition 5.16 in \cite{vershynin2010nonasym}, there  exists some universal constant $c_4>0$ such that
\[
\mathbb{P}\left\{|S_{i i'}-\mathbb{E}[S_{ii' }]|\ge K_{R}\mu^{2}/2 \right\}
\le e \exp\left[-c_4 \min\left(\frac{K_{R}^{2}\mu^{4}}{ n_R c^{'2}(4\mu +2)^2},\frac{K_{R}\mu ^{2}}{c'(4\mu + 2)}\right)\right]
\le e (rK_L)^{-3},
\]
where the last inequality follows from the conditions~(\ref{eq:ConditionSimple2_bi})
and (\ref{eq:ConditionSimple1_bi}). By the union bound, with probability
at least $1-e (rK_L)^{-1}$, $S_{ii'}>\frac{\mu^{2}K_{R}}{2} $
for all left nodes $i,i'$ from the same left cluster and $S_{ii'}<\frac{\mu^2 K_{R}}{2}$
for all left nodes $i,i'$ from two different left clusters, and on this event
Step 2 of the algorithm returns the true left clusters. A similar argument shows that the algorithm also returns the true right clusters with probability at least $1-e (rK_R)^{-1}$.

\paragraph{Associating left and right clusters.}
Recall that $B_{kl}=\sum_{i \in C_k, j \in D_l} A_{ij}$ is the block sum of $A$ with left clusters given by $\{C_k\}_{k=1}^r$ and right clusters given by $\{D_l\}_{l=1}^r$.
By model assumptions, $B_{kl} -\mathbb{E}[B_{kl}]$ is a sum of $K_L K_R$ independent centered sub-Gaussian random variables with parameter $ 1 $.
Moreover, $\mathbb{E}[B_{kl}]=\mu K_L K_R$ if $k=l$ and  $\mathbb{E}[B_{kl}]=0$ otherwise.
By the standard sub-Gaussian concentration inequality given in Proposition 5.10 in~\cite{vershynin2010nonasym}, there exists some universal constant $c_5>$ such that
\begin{align*}
\mathbb{P}\{ | B_{kl}-\mathbb{E}[B_{kl} ] |\ge \mu K_LK_R/2 \}\le e \exp\left(  -\frac{c_5 \mu^{2} K_L^2 K_R^2 }{K_LK_R} \right) \le e n^{-3},
\end{align*}
where the last inequality holds because $\mu^2K_L K_R \ge c_1 \log n$ in view of~\eqref{eq:ConditionSimple1_bi}.
By the union bound, with probability at least $1-en^{-1}$, $B_{kl} <\mu K_LK_R/2$
for all $k=l$ and $B_{kl} > \mu K_LK_R/2$ for all $k \neq l$. On this event, Step 3 of the algorithms correctly associate left and right clusters.

\subsection{Proof of Theorem~\ref{thm:SimpleConverse_bi}\label{sec:proofSimpleConverse_bi}}
We focus on identifying left isolated nodes and left clusters. The proof for the right nodes is identical. We will show that some of the $ d_i $ and $ S_{ii'} $'s will have large deviation from their expectation.

\paragraph{Identifying isolated nodes.}
Assume  $rK_L \ge n_L/2$ first.  We will show that if $K_R^2\mu^2 \le c_1 n_R  \log n_L $ for a sufficiently small universal constant $c_1$,
then with high probability there exists a non-isolated left node $i^\ast$  that is incorrectly declared as isolated.
Recall that $d_{i}=\sum_{j=1}^{n_R}A_{ij}$ is the row sum corresponding to the left node $i$. If the left node $i$ is non-isolated, then $d_i$ is Gaussian
with mean $K_R \mu$ and variance $n_R$. For a standard Gaussian random variable $ Z $, its tail probability is lower bounded as $Q(t) := \mathbb{P}\left[Z \ge t\right] \ge \frac{1}{\sqrt{2\pi} } \frac{t}{t^2+1} \exp(-t^2/2 )$. It follows that for a non-isolated left node $i$, there exists two positive universal constants $c_3,c_4$ such that
\begin{align*}
\mathbb{P} \left[d_i-\mathbb{E}[d_i] \le K_R\mu/2 \right] \ge c_3 \exp \left(  -\frac{c_4 K_R^2\mu^2 }{n_R} \right) \ge c_3 n_L^{-c_1c_4}.
\end{align*}
Let $i^\ast$ be the non-isolated left node with the minimum $d_i$. Since $\{d_i\}_{i=1}^{n_L}$ are mutually independent,
\begin{align*}
\mathbb{P} \left[d_{i^\ast}> \frac{ K_R \mu }{2} \right]  \le \left(1-c_3 n_L^{-c_1 c_4 }\right)^{rK_L} \le \exp\left(- \frac{1}{2} c_3 n_L^{1-c_1c_4 }\right),
\end{align*}
where the last inequality holds because $rK_L \ge n_L/2$. By choosing $c_1$ sufficiently small,  with high probability the non-isolated left node $i^\ast$ will be incorrectly declared as an isolated node.

If $rK_L \le n_L/2$, then we can similarly show that if $K_R^2\mu^2 \le c_1 n_R  \log n_L$ for a small $c_1$,
then with high probability there exists a isolated left node $i^{**}$ incorrectly declared as non-isolated.

\paragraph{Recovering clusters.}
We will show that if
\begin{align}
K_R^2 \mu^4 \le c_2 n_R \log (rK_L), \label{eq:conditionconversesimple1}
\end{align}
for a sufficiently small constant  $c_2$, then there exist two left nodes $i_1,i_2 $ in two different clusters
which will be incorrectly assigned to the same cluster. Since $K_L,K_R \ge \log n$, it follows from~\eqref{eq:conditionconversesimple1} that $K_R \mu^2 \le c_2 n_R$ and $K_R \mu^3 \le c_2^{3/4} n_R$.

Recall that $S_{i i'}=\sum_{j=1}^{n_{R} }A_{i j }A_{i' j}$. For two left nodes $i,i'$ from two different clusters, we have
\begin{align*}
\mathbb{E}[S_{ii'}]=0, \quad \text{Var}[S_{ii'}] =2K_R\mu^2+n_R \le (2c_2+1)n_R, \\
\sum_{j=1}^{n_R} \mathbb{E}[|A_{i j }A_{i' j}|^3] \le c_5(K_R\mu^3 + n_R) \le c_5(c_2^{3/4}+1) n_R,
\end{align*}
where $c_5$ is some universal positive constant.
By the Berry-Esseen theorem, there exists a positive universal constant $c_6$ such that
\begin{align*}
\mathbb{P}\left[ S_{ii'} \ge  \frac{\mu^2K_R}{2}  \right]
&\ge Q\left( \frac{\mu^2 K_R}{2 \sqrt{2K_R \mu^2 +n_R} } \right)- \frac{c_6(K_R\mu^3 + n_R)}{ ( 2K_R\mu^2+n_R)^{3/2}} \\
& \overset{(a)}{\ge} Q\left( \frac{\mu^2 K_R}{ \sqrt{n_R}}  \right)-\frac{c_6 c_5(c_2^{3/4}+1) }{\sqrt{n_R}}  \\
& \overset{(b)}{\ge} Q\left( \sqrt{c_2 \log (rK_L) }  \right) -  \frac{c_6 c_5(c_2^{3/4}+1) }{\sqrt{rK_L}} \\
& \overset{(c)}{\ge} c_3 (rK_L)^{-c_4 c_2}- c_6 c_5(c_2^{3/4}+1) (rK_L)^{-1/2} \overset{(d)}{\ge} c_7 (rK_L)^{-c_4 c_2},
\end{align*}
where $(a)$ holds because $Q(t)$ is non-increasing in $t$, $(b)$ holds in view of~\eqref{eq:conditionconversesimple1} and the assumption that $n_R \ge r K_L$,
$(c)$ follows because $Q(t) \ge c_3 \exp(-c_4t^2)$, and $(d)$ holds for some universal constant $c_7>0$ by choosing $c_2$ sufficiently small.

Define $(i_1, i_2):=\arg \max_{(i,i') \in W} S_{ii'}$, where $W$ is the maximal set of node pairs $(i,i')$ such that $ (i) $ $i,i'$ are from two different clusters, and $ (ii) $ for any $(i,i'),(j,j') \in W$, $i,i',j,j'$ are all distinct. Then $|W|\ge rK_L/4$ and
$\{S_{ii'}: (i,i')\in W \}$ are mutually independent. It follows that
\begin{align*}
\mathbb{P} \left[ S_{i_1i_2} < \frac{\mu^2K_R}{2}   \right]
\le \left(1-c_7(rK_L)^{-c_4c_2} \right)^{rK_L/4}
\le \exp\left(- \frac{1}{4} c_7 (r K_L)^{1-c_4c_2} \right).
\end{align*}
Therefore, with probability at least $1- \exp\left(- \frac{1}{4 } c_7 (r K_L)^{1-c_4c_2} \right)$, we have $S_{i_1 i_2} \ge  \frac{\mu^2 K_R}{2} $. On this event, $(i_1,i_2)$ will be incorrectly assigned to the same cluster.

\subsection{Proof of Theorem~\ref{thm:element_bi}}
We prove the first part of the theorem. Since $A_{ij}$ are sub-Gaussian, there exists a universal constant $c_{1}>0$ such that
$\mathbb{P}\left(\left|A_{ij}-\mathbb{E}A_{ij}\right|\le\frac{1}{2}\sqrt{c_{1} \log n}\right)\ge1-n^{-12}$
for each $(i,j)$. Recall that $\R=\textrm{support}(Y^{*})$. By the union bound over all $(i,j)$, we obtain
that with probability at least $1-n^{-3}$,
\begin{align*}
\min_{i,j\in\R } A_{ij} >\mu-\frac{1}{2}\sqrt{ c_{1}\log n} \overset{(a)}{>} \frac{1}{2} \mu,\qquad
\max_{i,j\in\R^{c}}A_{ij}  < \frac{1}{2}\sqrt{c_{1} \log n} \overset{(b)}{<} \frac{1}{2} \mu,
\end{align*}
where $(a)$ and $(b)$ holds in view of the assumption~\eqref{eq:element_cond}.
Therefore, the algorithm sets $\hat{Y}_{ij}=1$ for $(i,j)\in\R$ and $\hat{Y}_{ij}=0$ for $(i,j)\in\R^{c}$, which implies $\hat{Y}=Y^{*}.$

For  the second part of the theorem, note that $\left\{ A_{ij}\right\} $ are Gaussian variables obeying the tail bound
\begin{align*}
\mathbb{P}\left(A_{ij}\ge\mathbb{E}A_{ij}+\sqrt{\log n}\right) & \ge\frac{1}{\sqrt{2\pi n\log n}}.
\end{align*}
By the independency of $\left\{ A_{ij}\right\} $, we obtain
\[
\mathbb{P}\left(\max_{i,j\in\R^{c}}A_{ij} < \sqrt{\log n}\right) \le \left(1-\frac{1}{\sqrt{2\pi n\log n}}\right)^{|\mathcal{R}^c|} \le \exp \left( - \frac{1}{2 \sqrt{ 2\pi} } \sqrt{\frac{n}{\log n}} \right),
\]
where the last inequality holds because $|\mathcal{R}^c| \ge \frac{1}{2} n_L n_R \ge \frac{1}{2} n$.
In view of the assumption~\eqref{eq:element_fail_cond}, we conclude that
$
\max_{i,j\in\R^{c}}A_{ij}\ge \sqrt{\log n}\ge\frac{1}{2}\mu
$ with probability at least $1- \exp \left( - \frac{1}{2 \sqrt{ 2\pi} } \sqrt{\frac{n}{\log n}} \right)$. On this event the algorithm will incorrectly set $\hat{Y}_{ij}=1$ for some $(i,j)\in\R^{c}$.

\section{Acknowledgement}
The authors would like to thank Sivaraman Balakrishnan, Bruce Hajek and Martin J. Wainwright for inspiring discussions.
J. Xu acknowledges the support of the National Science Foundation under Grant ECCS
10-28464.


\appendix

\section{Proof of Lemmas \ref{lem:countY} and \ref{lem:countY_bi}}

Notice that Lemma \ref{lem:countY} is a special case of Lemma \ref{lem:countY_bi} with $n_L=n_R$, $K_L=K_R$ and the left clusters identical to the right clusters. Hence we only need to prove Lemma \ref{lem:countY_bi}.

Recall that $C_{1}^{\ast},\ldots,C_{r}^{\ast}$ ($D_{1}^{*},\ldots,D_{r}^{*}$, resp.) denote the true left (right, resp.) clusters associated with $Y^{*}$. The nodes in $V_L \setminus \left( \cup_{k=1}^r C_k^\ast \right)$ do not belong to any left clusters and are called isolated left nodes. Isolated right nodes are similarly defined.

Fix a $Y\in\mathcal{Y}$ with $d(Y):=\langle Y^\ast, Y-Y^\ast \rangle = t$. Based on $ Y $, we construct a new ordered partition  $(C_{1},\ldots,C_{r+1})$
of $V_{L}$ and a new ordered partition $(D_{1},\ldots,D_{r+1})$ of $V_{R}$ as follows.
\begin{enumerate}
\item Let $C_{r+1}:=\{i:Y_{ij}=0,\forall j\}$ and $D_{r+1}:=\{j:Y_{ij}=0,\forall i\}$.
\item The left nodes in $V_{L}\setminus C_{r+1}$ are further partitioned
into $r$ new left clusters of size $K_L$, such that left nodes $i$ and $i'$ are
in the same cluster if and only if the $i$-th and $ i' $-th rows of $Y$ are identical.
Similarly, the right nodes in $V_{R}\setminus D_{r+1}$ are partitioned into
$r$ new right clusters of size $K_R$ according to the columns of $ Y $.
We now define an ordering $C_{1},\ldots,C_{r}$ of these $r$ new left clusters and an ordering $D_1, \ldots, D_r$ for the right clusters using the following procedure.
\begin{enumerate}
\item For each new left cluster $C$, if there exists a $k\in[r]$ such that
$|C\cap C_{k}^{\ast}|>K_L/2$, then we label this new left cluster as $C_{k}$;
this label is unique because the left cluster size is $K_L$. The corresponding right cluster $\{j: Y_{ij}=1, \forall i \in C_k \}$ is labeled as as $D_{k}$.
\item For each remaining unlabeled right cluster $D$, if there exists a $k \in [r]$ such that
$| D \cap D_{k}^\ast |>K_R/2$, then we label this new right cluster as $D_{k}$; again this label is unique. We label the corresponding left cluster $\{i:Y_{ij}=1, \forall j \in D_k \}$ as $C_k$.
\item The remaining unlabeled left clusters are labeled arbitrarily. For each remaining unlabeled right cluster, we label it according to $D_k:=\{j: Y_{ij}=1, \forall i \in C_k \}$.
\end{enumerate}
\end{enumerate}
For each $ (k,k')\in [r]\times[r+1] $, we use  $ \alpha_{kk'}:=|C^*_{k}\cap C_{k'}| $ and $ \beta_{kk'}:=|D^*_{k}\cap D_{k'}| $ to denote the sizes of intersections of the true and new clusters. We observe that the new clusters $(C_{1},\ldots,C_{r+1},D_{1},\ldots,D_{r+1})$ have the following three properties:
\begin{itemize}
\item[(A0)] $(C_{1},\ldots,C_{r},C_{r+1})$ is a partition of $V_{L}$ with
$|C_{k}|=K_L$ for all $k\in[r]$; $(D_{1},\ldots,D_{r},D_{r+1})$ is a partition of $V_{R}$ with $|D_{k}|=K_R$ for all $k\in[r]$.
\item[(A1)] For each $k\in[r]$, exactly one of the following is true: (1)  $\alpha_{kk}>K_L/2$;
(2) $\alpha_{kk'}\le K_L/2$ for all $k^{\prime}\in[r]$ and $\beta_{kk} >K_R/2$; (3) $\alpha_{kk'}\le K_L/2$  and $\beta_{kk'}\le K_R/2$ for all $k^{\prime}\in[r]$.
\item[(A2)] We have
\begin{align*}
\sum_{k=1}^{r}\left( \alpha_{k(r+1)}\beta_{k(r+1)}+\sum_{k^{\prime},k^{\prime\prime}:
k^{\prime}\neq k^{\prime\prime
}
}\alpha_{kk'}\beta_{kk''}\right)=t;
\end{align*}
here and henceforth, all the summations involving $ k' $ or $ k'' $ (as the indices of the new clusters) are over the range $ [r+1] $ unless defined otherwise.

\end{itemize}
Here, Property (A0) holds due to $Y \in \mathcal{Y}$; Property (A1) is direct consequence of how we label
the new clusters, and Property (A2) follows from the following:
\begin{align*}
t=d(Y)
= & \sum_{k=1}^{r}|\{(i,j):(i,j)\in C_{k}^{*}\times D_{k}^{*},Y_{ij}=0\}|\\
= & \sum_{k=1}^{r}|\{(i,j):(i,j)\in C_{k}^{*}\times D_{k}^{*},(i,j)\in C_{r+1}\times D_{r+1}\}|\\
 & +\sum_{k=1}^{r}\sum_{(k',k''):
k'\neq k''}|\{(i,j):(i,j)\in C_{k}^{*}\times D_{k}^{*},(i,j)\in C_{k^{\prime}}\times D_{k^{\prime\prime}}\}|.
\end{align*}
Since a different $Y$ corresponds to a different ordered partition, and the ordered partition
for any given $Y$ with $d(Y)=t$ must satisfy the above three properties, we obtain the following bound on the cardinality of the set of interest:
\begin{align}
|\{Y\in\mathcal{Y}:d(Y)=t\}|\le\vert\{(C_{1},\ldots,C_{r+1},D_{1},\ldots,D_{r+1}):\text{it satisfies (A0)--(A2)}\}\vert.\label{EqCountingBound2}
\end{align}
It remains to upper-bound the right hand side of (\ref{EqCountingBound2}).

Fix any ordered partition $(C_{1},\ldots,C_{r},C_{r+1},D_{1},\ldots,D_{r},D_{r+1})$
with properties (A0)--(A2). Consider the first true left cluster $C_{1}^{*}$.
Define $m_{1}^{(L)}:=\sum_{k':k^{\prime}\neq1}\alpha_{1k'}$,
which can be considered as the number of nodes in $C_{1}^{*}$ that are misclassified by $ Y $. Analogously define $m_{1}^{(R)}: =\sum_{k'':
k''\neq1} \beta_{1k''} $. We consider the following two cases for the values of $\alpha_{11}  $.
\begin{itemize}
\item If $\alpha_{11}>K_{L}/4$, then
\[
\sum_{(k',k''):
k'\neq k''}\alpha_{1k'}\beta_{1k''}
\ge \alpha_{11}\sum_{k'':k''\neq1} \beta_{1k''}>\frac{1}{4}m_{1}^{(R)}K_{L}.
\]
\item If $\alpha_{11}\le K_{L}/4$, then  $m_{1}^{(L)}\ge3K_{L}/4$, and we must also have
$\alpha_{1k'}\le K_{L}/2$ for all
$1\le k'\le r$ by Property (A1). Hence,
\begin{align*}
 & \sum_{(k',k''):k'\neq k''
}\alpha_{1k'}\beta_{1k''}+\alpha_{1(r+1)}\beta_{1(r+1)}\\
\ge&   \sum_{(k',k''):k'\neq k''}\mathbf{1}\left\{ k^{\prime}\neq1\right\} \mathbf{1}\left\{ k^{\prime\prime}\neq1\right\}\alpha_{1k'}\beta_{1k''}+\alpha_{1(r+1)}\beta_{1(r+1)}\\
=&  m_{1}^{(L)}m_{1}^{(R)}-\sum_{2\le k^{\prime}\le r}\alpha_{1k'}\beta_{1k'}
\ge   m_{1}^{(L)}m_{1}^{(R)}-\frac{1}{2} K_{L} m_{1}^{(R)}
\ge \frac{1}{4}m_{1}^{(R)}K_{L}.
\end{align*}
Similarly, we consider the following three cases for the values of $  \beta_{11}$.
\item If $\beta_{11}>K_{R}/4$, then
\[
\sum_{(k^{\prime},k^{\prime\prime}):
k^{\prime}\neq k^{\prime\prime}}\alpha_{1k'}\beta_{1k''}\ge \beta_{11}\sum_{k':k'\neq1}\alpha_{1k'}>\frac{1}{4}m_{1}^{(L)}K_{R}.
\]

\item If $\beta_{11}\le K_{R}/4$ and $\beta_{1k''}\le K_{L}/2$ for all
$1< k''\le r$, then similarly to the second case for $ \alpha_{11} $ above, we have
\[
\sum_{(k',k''):k'\neq k''}\alpha_{1k'}\beta_{1k''}+\alpha_{1(r+1)}\beta_{1(r+1)}
\ge\frac{1}{4}m_{1}^{(L)}K_{R}.
\]
\item If $\beta_{11}\le K_{R}/4$ and $\beta_{1k_0}> K_{R}/2$ for some $1<k_0 \le r$, then by Property (A1)
we must have $\alpha_{11} > K_L/2$. It follows that $m_1^{(L)} < K_L/2$ and
\[
\sum_{(k',k''):k'\neq k''}
\alpha_{1k'}\beta_{1k''}\ge \alpha_{11}\beta_{1k_0} > K_{L} K_R /4  \ge \frac{1}{2} m_1^{(L)} K_R.
\]
\end{itemize}
Combining the above five cases, we conclude that we always have
\[
\sum_{(k',k''):k'\neq k''}\alpha_{1k'}\beta_{1k''}+\alpha_{1(r+1)}\beta_{1(r+1)}\ge\frac{1}{4}\left(m_{1}^{(L)}K_{R}\vee m_{1}^{(R)}K_{L}\right).
\]
This inequality continue to hold if we replace $\alpha_{1k'}$, $\beta_{1k''}$, $ m_{1}^{(L)} $ and $m_{1}^{(R)}$ respectively by $\alpha_{kk'}$, $\beta_{kk''}$,   $m_{k}^{(L)}$ and
 $m_{k}^{(R)}$ (defined in a similar manner) for each $k\in[r]$.
Summing these inequalities over $ k\in[r] $ and using Property (A2), we obtain
\begin{align*}
t = \sum_{k=1}^{r}\left \{ \alpha_{k(r+1)}\beta_{k(r+1)}+\sum_{(k',k''):k'\neq k''}\alpha_{kk'}\beta_{kk''}\right \}
\ge \left(\frac{K_{L}}{4}\sum_{k=1}^{r}m_{k}^{(R)}\right)\vee\left(\frac{K_{R}}{4}\sum_{k=1}^{r}m_{k}^{(L)}\right).
\end{align*}
In other words, we have $\sum_{k\in[r]}m_{k}^{(L)}\le4t/K_{R}$
and $\sum_{k\in[r]}m_{k}^{(R)}\le4t/K_{L}$, i.e., the total number
of misclassified non-isolated left (right, resp.) nodes is upper
bounded by $4t/K_{R}$ ($4t/K_{L}$, resp.). This means that the total
number of misclassified isolated left (right, resp.) nodes is
also upper bounded by $4t/K_{R}$ ($4t/K_{L}$, resp.), because by the cluster size constraint in Property (A0), one misclassified isolated  node must produce
one misclassified non-isolated  node.

We can now upper-bound the right hand side of~\eqref{EqCountingBound2} using the above relation between the value of $ t $ and the misclassified nodes. For a $ Y $ with $ d(Y)=t $, the pair of numbers of misclassified left nodes (isolated and non-isolated) can take at most $\left(4t/K_{R}\right)^2$  different values; similarly for the right nodes with the bound $\left(4t/K_{L} \right)^2$. Given these numbers of misclassified nodes, there are at most $n_L^{8t/K_R}n_R^{8t/K_L}$ different ways to choose the identity of these misclassified nodes.
Each misclassified non-isolated left  node can then be assigned to one of $r-1 \le n_L$ different left  clusters
or leave isolated, and each misclassified isolated left node can be
assigned to one of $r \le n_L$ different left clusters; an analogous statement holds for the right nodes.
Hence, the right hand side of (\ref{EqCountingBound2}) is upper bounded by $\left(\frac{16t^2}{K_LK_R}\right)^2 n_{L}^{16t/K_{R}}n_{R}^{16t/K_{L} }$. This proves the first part of the lemma.

To count the number of possible equivalence classes $[Y]$, we use a similar argument but only need to consider the misclassified \emph{non-isolated} nodes. The number of misclassified  non-isolated left (right, resp.) nodes can take at most $4t/K_{R}$ ($4t/K_{L}$, resp.) different values. Given these numbers, there are at most $(rK_L)^{4t/K_R}(rK_R)^{4t/K_L}$ different ways to choose the identity of the misclassified non-isolated nodes. Each misclassified non-isolated left (right, resp.) node then can be assigned to one of $r-1 $ different left (right, resp.) clusters or leave isolated.
Therefore, the number of possible equivalence classes $[Y]$ with $ d(Y)=t $ is upper bounded by $\frac{16 t^2}{K_L K_R} (r K_L)^{8t/K_{R} } (r K_R)^{8 t/K_{L}}$.

\section{Proof of Lemma \ref{lmm:kldivergence}}
Notice that the inequality~\eqref{eq:bounddivergence2} follows from \eqref{eq:bounddivergence1} by replacing $p=1-q'$ and $q=1-p'$,
so it suffices to prove~\eqref{eq:bounddivergence1}. If $u \ge v$, then
\begin{align}
D\left(u \Vert v  \right) & =  u \log \frac{u}{v} + (1-u) \log \frac{1-u}{1-v} \le  u \log \frac{u}{v} \label{eq:divergenceupbound} \\
 D\left(u \Vert v  \right) & \ge u \log \frac{u}{v} + (1-u) \log (1-u) \overset{(a)}{\ge} u \log \frac{u}{e v}, \label{eq:divergencelowbound}
\end{align}
where $(a)$ follows from the inequality $x \log x \ge x-1, \forall x \in [0,1]$.
We divide the analysis into two cases:
\begin{itemize}
\item Case 1: $p \le 8 q$. In view of \eqref{eq:boundDivergence} and \eqref{eq:lowerboundDivergence},
$ D\left(p \Vert q \right) \le \frac{(p-q)^2}{q(1-q)}$ and $  D \left(\frac{p+q}{2} \Vert q \right) \ge \frac{(p-q)^2}{4(p+q) (1-q)}$.
Since $p \le 8q$, it follows that $D \left(\frac{p+q}{2} \Vert q \right) \ge \frac{(p-q)^2}{36q (1-q)} \ge \frac{1}{36}D\left(p \Vert q \right)$.

\item Case 2: $p > 8q$. In view of \eqref{eq:divergenceupbound} and \eqref{eq:divergencelowbound}, $ D\left(p \Vert q \right) \le p \log \frac{p}{q}$
and $  D \left(\frac{p+q}{2} \Vert q \right) \ge \frac{p+q}{2} \log \frac{p+q}{2 e q}$. Since $p >8q$ and $8 >e^2$,
it follows that $\log \frac{p}{q} > \frac{6}{5} \log (2 e)$ and thus $  D \left(\frac{p+q}{2} \Vert q \right) \ge \frac{p}{12} \log \frac{p}{q} \ge \frac{1}{12} D\left(p \Vert q \right).$
\end{itemize}

\section{The Bernstein Inequality}
\begin{theorem}[Bernstein] \label{thm:Bernstein}
Let $X_1, \ldots, X_N$ be independent random variables such that $| X_i | \le M$ almost surely. Let  $\sigma^2= \sum_{i=1}^N \text{Var}(X_i)$, then for any $ t\ge 0 $,
\begin{align}
\mathbb{P} \left[ \sum_{i=1}^N X_i \ge t \right] \le \exp \left(  \frac{-t^2}{ 2 \sigma^2 + \frac{2}{3} M t }\right). \nonumber
\end{align}
A consequent of the above inequality is
$
\mathbb{P} \left[ \sum_{i=1}^N X_i \ge \sqrt{2 \sigma^2 u} + \frac{2M u }{3}  \right]\le e^{-u} \nonumber
$
for any $ u>0 $.
\end{theorem}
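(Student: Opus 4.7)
The plan is to follow the standard Chernoff/moment-generating-function route. First I would reduce to the centered case: replacing $X_i$ by $X_i-\mathbb{E}[X_i]$ preserves independence, keeps $|X_i-\mathbb{E}[X_i]|\le 2M$ (an absolute constant factor that can be absorbed, or handled by the sharper bound $|X_i-\mathbb{E}[X_i]|\le M$ when $X_i\ge 0$ or by an argument centered around the median), and leaves $\sigma^2$ unchanged. So without loss of generality assume $\mathbb{E}[X_i]=0$ for all $i$.

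Next I would establish the key per-variable MGF bound: for a centered random variable $X$ with $|X|\le M$ and $\operatorname{Var}(X)=v$, and any $\lambda\in[0,3/M)$,
\[
\mathbb{E}[e^{\lambda X}] \le \exp\!\left(\frac{\lambda^2 v/2}{1-M\lambda/3}\right).
\]
The proof is a Taylor expansion: $\mathbb{E}[e^{\lambda X}] = 1+\sum_{k\ge 2}\frac{\lambda^k \mathbb{E}[X^k]}{k!}$, use $|\mathbb{E}[X^k]|\le M^{k-2}v$ for $k\ge 2$, compare the resulting series to $\frac{\lambda^2 v}{2}\sum_{k\ge 0}(M\lambda/3)^k=\frac{\lambda^2 v/2}{1-M\lambda/3}$ via $k!\ge 2\cdot 3^{k-2}$, and finish with $1+x\le e^x$. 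By independence, the MGF of $S:=\sum_i X_i$ satisfies the same bound with $v$ replaced by $\sigma^2$.

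Then I would apply the Chernoff inequality $\mathbb{P}[S\ge t]\le e^{-\lambda t}\mathbb{E}[e^{\lambda S}]$ and minimize the exponent
\[
\frac{\lambda^2\sigma^2/2}{1-M\lambda/3}-\lambda t
\]
over $\lambda\in(0,3/M)$. A direct calculation shows the optimum is at $\lambda^\star=\frac{t}{\sigma^2+Mt/3}$ (which lies in the admissible range), and plugging back yields exactly $\exp\!\big(-\tfrac{t^2}{2\sigma^2+\tfrac{2}{3}Mt}\big)$, the desired tail bound.

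Finally, to deduce the corollary I would set $t=\sqrt{2\sigma^2 u}+\tfrac{2Mu}{3}$ and verify algebraically that $\tfrac{t^2}{2\sigma^2+\tfrac{2}{3}Mt}\ge u$. This reduces to checking $t^2\ge 2\sigma^2 u+\tfrac{2}{3}Mtu$, i.e.\ $t(t-\tfrac{2}{3}Mu)\ge 2\sigma^2 u$; since $t-\tfrac{2}{3}Mu=\sqrt{2\sigma^2 u}$ by construction, the left side equals $t\sqrt{2\sigma^2 u}\ge \sqrt{2\sigma^2 u}\cdot\sqrt{2\sigma^2 u}=2\sigma^2 u$, as required. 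This step is routine. The only mildly delicate point in the whole argument is the MGF bound, where one must be careful about the truncation $M\lambda<3$ and the factorial estimate; all other steps are mechanical.
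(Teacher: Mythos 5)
Your proof is correct, and there is nothing in the paper to compare it against: the paper states this Bernstein inequality as a standard tool in Appendix~C without proof. Your argument is the classical moment-generating-function derivation --- the per-variable bound $\mathbb{E}[e^{\lambda X}]\le\exp\bigl(\tfrac{\lambda^2 v/2}{1-M\lambda/3}\bigr)$ via $|\mathbb{E}[X^k]|\le M^{k-2}v$ and $k!\ge 2\cdot 3^{k-2}$, followed by Chernoff with $\lambda^\star=t/(\sigma^2+Mt/3)$ --- and all the computations check out, including the algebra for the corollary. One remark worth making explicit: as printed, the theorem omits the hypothesis $\mathbb{E}[X_i]=0$, without which the stated bound is false (take $X_i\equiv M$). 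You correctly identify that the centering must be implicit, but your suggested fallback of ``absorbing an absolute constant factor'' from $|X_i-\mathbb{E}[X_i]|\le 2M$ does not apply to an inequality with explicit constants $2\sigma^2+\tfrac{2}{3}Mt$; the clean reading is simply that the theorem is intended for centered variables (which is how the paper uses it throughout, e.g.\ for centered Bernoullis), and your main argument proves exactly that statement.
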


\bibliographystyle{abbrv}
\bibliography{PlantedXXX}

\end{document}